\theoremstyle{plain}
\newtheorem{theorem}{Theorem}[section]
\newtheorem{lemma}[theorem]{Lemma}
\newtheorem{corollary}[theorem]{Corollary}
\newtheorem{assumption}{Assumption}
\theoremstyle{remark}
\newtheorem{definition}{Definition}
\newtheorem{example}{Example}
\newtheorem*{remark}{Remark}
\def\vn{{\mathbf{n}}}
\def\calA{{\mathcal{A}}}
\def\calF{{\mathcal{F}}}
\def\calI{{\mathcal{I}}}
\def\calN{{\mathcal{N}}}
\def\calO{{\mathcal{O}}}
\def\bbE{{\mathbb{E}}}
\def\bbN{{\mathbb{N}}}
\def\bbR{{\mathbb{R}}}
\def\scrB{{\mathscr{B}}}
\def\euD{{\EuScript{D}}}
\def\euS{{\EuScript{S}}}
\DeclareMathOperator*{\argmin}{arg\,min}
\DeclareMathOperator{\sign}{sign}
\DeclareMathOperator{\pr}{Pr}
\DeclareMathOperator{\vcdim}{VCdim}
\DeclareMathOperator{\app}{app}
\DeclareMathOperator{\gen}{gen}
\DeclareMathOperator{\reg}{reg}
\DeclareMathOperator{\conv}{conv}
\DeclareMathOperator{\supp}{supp}
\newcommand{\what}{\widehat}
\begin{document}

\title{Semi-Supervised Deep Sobolev Regression: \\ Estimation and Variable Selection by ReQU Neural Network}

\author{
Zhao Ding, Chenguang Duan, Yuling Jiao, and Jerry Zhijian Yang
\thanks{This work is supported by the National Key Research and Development Program of China (No. 2024YFA1014202), by the National Natural Science Foundation of China (No. 12125103, No. U24A2002, No. 12371441), and by the Fundamental Research Funds for the Central Universities. (Corresponding author:
 Jerry Zhijian Yang.)}
\thanks{Zhao Ding is with the School of Mathematics and Statistics, Wuhan University, Wuhan, Hubei 430072, China (email: zd1998@whu.edu.cn).}%
\thanks{Chenguang Duan is with the School of Mathematics and Statistics, Wuhan University, Wuhan, Hubei 430072, China (email: cgduan.math@whu.edu.cn).}
\thanks{Yuling Jiao is with the School of Artificial Intelligence, National Center for Applied Mathematics in Hubei, Hubei Key Laboratory of Computational Science, and School of Mathematics and Statistics, Wuhan University, Wuhan, Hubei 430072, China (email: yulingjiaomath@whu.edu.cn).}
\thanks{Jerry Zhijian Yang is with National Center for Applied Mathematics in Hubei, Wuhan Institute for Math \& Al, School of Mathematics and Statistics, and Hubei Key Laboratory of Computational Science, Wuhan University, Wuhan, Hubei 430072, China (email: zjyang.math@whu.edu.cn).}
}



\maketitle

\begin{abstract}
We propose SDORE, a \underline{S}emi-supervised \underline{D}eep S\underline{O}bolev \underline{RE}gressor, for the nonparametric estimation of the underlying regression function and its gradient. SDORE employs deep ReQU neural networks to minimize the empirical risk with gradient norm regularization, allowing the approximation of the regularization term by unlabeled data. Our study includes a thorough analysis of the convergence rates of SDORE in $L^{2}$-norm, achieving the minimax optimality. Further, we establish a convergence rate for the associated plug-in gradient estimator, even in the presence of significant domain shift. These theoretical findings offer valuable insights for selecting regularization parameters and determining the size of the neural network, while showcasing the provable advantage of leveraging unlabeled data in semi-supervised learning. To the best of our knowledge, SDORE is the first provable neural network-based approach that simultaneously estimates the regression function and its gradient, with diverse applications such as nonparametric variable selection. The effectiveness of SDORE is validated through an extensive range of numerical simulations.
\end{abstract}

\begin{IEEEkeywords}
Nonparametric regression, gradient  estimation, variable selection, convergence rate, gradient penalty, deep neural network
\end{IEEEkeywords}

\section{Introduction}
\IEEEPARstart{N}{onparametric} regression plays a pivotal role in both statistics and machine learning, possessing an illustrious research history as well as a vast compendium of related literature~\cite{gyorfi2002distribution,Wasserman2006all,Tsybakov2009Introduction}. Let $\Omega\subseteq\bbR^{d}$, $d\geq 1$, be a bounded and connected domain with sufficiently smooth boundary $\partial\Omega$. Consider the following nonparametric regression model
\begin{equation}\label{eq:nonparametric:regression}
Y=f_{0}(X)+\xi,
\end{equation}
where $Y\in\bbR$ is the response associated with the covariate $X\in\Omega$, and $f_{0}$ is the unknown regression function. Here $\xi$ represents a random noise term satisfying $\bbE[\xi|X]=0$ and $\bbE[\xi^{2}|X]<\infty$. The primary task of nonparametric regression involves estimating the conditional expectation $f_{0}(x)$ of the response $Y$, given a covariate $X=x$. This estimation is typically achieved through empirical least-squares risk minimization:
\begin{equation*}
\min_{f\in\calF}\frac{1}{n}\sum_{i=1}^{n}(f(X_{i})-Y_{i})^{2},
\end{equation*}
where $\{(X_{i},Y_{i})\}_{i=1}^{n}$ is a set of independently and identically distributed random copies of $(X,Y)$, and $\calF$ is a pre-specific hypothesis class, such as deep ReQU neural network class in this paper. While empirical least-squares risk minimization is straightforward to implement and comes with solid theoretical guarantees, it does not fully meet all desired criteria. One major drawback is that the method places no constraints on the gradient of the estimator, allowing for the possibility of an arbitrarily large gradient norm. This can make the least-squares estimator highly sensitive to the input perturbations. Furthermore, while the least-squares estimator ensures convergence in terms of function values, the convergence in terms of derivatives can not be guaranteed.

\par To address these challenges, Sobolev regularization, also known as gradient penalty, was introduced in deep learning by~\cite{Drucker1991Double,Drucker1992Improving}:
\begin{equation}\label{eq:gp}
\min_{f\in\calF}\frac{1}{n}\sum_{i=1}^{n}(f(X_{i})-Y_{i})^{2}+\frac{\lambda}{n}\sum_{i=1}^{n}\sum_{k=1}^{d}|D_{k}f(X_{i})|^{2},
\end{equation}
where $\lambda>0$ is the regularization parameter, and $D_{k}f$ denotes the partial derivative of $f$ with respect to the $k$-th input variable. Substantial numerical experiments have consistently demonstrated that the imposition of a gradient penalty contributes to the enhancement of the stability and generalization of deep learning models. The strategy surrounding gradient penalty was adopted by~\cite{Rifai2011Contractive} as a technique to learn robust features using auto-encoders. This method was further utilized to augment the stability of deep generative models as highlighted in the work of~\cite{Nagarajan2017Gradient,Roth2017Stabilizing,Gulrajani2017Improved,Gao2022Deep}. Significantly, the gradient norm, being a local measure of sensitivity to input perturbations, has seen a plethora of research focusing on its use for adversarial robust learning. This is reflected in studies conducted by~\cite{Lyu2015Unified,Hein2017Formal,Ororbia2017Unifying,novak2018sensitivity,Jakubovitz2018Improving,Ross2018Improving}.

\IEEEpubidadjcol
\par Simultaneously estimation the regression function and its of gradient (derivatives) carries a wide span of applications across various fields, including the factor demand and cost estimation in economics~\cite{Shephard1981Cost}, trend analysis for time series data~\cite{Rondonotti2007SiZer}, the analysis of human growth data~\cite{Ramsay2002Applied}, and the modeling of spatial process~\cite{Banerjee2003Directional}. Furthermore, estimating gradient  plays a pivotal role in the modeling of functional data~\cite{Muller2010additive,Dai2018Derivative}, variable selection in nonparametric regression~\cite{Rosasco2010regularization,Mosci2012Is,Rosasco2013Nonparametric}, and inverse problems~\cite{Hu2018new}. There are four classical approaches to nonparametric gradient  estimation: local polynomial regression~\cite{Brabanter2013Derivative}, smoothing splines~\cite{Nancy2000Penalized}, kernel ridge regression~\cite{Liu2023Estimation}, and difference quotients~\cite{Muller1987Bandwidth}. However, local polynomial regression and smoothing spline regression are only applicable to fixed-design setting and low-dimensional problems. The generalization of these methodologies to address high-dimensional problems is met with a significant challenge popularly known as the computational curse of dimensionality~\cite{Bellman1961Adaptive,Wasserman2006all,Goodfellow2016Deep}. This phenomenon refers to the fact that the computational complexity can increase exponentially with dimension. In contrast, deep neural network-based methods, which are mesh-free, exhibit direct applicability to high-dimensional problems, providing a solution to mitigate this inherent challenge. The plug-in kernel ridge regression estimators have demonstrated applicability for estimating derivatives across both univariate and multivariate regressions within a random-design setting~\cite{Liu2023Estimation,Liu2023EstimationHypothesis}. However, these estimators present certain inherent limitations compared to deep neural networks. From a computational complexity standpoint, the scale of the kernel grows quadratically or even cubically with the number of samples. In contrast, deep neural networks exhibit the ability to handle larger datasets, especially when deployed on modern hardware architectures.

\par Recently, there has been a substantial literature outlining the convergence rates of deep nonparametric regression~\cite{bauer2019deep,nakada2020Adaptive,schmidt2020nonparametric,kohler2021rate,Farrell2021Deep,Kohler2022Estimation,Jiao2023deep}. However, the theoretical foundation of Sobolev regularized  least-squares using deep neural networks remains relatively underdeveloped. Consequently, two fundamental questions need to be addressed:
\begin{quote}
\emph{\normalsize What accounts for the enhanced stability and superior generalization capacity of the Sobolev penalized estimator compared to the standard least-squares estimator? Furthermore, does the plug-in gradient estimator of the Sobolev penalized regressor close to  the true gradient of the regression function, and if so, what is the corresponding convergence rate?}
\end{quote}

\par In this paper, we introduce SDORE, a \underline{\textbf{S}}emi-supervised \underline{\textbf{D}}eep S\underline{\textbf{O}}bolev \underline{\textbf{RE}}gressor, for simultaneously estimation of both the regression function and its gradient. SDORE leverages deep neural networks to minimize an empirical risk, augmented with unlabeled-data-driven Sobolev regularization:
\begin{equation}\label{eq:semigp}
\min_{f\in\calF}\frac{1}{n}\sum_{i=1}^{n}(f(X_{i})-Y_{i})^{2}+\frac{\lambda}{m}\sum_{i=1}^{m}\sum_{k=1}^{d}|D_{k}f(Z_{i})|^{2},
\end{equation}
where $\{Z_{i}\}_{i=1}^{m}$ is a set of unlabeled data independently and identically drawn from a distribution on $\Omega$. Notably, our methodology does not necessitate alignment of the unlabeled data distribution with the marginal distribution of the labeled data, remaining effective even under significant domain shifts. In the context of semi-supervised learning, data typically consists of a modestly sized labeled dataset supplemented with vast amounts of unlabeled data. As a result, the empirical semi-supervised deep Sobolev regression risk aligns tightly with the following deep Sobolev regression problem:
\begin{equation*}
\min_{f\in\calF}\frac{1}{n}\sum_{i=1}^{n}(f(X_{i})-Y_{i})^{2}+\lambda\|\nabla f\|_{L^{2}(\Omega)}^{2},
\end{equation*}
plays a pivotal role in nonparametric regression and has been investigated by~\cite{Wahba1990Spline,Kohler2001Nonparametric,Kohler2002Application,gyorfi2002distribution}.  We establish non-asymptotic convergence rates for the deep Sobolev regressor and demonstrate that the norm of its gradient is uniformly bounded, shedding light on the considerable stability and favorable generalization properties of the estimator. Furthermore, under certain mild conditions, we derive non-asymptotic convergence rates for the plug-in derivative estimator based on SDORE. This illustrates how abundant unlabeled data used in SDORE~\eqref{eq:semigp} improves the performance of the standard gradient penalized regressor~\eqref{eq:gp}. We subsequently apply SDORE to nonparametric variable selection. The efficacy of this method is substantiated through numerous numerical examples.

\subsection{Contributions}
\par Our contributions can be summarized in four folds:
\begin{enumerate}[(i)]
\item We introduce a novel semi-supervised deep estimator within the framework of Sobolev penalized regression. A large amount of unlabeled data is employed to estimate the Sobolev penalty term. We demonstrate that this deep ReQU neural network-based estimator achieves the minimax optimal rate (Theorem~\ref{theorem:erm:rate:regression}). Meanwhile, with the appropriate selection of the regularization parameter, the norm of the estimator's gradient can be uniformly bounded, thereby illustrating its remarkable stability and generalization capacities from a theoretical standpoint.

\item Under certain mild conditions, we establish an oracle inequality for gradient  estimation using the plug-in deep Sobolev regressor (Lemma~\ref{lemma:oracle:erm}). Notably, this oracle inequality is applicable to any convex hypothesis class. This represents a significant theoretical advancement beyond existing nonparametric plug-in gradient estimators, which are based on linear approximation~\cite{Stone1985Additive,Liu2023Estimation}, by extending the framework to handle more complex hypothesis classes involved in nonlinear approximation~\cite{DeVore1993Constructive}. Furthermore, we derive a convergence rate for the gradient of the deep ReQU neural network-based estimator, providing valuable a priori guidance for selecting regularization parameters and choosing the size of the neural network (Theorem~\ref{theorem:erm:rate}).

\item We derive  a convergence rate for semi-supervised estimator (Theorem~\ref{theorem:erm:rate:data}), which sheds light on the quantifiable advantages of incorporating unlabeled data into the supervised learning. This improvement is actualized under the condition that density ratio between the marginal distribution of the labeled data and the distribution of the unlabeled data remains uniformly bounded. This novel finding promises to enrich our theoretical comprehension of semi-supervised learning, particularly in the context of deep neural networks.

\item The gradient estimator introduces a novel tool with potential applications in areas such as nonparametric variable selection. In the case where the regression function exhibits sparsity structure (Assumption~\ref{assumption:sparse}), we prove that the convergence rate depends only on the number of relevant variables, rather than the data dimension (Corollary~\ref{corollary:variable:selection}). Moreover, we establish the selection consistency of the deep Sobolev regressor (Corollary~\ref{corollary:selection:consistency}), showing that, with a sufficiently large number of labeled data pairs, the estimated relevant set is highly likely to match the ground truth relevant set. To validate our approach, we conduct a series of numerical experiments, which confirm the effectiveness and reliability of our proposed methodology.

\end{enumerate}

\subsection{Main Results Overview}

In this work, we focus on two estimators in the setting of nonparametric regression~\eqref{eq:nonparametric:regression}. The \underline{\bfseries D}eep S\underline{\bfseries O}bolev \underline{\bfseries RE}gressor (DORE) is derived from the regularized empirical risk minimization:
\begin{multline}\tag{DORE}\label{eq:dore}
\what{f}_{\euD}^{\lambda}\in\argmin_{f\in\calF}\what{L}_{\euD}^{\lambda}(f)=\frac{1}{n}\sum_{i=1}^{n}(f(X_{i})-Y_{i})^{2} \\
+\lambda\|\nabla f\|_{L^{2}(\nu_{X})}^{2},
\end{multline}
where $\euD=\{(X_{i},Y_{i})\}_{i=1}^{n}$ is a set of independent copies of $(X,Y)$, $\lambda>0$ is the regularization parameter, and $\calF$ is a class of deep ReQU neural networks. In some application scenarios, the regularization term in~\eqref{eq:dore} is intractable analytically. To address this issue, we approximate the regularization term by its data-driven counterpart, yielding the following semi-supervised empirical risk minimizer
\begin{multline}\tag{SDORE}
\what{f}_{\euD,\euS}^{\lambda}\in\argmin_{f\in\calF}\what{L}_{\euD,\euS}^{\lambda}(f)=\frac{1}{n}\sum_{i=1}^{n}(f(X_{i})-Y_{i})^{2} \\
+\frac{\lambda}{m}\sum_{i=1}^{m}\sum_{k=1}^{d}|D_{k}f(Z_{i})|^{2},
\end{multline}
where $\euS=\{Z_{i}\}_{i=1}^{m}$ is a set of independently and identically random variables drawn from $\nu_{X}$.

\par The main theoretical results derived in this paper are summarized in Table~\ref{tab:rates}. As shown in Theorem~\ref{theorem:erm:rate:regression}, the convergence rate of the deep Sobolev regressor in $L^{2}$-norm achieves the minimax optimality. However, Theorems~\ref{theorem:erm:rate:data} and~\ref{theorem:erm:rate} demonstrate that the convergence rates in $L^{2}$-norm and $H^{1}$-semi-norm is sub-optimal.

\begin{table*}[!t]
\centering
\scriptsize
\caption{Convergence Rates for Sobolev Penalized Estimators}
\label{tab:rates}
\resizebox*{0.95\linewidth}{!}{
\begin{tabular}{llllcl}
\toprule
Estimator & Reg. Param. & & Convergence Rates & Minimax Opt. &  \\
\midrule
\textbf{DORE} & \multirow{2}*{$\lambda=\calO(n^{-\frac{2s}{d+2s}}\log^{3}n)$} & $\bbE\|\hat{f}_{\euD}^{\lambda}-f_{0}\|^{2}$ & $\calO(n^{-\frac{2s}{d+2s}}\log^{3}n)$ & \ding{52} &\multirow{2}*{Theorem~\ref{theorem:erm:rate:regression}} \\
Def.~\eqref{eq:empirical:risk:regularization} & & $\bbE\|\nabla\hat{f}_{\euD}^{\lambda}\|^{2}$ & $\calO(1)$ &  & \\
\midrule
\textbf{DORE} & \multirow{2}*{$\lambda=\calO(n^{-\frac{s}{d+4s}}\log^{2}n)$} & $\bbE\|\hat{f}_{\euD}^{\lambda}-f_{0}\|^{2}$ & $\calO(n^{-\frac{2s}{d+4s}}\log^{4}n)$ & \ding{56} & \multirow{2}*{Theorem~\ref{theorem:erm:rate}} \\
Def.~\eqref{eq:empirical:risk:convex} & & $\bbE\|\nabla(\hat{f}_{\euD}^{\lambda}-f_{0})\|^{2}$ & $\calO(n^{-\frac{s}{d+4s}}\log^{2}n)$ & \ding{56} & \\
\midrule
\textbf{SDORE} & \multirow{2}*{$\lambda=\calO(n^{-\frac{s}{d+4s}}\log^{2}n)$} & $\bbE\|\hat{f}_{\euD,\euS}^{\lambda}-f_{0}\|^{2}$ & $\calO(n^{-\frac{2s}{d+4s}}\log^{4}n)+\calO(n^{\frac{d}{d+4s}}\log^{4}nm^{-1})$ & \ding{56} & \multirow{2}*{Theorem~\ref{theorem:erm:rate:data}} \\
Def.~\eqref{eq:empirical:risk:convex} &  &  $\bbE\|\nabla(\hat{f}_{\euD,\euS}^{\lambda}-f_{0})\|^{2}$ & $\calO(n^{-\frac{s}{d+4s}}\log^{2}n)+\calO(n^{\frac{d+s}{d+4s}}\log^{2}nm^{-1})$ & \ding{56} &  \\
\bottomrule
\end{tabular}
}
\end{table*}

\par We utilize the deep Sobolev regressor to tackle an application scenarios: nonparametric variable selection. We present the theoretical findings related to nonparametric variable selection in Table~\ref{tab:app}, including the convergence rate and selection consistency. 

\begin{table*}[!t]
\centering
\scriptsize
\caption{Theoretical Results for Applications}
\label{tab:app}
\resizebox{0.9\linewidth}{!}{
\begin{tabular}{llll}
\toprule
\multicolumn{4}{c}{\textbf{Nonparametric Variable Selection}} \\
\midrule
Reg. Param. &  & Convergence Rates &  \\
\midrule
\multirow{2}*{$\lambda=\calO(n^{-\frac{s}{d^{*}+4s}}\log^{2}n)$} & $\bbE\|\hat{f}_{\euD}^{\lambda}-f_{0}\|^{2}$ & $\calO(n^{-\frac{2s}{d^{*}+4s}}\log^{4}n)$ & \multirow{2}*{Corollary~\ref{corollary:variable:selection}} \\
& $\bbE\|\nabla(\hat{f}_{\euD}^{\lambda}-f_{0})\|^{2}$ & $\calO(n^{-\frac{s}{d^{*}+4s}}\log^{2}n)$ & \\
\midrule
\multirow{2}*{Selection consistency} & Rel. Set $\calI(f_{0})$ & \multirow{2}*{$\displaystyle \lim_{n\rightarrow\infty}\pr\{\calI(f_{0})=\calI(\hat{f}_{\euD}^{\lambda})\}=1$} & \multirow{2}*{Corollary~\ref{corollary:selection:consistency}} \\
& Esti. Rel. Set $\calI(\hat{f}_{\euD}^{\lambda})$ & & \\
\bottomrule
\end{tabular}
}
\end{table*}

\subsection{Preliminaries and notations}

\par Before proceeding, we introduce some notation and definitions. Let $\Omega\subseteq\bbR^{d}$ be a bounded domain, and let $\mu_{X}$ and $\nu_{X}$ be two probability measures on $\Omega$ with densities $p(x)$ and $q(x)$, respectively. The $L^{2}(\mu_{X})$ inner-product and norm are given, respectively, by
\begin{align*}
&(u,v)_{L^{2}(\mu_{X})}=\int_{\Omega}uvd\mu_{X}, \\
&\|u\|_{L^{2}(\mu_{X})}^{2}=(u,u)_{L^{2}(\mu_{X})}.
\end{align*}
Similarly, one can define the $L^{2}(\nu_{X})$ inner-product and norm. Furthermore, define the density ratio between $\nu_{X}$ and $\mu_{X}$ by $r(x)=q(x)/p(x)$. Suppose the density ratio is uniformly upper- and lower-bounded, that is, $\kappa:=\sup_{x\in\Omega}|r(x)|<\infty$ and $\zeta:=\inf_{x\in\Omega}|r(x)|>0$. Then it is straightforward to verify that 
\begin{equation*}
\zeta\|u\|_{L^{2}(\mu_{X})}^{2}\leq\|u\|_{L^{2}(\nu_{X})}^{2}\leq\kappa\|u\|_{L^{2}(\mu_{X})}^{2}.
\end{equation*}
For two functions $u,v\in H^{1}(\nu_{X})$, the inner products between their gradients is defined as
\begin{equation*}
(\nabla u,\nabla v)_{L^{2}(\nu_{X})}=\int_{\Omega}\sum_{k=1}^{d}D_{k}uD_{k}vd\nu_{X}.
\end{equation*}

\begin{definition}[Continuous functions space]\label{def:continuous}
Let $\Omega$ be a bounded domain in $\bbR^{d}$ and $s\in\bbN$. Let $C^{s}(\Omega)$ denote the vector space consisting of all functions $f$ which, together with all their partial derivatives $D^{\alpha}f$ of orders $\|\alpha\|_{1}\leq s$, are continuous on $\Omega$. The Banach space $C^{s}(\Omega)$ is equipped with the norm
\begin{equation*}
\|f\|_{C^{s}(\Omega)}:=\max_{\|\alpha\|_{1}\leq s}\sup_{x\in\Omega}|D^{\alpha}f(x)|,
\end{equation*}
where $D^{\alpha}=D_{1}^{\alpha_{1}}\cdots D_{d}^{\alpha_{d}}$ with $\alpha=(\alpha_{1},\ldots,\alpha_{d})^{T}\in\bbN^{d}$.
\end{definition}

Next, we introduce the concept of a deep neural network. While deep ReLU neural networks have shown empirical success in nonparametric regression tasks, they are not suitable for scenarios where derivatives of the network are required in the objective function~\cite{E2018deep}. This limitation arises from the piecewise linear nature of the ReLU activation function, which results in a lack of continuous derivatives. In contrast, the Rectified Quadratic Unit (ReQU) activation function, defined as the square of the ReLU function, possesses a continuous first derivative. This characteristic allows us to incorporate the deep ReQU neural network in the SDORE framework, thereby expanding the possibilities for the simultaneous estimation of regression values and their derivatives.

\begin{definition}[Deep ReQU neural network]\label{def:dnn}
A neural network $\psi:\bbR^{N_{0}}\rightarrow\bbR^{N_{L+1}}$ is a function defined by
\begin{equation}\label{eq:dnn}
\psi(x)=T_{L}(\varrho(T_{L-1}(\cdots\varrho(T_{0}(x))\cdots))),
\end{equation}
where the ReQU activation function $\varrho(x)=(\max\{x,0\})^{2}$ is applied component-wisely and $T_{\ell}(x):=A_{\ell}x+b_{\ell}$ is an affine transformation with $A_{\ell}\in\bbR^{N_{\ell+1}\times N_{\ell}}$ and $b_{\ell}\in\bbR^{N_{\ell}}$ for $\ell=0,\ldots,L$. In this paper, we consider the case $N_{0}=d$ and $N_{L+1}=1$. The number $L$ is called the depth of the neural network, and the number $\max_{1\leq\ell\leq L}N_{\ell}$ is called the width of the neural network.Additionally, $\sum_{\ell=0}^{L}(\|A_{\ell}\|_{0}+\|b_{\ell}\|_{0})$ represents the total number of non-zero weights within the neural network. The space of deep ReQU neural networks with given network architecture is defined as 
\begin{multline*}
\calN(L,W,S):=\Big\{\psi~\text{is of the form~\eqref{eq:dnn}}: \\
\max_{1\leq\ell\leq L}N_{\ell}\leq W,~\sum_{\ell=0}^{L}(\|A_{\ell}\|_{0}+\|b_{\ell}\|_{0})\leq S\Big\}.
\end{multline*}
\end{definition}

To measure the complexity of a function class, we next introduce the empirical covering number.
\begin{definition}[Empirical covering number]
Let $\calF$ be a class of functions from $\Omega$ to $\bbR$ and $\euD=\{X_{i}\}_{i=1}^{n}\subseteq\Omega$. Define the $L^{p}(\euD)$-norm of the function $f\in\calF$ as
\begin{equation*}
\|f\|_{L^{p}(\euD)}=\Big(\frac{1}{n}\sum_{i=1}^{n}|f(X_{i})|^{p}\Big)^{1/p}, \quad 1\leq p<\infty.
\end{equation*}
For $p=\infty$, define $\|f\|_{L^{\infty}(\euD)}=\max_{1\leq i\leq n}|f(X_{i})|$. A function set $\calF_{\delta}$ is called an $L^{p}(\euD)$ $\delta$-cover of $\calF$ if for each $f\in\calF$, there exits $f_{\delta}\in\calF_{\delta}$ such that $\|f-f_{\delta}\|_{L^{p}(\euD)}\leq\delta$. Furthermore,
\begin{equation*}
N(\delta,\calF,L^{p}(\euD))=\inf\Big\{|\calF_{\delta}|:\calF_{\delta}\text{ is a $L^{p}(\euD)$ $\delta$-cover of $\calF$}\Big\}
\end{equation*}
is called the $L^{p}(\euD)$ $\delta$-covering number of $\calF$.
\end{definition}

\par We now introduce some basic notations. The set of positive integers is denoted by $\bbN_{+}=\{1,2,\ldots\}$. Denote $\bbN=\{0\}\cup\bbN_{+}$ for convenience. For a positive integer $m\in\bbN_{+}$, let $[m]$ denote the set $\{1,\ldots,m\}$. We employ the notations $A\lesssim B$ and $B\gtrsim A$ to signify that there exists an absolute constant $c>0$ such that $A\leq cB$.

\subsection{Organization}
The remainder of the article is organized as follows. We commence with a review of related work in Section~\ref{section:related}. Subsequently, we outline the deep Sobolev penalized regression and propose the semi-supervised estimator in Section~\ref{section:regularized:estimator}. We present the convergence rate analysis for the regression in Section~\ref{section:analysis:regression} and for the derivative estimation in Section~\ref{section:analysis:derivative}. In Section~\ref{section:applications}, we apply our method to nonparametric variable selection, and provide an abundance of numerical studies. The article concludes with a few summarizing remarks in Section~\ref{section:conclusion}. All technical proofs are relegated to the supplementary material.

\section{Related Work}\label{section:related}

\par In this section, we review the topics and literature related to this work, including derivative estimation, regression using deep neural network, nonparametric variable selection and semi-supervised learning.

\subsection{Nonparametric Derivative Estimation}

\par As previously indicated, the necessity to estimate derivatives arises in various application contexts. Among the simplest and most forthright methods for derivative estimation is the direct measurement of derivatives. For example, in the field of economics, estimating cost functions~\cite{Shephard1981Cost} frequently involves data on a function and its corresponding set of derivatives. A substantial volume of literature~\cite{Florens1996Sobolev,Hall2007nonparametric,Hall2010Nonparametric} considers this scenario by reverting to a corresponding regression model:
\begin{equation*}
Y^{\alpha}=D^{\alpha}f_{0}(X)+\xi^{\alpha},
\end{equation*}
where $\alpha\in\bbN^{d}$ is a multi-index, $D^{\alpha}$ is the $\alpha$-th derivative operator, and $\xi^{\alpha}$ are random noise. The theoretical framework underpinning this method can be seamlessly generalized from that of classical nonparametric regression. However, it may be worth noting that in some practical application settings, measurements of derivatives are often not readily available.

\par To estimate derivatives with noisy measurements only on function values, researchers have put forward nonparametric derivative estimators~\cite{Newell2007comparative}. Nonparametric derivative estimation encompasses four primary approaches: local polynomial regression~\cite{Brabanter2013Derivative}, smoothing splines~\cite{Nancy2000Penalized}, kernel ridge regression~\cite{Liu2023Estimation}, and difference quotients~\cite{Muller1987Bandwidth,Liu2018Derivative,Liu2020Smoothed}. Among these approaches, the first three are categorized as plug-in derivative estimators. In this article, we present a review of these plug-in approaches using the one-dimensional case as an illustrative example.

\subsubsection{Local Polynomial Regression}

\par In standard polynomial regression, a single polynomial function is used to fit the data. One of the main challenges with this method is the need to use high-order polynomials to achieve a more accurate approximation. However, high-order polynomials may be oscillative in some regions, which is known as Runge phenomenon~\cite{Epperson1987Runge}. To repair the drawbacks of the polynomial regression, a natural way is to employ the low-degree polynomial regression locally, which is called local polynomial regression~\cite{Fan1996local}. Derivative estimation using local polynomial regression was first proposed by~\cite{Brabanter2013Derivative}. Let $K$ be a kernel function and $h$ be the bandwidth controlling the smoothness. We assign a weight $K((X_{i}-x)/h)$ to the point $(X_{i},Y_{i})$, leading to the following weighted least-squaress problem:
\begin{equation}\label{eq:local:poly}
\min_{\{\beta_{\ell}(x)\}_{\ell=0}^{p}}\sum_{i=1}^{n}K\Big(\frac{X_{i}-x}{h}\Big)\Big(Y_{i}-\sum_{\ell=0}^{p}\beta_{\ell}(x)(X_{i}-x)^{\ell}\Big)^{2}.
\end{equation}
Herr the kernel $K$ should decay fast enough to eliminate the impact of a remote data point. Denote by $\{\what{\beta}_{\ell}\}_{\ell=0}^{p}$ the estimator obtained by~\eqref{eq:local:poly}. The estimated regression curve at point $x$ is given by $\what{f}(x)=\sum_{\ell=0}^{p}\what{\beta}_{\ell}(x)(X_{i}-x)^{\ell}$. Further, according to Taylor's theorem, the estimator of the first order derivative $f_{0}^{\prime}$ at point $x$ is given by $\what{f}^{\prime}(x)=\what{\beta}_{1}(x)$.~\cite{Masry1996Multivariate1,Masry1996Multivariate2} established the uniform strong consistency and the convergence rates for the regression  function and its partial derivatives. Derivative estimation using local polynomial regression in multivariate data has been discussed in~\cite{Amiri2018Regression}.

\subsubsection{Smoothing Splines}

\par Extensive research has been conducted on the use of smoothing splines in nonparametric regression~\cite{Wahba1990Spline,Green1993Nonparametric,Eubank1999Nonparametric,Wasserman2006all}. This method starts from the minimization of a penalized least-squaress risk
\begin{equation}\label{eq:spline}
\min_{f\in H^{2}([0,1])}\frac{1}{n}\sum_{i=1}^{n}(f(X_{i})-Y_{i})^{2}+\gamma\int_{0}^{1}(f^{\prime\prime}(x))^{2}dx,
\end{equation}
where the first term encourages the fitting of estimator to data, the second term penalizes the roughness of the estimator, and the smoothing parameter $\gamma>0$ controls the trade-off between the two conflicting goals. The minimizer $\what{f}$ of~\eqref{eq:spline} is an estimator of the regression function $f_{0}$, which is called cubic smoothing spline. The plug-in derivative estimator $\what{f}^{\prime}$ is a direct estimate of the derivative $f_{0}^{\prime}$ of the regression function. This idea has been pursued by~\cite{Stone1985Additive,Nancy2000Penalized}. In the perspective of theoretical analysis,~\cite{Stone1985Additive} shows that spline derivative estimators can achieve the optimal rate of convergence, and~\cite{Zhou2000Derivative} studies local asymptotic properties of derivative estimators.

\subsubsection{Kernel Ridge Regression}

\par Kernel ridge regression is a technique extensively employed in the domain of nonparametric regression.~\cite{Liu2023Estimation} introduced a plug-in kernel ridge regression estimator for derivatives of the regression function, establishing a nearly minimax convergence rate for univariate function classes within a random-design setting. Further expanding upon this method,~\cite{Liu2023EstimationHypothesis} applied it to multivariate regressions under the smoothing spline ANOVA model and established minimax optimal rates. Additionally,~\cite{Liu2023EstimationHypothesis} put forth a hypothesis testing procedure intended to determine whether a derivative is zero.

\subsection{Nonparametric Regression using Deep Neural Network}

\par In comparison to the nonparametric methods mentioned above, deep neural networks~\cite{Goodfellow2016Deep} also stand out as a formidable technique employed within machine learning and nonparametric statistics. Rigorous of the convergence rate analysis have been established for deep nonparametric regression~\cite{bauer2019deep,nakada2020Adaptive,schmidt2020nonparametric,kohler2021rate,Farrell2021Deep,Kohler2022Estimation,Jiao2023deep}, but derivative estimation using deep neural networks remained an open problem prior to this paper, even though the derivative of the regression estimate is of great importance as well.

\par Unfortunately, estimating derivatives is not always a by-product of function estimation. Indeed, the basic mathematical analysis~\cite[Section 3.7]{Tao2022analysis} shows that, even if estimators $\{f_{n}\}_{n\geq1}$ converge to the regression function $f_{0}$, the convergence of plug-in derivative estimators $\{\nabla f_{n}\}_{n\geq1}$ is typically not guaranteed. To give a counterexample, we consider the functions $f_{n}:I\rightarrow\bbR,x\mapsto n^{-1}\sin(nx)$, and let $f_{0}:I\rightarrow\bbR$ be the zero function $f_{0}(x)=0$, where $I:=[0,2\pi]$. Then $\|f_{n}-f_{0}\|_{L^{p}(I)}\rightarrow0$ as $n\rightarrow0$, but $\lim_{n\rightarrow\infty}\|f_{n}^{\prime}-f_{0}^{\prime}\|_{L^{p}(I)}\neq0$ for each $1\leq p \leq\infty$.

\par Roughly speaking, the success of classical approaches for derivative estimation can be attributed to their smoothing techniques, such as the kernel function incorporated in local polynomial regression, or the regularization in smoothing spline and kernel ridge regression. Thus, to guarantee the convergence of the plug-in derivative estimator, the incorporation of a Sobolev regularization term is imperative within the loss function, akin to the methodology applied in smoothing spline.

\subsection{Nonparametric Vairable Selection}

\par Data collected in real-world applications tend to be high-dimensional, although only a subset of the variables within the covariate vector may genuinely exert influence. Consequently, variable selection becomes critical in statistics and machine learning as it both mitigates computational complexity and enhances the interpretability of the model. However, traditional methods for variable selection have been primarily focused on linear or additive models and do not readily extend to nonlinear problems. One inclusive measure of the importance of each variable in a nonlinear model is its corresponding partial derivatives. Building on this concept, a series of works~\cite{Rosasco2010regularization,Mosci2012Is,Rosasco2013Nonparametric} introduced sparse regularization to kernel ridge regression for variable selection. They have consequently devised a feasible computational learning scheme and developed consistency properties of the estimator. However, the theoretical analysis is limited to reproducing kernel Hilbert space and cannot be generalized to deep neural network-based methods. 

\subsection{Semi-Supervised Learning}

\par Semi-supervised learning has recently gained significant attention in statistics and machine learning~\cite{Zhu2009Introduction,Van2020survey}. The basic setting of semi-supervised learning is common in many practical applications where the label is often more difficult or costly to collect than the covariate vector. Therefore, the fundamental question is how to design appropriate learning algorithms to fully exploit the value of unlabeled data. In the past years, significant effort has been devoted to studying the algorithms and theory of semi-supervised learning~\cite{Zhang2000Value,Belkin2006manifold,Wasserman2007Statistical,Azriel2022Semi,Livne2022Improved,Song2023General,Deng2023Optimal}. The most related work is~\cite{Belkin2006manifold}, whose main idea is to introduce an unlabeled-data-driven regularization term to the loss function. Specifically,~\cite{Belkin2006manifold} employ a manifold regularization to incorporate additional information about the geometric structure of the marginal distribution, where the regularization term is estimated on the basis of unlabeled data. In addition, our method does not require the distribution of the unlabeled data to be aligned with the marginal distribution of the labeled data exactly, which expands the applicability scenarios.

\section{Deep Sobolev Regression}\label{section:regularized:estimator}

\par In this section, we present an in-depth examination of Sobolev penalized least-squares regression as implemented through deep neural networks. Initially, we incorporate the $H^{1}$-semi-norm penalty into the least-squares risk. Subsequently, we delineate the deep Sobolev regressor as referenced in Section~\ref{sec:reg:erm}, followed by an introduction to the semi-supervised Sobolev regressor elaborated in Section~\ref{sec:reg:erm:semisup}.

\par We focus on the following $H^{1}(\nu_{X})$-semi-norm penalized least-squares risk:
\begin{equation}\label{eq:population:risk:regularization}
\min_{f\in\calA}L^{\lambda}(f)=\bbE_{(X,Y)\sim\mu}\big[(f(X)-Y)^{2}\big]+\lambda\|\nabla f\|_{L^{2}(\nu_{X})}^{2},
\end{equation}
where $\mu$ is a probability measure on $\Omega\times\bbR$ associated to the regression model~\eqref{eq:nonparametric:regression}, and $\nu_{X}$ is another probability measure on $\Omega$. The admissible set $\calA$ defined as
\begin{equation*}
\calA=\Big\{f\in L^{2}(\mu_{X}):D_{k}f\in L^{2}(\nu_{X}),~1\leq k\leq d\Big\}.
\end{equation*}
Here the regularization parameter $\lambda>0$ governs the delicate equilibrium between conflicting objectives: data fitting and smoothness. Specifically, when $\lambda$ nearly or entirely vanishes,~\eqref{eq:population:risk:regularization} aligns with the standard population least-squares risk. Conversely, as $\lambda$ approaches infinity, the minimizer of~\eqref{eq:population:risk:regularization} tends towards a constant estimator. For the joint distribution $\mu$ of $(X,Y)$, let $\mu_{X}$ denote the margin distribution of $X$. According to~\eqref{eq:nonparametric:regression}, one obtains easily
\begin{equation}\label{eq:population:risk:regularization:1}
L^{\lambda}(f)=\|f-f_{0}\|_{L^{2}(\mu_{X})}^{2}+\lambda\|\nabla f\|_{L^{2}(\nu_{X})}^{2}+\bbE[\xi^{2}],
\end{equation}
where the $L^{2}(\mu_{X})$-risk may be respect to a different measure $\mu_{X}$ than that $\nu_{X}$ associated with Sobolev penalty. Throughout this paper, we assume that the distributions $\mu_{X}$ and $\nu_{X}$ have density function $p$ and $q$, respectively. Furthermore, the density ratio $r(x):=q(x)/p(x)$ satisfies the following condition, which may encourage significant domain shift.

\begin{assumption}[Uniformly bounded density ratio]\label{assumption:bounded:density:ratio}
The density ratio between $\nu_{X}$ and $\mu_{X}$ has a uniform upper-bound and a positive lower-bound, that is,
\begin{equation*}
\kappa:=\sup_{x\in\Omega}|r(x)|<\infty \quad\text{and}\quad
\zeta:=\inf_{x\in\Omega}|r(x)|>0.
\end{equation*}
\end{assumption}

\par Sobolev penalized regression can be interpreted as a PDE-based smoother of the regression function $f_{0}$. Let $f^{\lambda}$ denote a solution to the quadratic optimization problem~\eqref{eq:population:risk:regularization}. Some standard calculus of variations~\cite{Brenner2008Mathematical,evans2010partial} show that, if the minimizer $f^{\lambda}$ has square integrable second derivatives, then $f^{\lambda}$ solves the following second-order linear elliptic equation with homogeneous Neumann boundary condition:
\begin{equation*}
\left\{
\begin{aligned}
-\lambda\Delta f^{\lambda}+f^{\lambda}&=f_{0}, &&\text{in}~\Omega, \\
\nabla f^{\lambda}\cdot\vn&=0, &&\text{on}~\partial\Omega.
\end{aligned}
\right.
\end{equation*}
In the context of partial differential equations (PDE), the variational problem~\eqref{eq:population:risk:regularization} is called Ritz method~\cite[Remark 2.5.11]{Brenner2008Mathematical}. The following lemma shows the uniqueness of solution to the above PDE.

\begin{lemma}[Existence and uniqueness of population risk minimizer]\label{lemma:existence:prm}
Suppose Assumption~\ref{assumption:bounded:density:ratio} holds and $f_{0}\in L^{2}(\mu_{X})$. Then~\eqref{eq:population:risk:regularization} has a unique minimizer in $H^{1}(\nu_{X})$. Furthermore, the minimizer $f^{\lambda}$ satisfies $f^{\lambda}\in H^{2}(\nu_{X})$.
\end{lemma}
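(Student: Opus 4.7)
The plan is to recast \eqref{eq:population:risk:regularization} as a coercive symmetric variational problem on the Hilbert space $H^{1}(\nu_{X})$, invoke the Lax--Milgram theorem to obtain existence and uniqueness, and then bootstrap $H^{2}$-regularity from the resulting Euler--Lagrange equation using standard elliptic regularity theory.

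For existence and uniqueness, I would define the symmetric bilinear form
\begin{equation*}
a(u,v)=(u,v)_{L^{2}(\mu_{X})}+\lambda\,(\nabla u,\nabla v)_{L^{2}(\nu_{X})}
\end{equation*}
and the linear functional $\ell(v)=(f_{0},v)_{L^{2}(\mu_{X})}$ on $H^{1}(\nu_{X})$. Expanding \eqref{eq:population:risk:regularization:1}, one sees $L^{\lambda}(f)=a(f,f)-2\ell(f)+\text{const}$, so minimizers are exactly the solutions of the variational equation $a(f^{\lambda},v)=\ell(v)$ for all $v\in H^{1}(\nu_{X})$. Continuity of $a$ and $\ell$ follows from Cauchy--Schwarz together with the uniform upper bound $\kappa$ in Assumption \ref{assumption:bounded:density:ratio}, which gives $\|u\|_{L^{2}(\mu_{X})}\leq\zeta^{-1/2}\|u\|_{L^{2}(\nu_{X})}$ (since a symmetric lower bound on $r$ implies an upper bound on $1/r$). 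Coercivity on $H^{1}(\nu_{X})$ is the main analytic point: using $\|u\|_{L^{2}(\nu_{X})}^{2}\leq\kappa\|u\|_{L^{2}(\mu_{X})}^{2}$ from Assumption \ref{assumption:bounded:density:ratio}, one obtains
\begin{equation*}
a(u,u)\geq\kappa^{-1}\|u\|_{L^{2}(\nu_{X})}^{2}+\lambda\|\nabla u\|_{L^{2}(\nu_{X})}^{2}\geq\min\{\kappa^{-1},\lambda\}\,\|u\|_{H^{1}(\nu_{X})}^{2}.
\end{equation*}
Lax--Milgram then produces the unique $f^{\lambda}\in H^{1}(\nu_{X})$, and strict convexity of $L^{\lambda}$ identifies it as the minimizer.

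For the $H^{2}$-regularity claim, I would derive the Euler--Lagrange equation by testing $a(f^{\lambda},v)=\ell(v)$ against $v\in C_{c}^{\infty}(\Omega)$ and $v\in C^{\infty}(\overline{\Omega})$ and integrating by parts in divergence form, yielding
\begin{equation*}
-\lambda\nabla\!\cdot\!(q\nabla f^{\lambda})+p\,f^{\lambda}=p\,f_{0}\ \text{in}\ \Omega,\qquad q\,\nabla f^{\lambda}\cdot\bm{n}=0\ \text{on}\ \partial\Omega,
\end{equation*}
which under Assumption \ref{assumption:bounded:density:ratio} is a uniformly elliptic linear equation with $L^{2}$ right-hand side. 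Since $\partial\Omega$ is smooth by the standing hypothesis on $\Omega$, invoking classical elliptic regularity up to the boundary for Neumann problems (e.g.\ Evans, Chapter 6, Theorem 5, or Grisvard Theorem 2.4.2.7) gives $f^{\lambda}\in H^{2}(\Omega)$, which translates to $f^{\lambda}\in H^{2}(\nu_{X})$ via the upper bound on $q$.

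The Lax--Milgram portion is essentially routine once the norm equivalence from the bounded density ratio is in place; the subtle step is the $H^{2}$ conclusion, which requires at least Lipschitz regularity of the coefficients $p$ and $q$ for the elliptic bootstrap to go through. Assumption \ref{assumption:bounded:density:ratio} alone only controls $\kappa$ and $\zeta$, so the $H^{2}$ statement will have to lean on the implicit smoothness of the densities (together with the smoothness of $\partial\Omega$ stated at the start of the paper). I expect this coefficient-regularity caveat to be the main place where the proof needs care, whereas everything else is a direct application of Hilbert-space variational methods.
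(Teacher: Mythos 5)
Your proposal follows essentially the same route as the paper: formulate the quadratic minimization as a symmetric variational problem $a(f^{\lambda},v)=(f_{0},v)_{L^{2}(\mu_{X})}$ on $H^{1}(\nu_{X})$, check boundedness and coercivity of the bilinear form using the density-ratio bounds from Assumption \ref{assumption:bounded:density:ratio}, invoke Lax--Milgram for existence and uniqueness, and cite elliptic regularity (the paper uses the same Grisvard reference) for the $H^{2}$ conclusion. Your coercivity constant $\min\{\kappa^{-1},\lambda\}$ is in fact the correct one (the paper's displayed $\kappa^{-1/2}$ and missing square on the norm appear to be typos), and your remark that the $H^{2}$ step implicitly requires smoothness of the densities $p,q$ beyond mere boundedness is a fair caveat that the paper leaves unstated.
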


\par In practical applications, the data distribution $\mu$ in~\eqref{eq:population:risk:regularization} remains unknown, making the minimization of population risk~\eqref{eq:population:risk:regularization} unattainable. The goal of regression is to estimate the function $f_{0}$ from a finite set of data pairs $\euD=\{(X_{i},Y_{i})\}_{i=1}^{n}$ which are independently and identically drawn from $\mu$, that is,
\begin{equation*}
Y_{i}=f_{0}(X_{i})+\xi_{i}, \quad i=1,\ldots,n.
\end{equation*}
We introduce two Sobolev regressor based on the random sample $\euD$ in the following two subsections, respectively.

\subsection{Deep Sobolev regressor}\label{sec:reg:erm}

\par Suppose that the probability measure $\nu_{X}$ is either provided or selected by the user. Then the regularization term can be estimated with an arbitrarily small error. Hence, without loss of generality, this error is omitted in this discussion. In this setting, the deep Sobolev regressor is derived from the regularized empirical risk minimization:
\begin{multline}\label{eq:empirical:risk:regularization}
\what{f}_{\euD}^{\lambda}\in\argmin_{f\in\calF}\what{L}_{\euD}^{\lambda}(f)=\frac{1}{n}\sum_{i=1}^{n}(f(X_{i})-Y_{i})^{2} \\
+\lambda\|\nabla f\|_{L^{2}(\nu_{X})}^{2},
\end{multline}
where $\calF\subseteq\calA$ is a class of deep neural networks.

\par The objective functional in~\eqref{eq:empirical:risk:regularization} has been investigated previously within the literature of splines, according to research by~\cite{Wahba1990Spline,Kohler2001Nonparametric,Kohler2002Application,gyorfi2002distribution}. However, in these studies, minimization was undertaken within the Sobolev space $H^{1}(\Omega)$ or the continuous function space $C^{1}(\Omega)$ as opposed to within a class of deep neural networks.

\subsection{Semi-Supervised Deep Sobolev regressor}\label{sec:reg:erm:semisup}

\par In numerous application scenarios, the probability measure $\nu_{X}$ remains unknown and cannot be provided by the user. Nevertheless, a substantial quantity of samples drawn from $\nu_{X}$ can be obtained at a very low cost. This is a semi-supervised setting that provides access to labeled data and a relatively large amount of unlabeled data.

\par Let $\euS=\{Z_{i}\}_{i=1}^{m}$ be a random sample with $\{Z_{i}\}_{i=1}^{m}$ independently and identically drawn from $\nu_{X}$. Then replacing the population regularization term in~\eqref{eq:empirical:risk:regularization} by its data-driven counterpart, we obtain the following semi-supervised empirical risk minimizer
\begin{multline}\label{eq:empirical:risk:regularization:data}
\what{f}_{\euD,\euS}^{\lambda}\in\argmin_{f\in\calF}\what{L}_{\euD,\euS}^{\lambda}(f)=\frac{1}{n}\sum_{i=1}^{n}(f(X_{i})-Y_{i})^{2} \\
+\frac{\lambda}{m}\sum_{i=1}^{m}\sum_{k=1}^{d}|D_{k}f(Z_{i})|^{2},
\end{multline}
where the deep neural network class $\calF$ satisfies $\calF\subseteq W^{1,\infty}(\Omega)$. A similar idea was mentioned by~\cite{Belkin2006manifold} in the context of manifold learning.

\par The estimator presented in~\eqref{eq:empirical:risk:regularization:data}, which incorporates unlabeled data into a supervised learning framework, is commonly referred to as a semi-supervised estimator. The availability of labeled data is often limited due to its high cost, but in many cases, there is an abundance of unlabeled data that remains underutilized. Given that there are no strict constraints on the measure $\nu_{X}$ in our method, it is possible to generate a substantial amount of unsupervised data from supervised data through data augmentation, even without a large quantity of unlabeled data. Hence, this semi-supervised learning framework exhibits a broad range of applicability across various scenarios.

\par It is worth highlighting that when the measure $\nu_{X}$ is equal to $\mu_{X}$, the formulation~\eqref{eq:empirical:risk:regularization:data} is reduced to
\begin{multline}\label{eq:empirical:risk:regularization:data:unlabeled}
\what{f}_{\euD,\euS}^{\lambda}\in\argmin_{f\in\calF}\what{L}_{\euD,\euS}^{\lambda}(f)=\frac{1}{n}\sum_{i=1}^{n}(f(X_{i})-Y_{i})^{2} \\
+\frac{\lambda}{n+m}\sum_{i=1}^{n+m}\sum_{k=1}^{d}|D_{k}f(X_{i})|^{2},
\end{multline}
where $X_{n+i}=Z_{i}$ for $1\leq i\leq m$. The semi-supervised Sobolev regressor, deployed in~\eqref{eq:empirical:risk:regularization:data} or~\eqref{eq:empirical:risk:regularization:data:unlabeled}, imparts meaningful insights on how to leverage unlabeled data to enhance the efficacy of original supervised learning approach.

\section{Deep Sobolev Regressor with Gradient-Norm Constraint}\label{section:analysis:regression}

\par In this section, we provide a theoretical analysis for the deep Sobolev regressor~\eqref{eq:empirical:risk:regularization}. The first result, given in Lemma~\ref{lemma:oracle:erm:regression}, is an oracle-type inequality, which provides an upper-bound for the $L^{2}(\mu_{X})$-error of the deep Sobolev regressor along with an upper-bound for the $L^{2}(\nu_{X})$-norm of its gradient. Further, we show that~\eqref{eq:empirical:risk:regularization} attains the minimax optimal convergence rate, given that the regularization parameter are chosen appropriately. We also confirm that the gradient norm of the deep Sobolev regressor can be uniformly bounded by a constant.

\begin{assumption}[Sub-Gaussian noise]\label{assumption:subGaussian}
The noise $\xi$ in~\eqref{eq:nonparametric:regression} is sub-Gaussian with mean 0 and finite variance proxy $\sigma^{2}$ conditioning on $X=x$ for each $x\in\Omega$, that is, its conditional moment generating function satisfies
\begin{equation*}
\mathbb{E}[\exp(t\xi)|X=x]\leq\exp\Big(\frac{\sigma^{2}t^{2}}{2}\Big),\quad\forall~t\in\mathbb{R},~x\in\Omega.
\end{equation*}
\end{assumption}

\begin{assumption}[Bounded hypothesis]\label{assumption:boundedness:f}
There exists an absolute positive constant $B_{0}$, such that $\sup_{x\in\Omega}|f_{0}(x)|\leq B_{0}$. Further, functions in hypothesis class $\calF$ are also bounded, that is, $\sup_{x\in\Omega}|f(x)|\leq B_{0}$.
\end{assumption}

\par Assumptions~\ref{assumption:subGaussian} and~\ref{assumption:boundedness:f} are standard and very mild conditions in nonparametric regression, as extensively discussed in the literature~\cite{gyorfi2002distribution,Tsybakov2009Introduction,nakada2020Adaptive,schmidt2020nonparametric,Farrell2021Deep,Jiao2023deep}. It is worth noting that the upper-bound $B_{0}$ of hypothesis may be arbitrarily large and does not vary with the sample size $n$. In fact, this assumption can be removed through the technique of truncation, without affecting the subsequent proof, which can be found in~\cite{bauer2019deep,kohler2021rate,Kohler2022Estimation} for details.

\par The convergence rate relies on an oracle-type inequality as follows.

\begin{lemma}[Oracle inequality]\label{lemma:oracle:erm:regression}
Suppose Assumptions~\ref{assumption:bounded:density:ratio} to~\ref{assumption:boundedness:f} hold. Let $\what{f}_{\euD}^{\lambda}$ be the deep Sobolev regressor defined as~\eqref{eq:empirical:risk:regularization} with regularization parameter $\lambda>0$. Then it follows that for each $n\geq\log N(B_{0}\delta,\calF,L^{2}(\euD))$,
\begin{align*}
&\bbE_{\euD\sim\mu^{n}}\Big[\|\what{f}_{\euD}^{\lambda}-f_{0}\|_{L^{2}(\mu_{X})}^{2}\Big] \\
&\lesssim\inf_{f\in\calF}\Big\{\|f-f_{0}\|_{L^{2}(\mu_{X})}^{2}+\lambda\|\nabla f\|_{L^{2}(\nu_{X})}^{2}\Big\} \\
&\quad+(B_{0}^{2}+\sigma^{2})\inf_{\delta>0}\Big\{\frac{\log N(B_{0}\delta,\calF,L^{2}(\euD))}{n}+\delta\Big\}, \\
&\bbE_{\euD\sim\mu^{n}}\Big[\|\nabla\what{f}_{\euD}^{\lambda}\|_{L^{2}(\nu_{X})}^{2}\Big] \\
&\lesssim\inf_{f\in\calF}\Big\{\frac{1}{\lambda}\|f-f_{0}\|_{L^{2}(\mu_{X})}^{2}+\|\nabla f\|_{L^{2}(\nu_{X})}^{2}\Big\} \\
&\quad+\frac{B_{0}^{2}+\sigma^{2}}{\lambda}\inf_{\delta>0}\Big\{\frac{\log N(B_{0}\delta,\calF,L^{2}(\euD))}{n}+\delta\Big\}.
\end{align*}
\end{lemma}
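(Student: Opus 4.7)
The plan is to mimic the standard fast-rate oracle inequality for least-squares regression (cf.\ \citet{gyorfi2002distribution}, Theorem 11.4), adapted to accommodate the additional deterministic Sobolev penalty. First, by the defining optimality of $\what{f}_{\euD}^{\lambda}$ applied to an arbitrary competitor $f\in\calF$, and since $\lambda\|\nabla g\|_{L^{2}(\nu_{X})}^{2}$ does not depend on the random sample $\euD$, substituting $Y_{i}=f_{0}(X_{i})+\xi_{i}$ and cancelling the common $\xi_{i}^{2}$ contribution yields, after a little rearrangement,
\begin{equation*}
\|\what{f}_{\euD}^{\lambda}-f_{0}\|_{n}^{2}+\lambda\|\nabla\what{f}_{\euD}^{\lambda}\|_{L^{2}(\nu_{X})}^{2}\leq\|f-f_{0}\|_{n}^{2}+\lambda\|\nabla f\|_{L^{2}(\nu_{X})}^{2}+\frac{2}{n}\sum_{i=1}^{n}\xi_{i}\bigl(\what{f}_{\euD}^{\lambda}(X_{i})-f(X_{i})\bigr),
\end{equation*}
where $\|g\|_{n}^{2}:=n^{-1}\sum_{i}g(X_{i})^{2}$. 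The goal is to convert this data-dependent comparison into a population-level one.

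Next, I would convert empirical quantities to population ones by two uniform-in-$\calF$ concentration inequalities. For the quadratic term, a standard peeling of $\calF$ into $L^{2}(\mu_{X})$-shells around $f_{0}$, combined with Bernstein's inequality and VC-type covering bounds on shifted squared losses, gives
\begin{equation*}
\bbE\Bigl|\|g-f_{0}\|_{n}^{2}-\|g-f_{0}\|_{L^{2}(\mu_{X})}^{2}\Bigr|\lesssim\delta\,\|g-f_{0}\|_{L^{2}(\mu_{X})}^{2}+\frac{B_{0}^{2}}{\delta}\cdot\frac{\vcdim(\calF)\log n}{n}
\end{equation*}
uniformly over $g\in\calF$. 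For the noise cross-term I would use Assumption \ref{assumption:subGaussian} on $\xi_{i}$ together with Dudley-type chaining on the same shells, producing an analogous bound with $B_{0}^{2}$ replaced by $\sigma^{2}$. The free parameter $\delta>0$ is then chosen sufficiently small so that the $\delta\|\what{f}_{\euD}^{\lambda}-f_{0}\|_{L^{2}(\mu_{X})}^{2}$ fluctuation on the right can be absorbed into the left-hand side.

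Taking expectations, applying the two uniform bounds, and absorbing the $\delta$-term delivers the single inequality
\begin{equation*}
\bbE\|\what{f}_{\euD}^{\lambda}-f_{0}\|_{L^{2}(\mu_{X})}^{2}+\lambda\,\bbE\|\nabla\what{f}_{\euD}^{\lambda}\|_{L^{2}(\nu_{X})}^{2}\lesssim\|f-f_{0}\|_{L^{2}(\mu_{X})}^{2}+\lambda\|\nabla f\|_{L^{2}(\nu_{X})}^{2}+(B_{0}^{2}+\sigma^{2})\frac{\vcdim(\calF)\log n}{n}
\end{equation*}
for every $f\in\calF$. The first inequality of the lemma then follows by dropping the nonnegative gradient term on the left and infimising over $f\in\calF$; the second is obtained instead by dropping the $L^{2}(\mu_{X})$-error term on the left, dividing through by $\lambda$, and infimising. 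The principal technical difficulty lies in carrying out the peeling/chaining argument with the right fast-rate structure -- that is, producing a linear-in-$\|g-f_{0}\|_{L^{2}(\mu_{X})}^{2}$ remainder rather than the naïve slow-rate one -- so that re-absorption on the left succeeds. Conveniently, the Sobolev penalty, being deterministic under the setting of \eqref{eq:empirical:risk:regularization}, plays no role in this empirical-process step, and the argument reduces essentially to the classical VC oracle inequality for bounded least-squares regression with sub-Gaussian noise.
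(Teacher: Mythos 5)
Your proposal is mathematically sound and would deliver the lemma, but it follows a different technical route from the paper. Both you and the authors start from the same basic inequality obtained by comparing the empirical Sobolev-penalized risk of $\what{f}_{\euD}^{\lambda}$ with that of an arbitrary $f\in\calF$ and using $Y_i=f_0(X_i)+\xi_i$. After that, you handle the empirical-to-population transfer via classical \emph{localization}: peeling into $L^2(\mu_X)$-shells, Bernstein for the squared-error fluctuation, Dudley chaining for the noise cross term, and a free parameter $\delta$ that is chosen small so the $\delta\|\what{f}_{\euD}^{\lambda}-f_0\|_{L^2(\mu_X)}^2$ fluctuation can be absorbed on the left. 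The paper instead uses the \emph{offset Rademacher complexity} technique of \citet{Liang2015Learning}: they bound $\bbE[R(\what{f}_{\euD}^{\lambda})-2\what{R}_{\euD}(\what{f}_{\euD}^{\lambda})]$ directly via a symmetrization that introduces a negative quadratic ``offset'' term $-\frac{1}{4B_0^2 n}\sum_i (f(X_i)-f_0(X_i))^4$, which plays the role of your absorption step but is built into the symmetrized quantity itself — there is no free $\delta$ to tune in that part of the argument. The noise cross term they handle separately through a covering argument and a sub-Gaussian max inequality (yielding the $\frac{1}{4}\bbE[\what{R}_{\euD}(\what{f}_{\euD}^{\lambda})]$ absorption), and the Sobolev penalty enters only through the elementary observation $R^{\lambda}-2\what{R}_{\euD}^{\lambda}\leq R-2\what{R}_{\euD}$. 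Both approaches yield the same fast-rate bound with the $\log n$ factor coming from the VC-covering estimate (Lemma \ref{lemma:cn:vcdim}); the offset formulation is somewhat cleaner for carrying constants, while your peeling scheme is the more classical argument and slightly more flexible, but requires care with the implicit circularity between empirical and population norms in the chaining step.
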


\par Roughly speaking, the first inequality of Lemma~\ref{lemma:oracle:erm:regression} decomposes the $L^{2}(\mu_{X})$-error of the deep Sobolev regressor into three terms, namely: the approximation error, the regularization term, and the generalization error. Intriguingly, from the perspective of the first two terms, we need to find a deep neural network in $\calF$ that not only has an sufficiently small $L^{2}(\mu_{X})$-distance from the regression function $f_{0}$, but also has an $H^{1}(\nu_{X})$-semi-norm as small as possible.

\par The literature on deep learning theory has extensively investigated the approximation properties of deep neural networks~\cite{yarotsky2018optimal,yarotsky2020phase,shen2019nonlinear,shen2020deep,lu2021deep,petersen2018optimal,Jiao2023deep,Li2019Better,Li2020PowerNet,Duan2022Convergence,shen2023differentiable}. However, there is limited research on the approximation error analysis for neural networks with gradient norm constraints~\cite{Huang2022Error,Jiao2023Approximation}. The following lemma illustrates the approximation power of deep ReQU neural networks with gradient norm constraints.

\begin{lemma}[Approximation with gradient constraints]\label{lemma:approximation:grad:bounded}
Let $\Omega\subseteq K\subseteq\bbR^{d}$ be two bounded domain. Set the hypothesis class as a deep ReQU neural network class $\calF=\calN(L,W,S)$ with $L=\calO(\log N)$ and $S=\calO(N^{d})$. Then for each $\phi\in C^{s}(K)$ with $s\in\bbN_{\geq1}$, there exists a neural network $f\in\calF$ such that
\begin{align*}
\|f-\phi\|_{L^{2}(\mu_{X})}
&\leq CN^{-s}\|\phi\|_{C^{s}(K)}, \\
\|\nabla f\|_{L^{2}(\nu_{X})}&\leq \|\nabla\phi\|_{L^{2}(\nu_{X})}+C\|\phi\|_{C^{s}(K)},
\end{align*}
where $C$ is a constant independent of $N$.
\end{lemma}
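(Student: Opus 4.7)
The plan is to build the approximant $f$ explicitly as a localized Taylor-polynomial approximation glued by a smooth partition of unity, and then to argue that this object can be represented (or approximated to negligible error) by a ReQU network of the claimed depth and size. The construction is standard; the novelty of the statement is the second conclusion, and this will come essentially for free from a stronger $W^{1,\infty}$-type estimate on the approximant. The main obstacle is bookkeeping the ReQU realization so that the depth stays $\calO(\log N)$ and the number of nonzero weights stays $\calO(N^d)$.

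First I would cover a bounded open neighborhood of $K$ by a uniform grid of cubes of side $\sim 1/N$, yielding $M\lesssim N^d$ cubes, and choose a smooth nonnegative partition of unity $\{\chi_i\}_{i=1}^M$ subordinate to this cover with $\sum_i\chi_i\equiv 1$ on $K$ and $\|\chi_i\|_\infty\lesssim 1$, $\|\nabla\chi_i\|_\infty\lesssim N$ (e.g.\ tensor products of $B$-spline-type bumps of degree $s$). On the $i$-th cube, centered at $x_i$, let $p_i$ denote the order-$(s-1)$ Taylor polynomial of $\phi$ at $x_i$, and define
\begin{equation*}
\Phi(x)=\sum_{i=1}^M\chi_i(x)\,p_i(x).
\end{equation*}
Classical Taylor bounds give $|p_i(x)-\phi(x)|\lesssim N^{-s}\|\phi\|_{C^s(K)}$ and $|\nabla p_i(x)-\nabla\phi(x)|\lesssim N^{-(s-1)}\|\phi\|_{C^s(K)}$ on the support of $\chi_i$. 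Using $\sum_i\chi_i\equiv 1$, $\sum_i\nabla\chi_i\equiv 0$, I can split
\begin{equation*}
\nabla\Phi-\nabla\phi=\sum_i\nabla\chi_i(p_i-\phi)+\sum_i\chi_i(\nabla p_i-\nabla\phi),
\end{equation*}
and the pointwise bounds $|\nabla\chi_i||p_i-\phi|\lesssim N\cdot N^{-s}\|\phi\|_{C^s(K)}$ and $\chi_i|\nabla p_i-\nabla\phi|\lesssim N^{-(s-1)}\|\phi\|_{C^s(K)}$ together yield $\|\Phi-\phi\|_{L^\infty(K)}\lesssim N^{-s}\|\phi\|_{C^s(K)}$ and, for $s\geq 1$, $\|\nabla\Phi-\nabla\phi\|_{L^\infty(K)}\lesssim\|\phi\|_{C^s(K)}$.

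Next I would show that $\Phi$ can be realized by a network in $\calN(L,S)$ with $L=\calO(\log N)$ and $S=\calO(N^d)$. The basic building block is the ReQU identity $xy=\tfrac14(\varrho(x+y)+\varrho(-x-y)-\varrho(x-y)-\varrho(-x+y))$, which implements exact multiplication in one layer, hence any monomial of degree up to $s$ in $d$ variables with $\calO(\log s)$ depth and a constant number of parameters (since $s$ is fixed). Each local polynomial $p_i$, and each bump $\chi_i$ (a tensor product of piecewise polynomial pieces with breakpoints on the grid, whose indicator behavior can be produced from a constant number of ReQU units), therefore admits an exact ReQU realization of depth $\calO(\log N)$ with $\calO(1)$ weights per cell. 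Summing the $M\lesssim N^d$ terms $\chi_i p_i$ at the top produces the claimed size budget $S=\calO(N^d)$ and depth $L=\calO(\log N)$. Here I would invoke the existing exact-realization lemmas for ReQU networks (e.g.\ those used in Li~2019/2020 and Duan~2022 cited in the preceding paragraph) rather than reprove them.

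Finally, taking $f=\Phi$ and using boundedness of $\mu_X,\nu_X$ on $\Omega\subseteq K$, I obtain
\begin{equation*}
\|f-\phi\|_{L^2(\mu_X)}\leq\|f-\phi\|_{L^\infty(K)}\lesssim N^{-s}\|\phi\|_{C^s(K)},
\end{equation*}
and by the triangle inequality together with the $L^\infty$ gradient bound above,
\begin{equation*}
\|\nabla f\|_{L^2(\nu_X)}\leq\|\nabla\phi\|_{L^2(\nu_X)}+\|\nabla f-\nabla\phi\|_{L^\infty(K)}\lesssim\|\nabla\phi\|_{L^2(\nu_X)}+\|\phi\|_{C^s(K)}.
\end{equation*}
The sharpest technical point, and the one I expect to devote the most care to, is the second paragraph: ensuring that the partition-of-unity/Taylor construction admits an \emph{exact} ReQU realization whose cost over all $N^d$ cells respects the stated $S=\calO(N^d)$ bound. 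Once that is in place, the gradient conclusion is an immediate consequence of the elementary fact that $s\geq 1$ makes both pieces of $\nabla\Phi-\nabla\phi$ uniformly bounded (rather than small), which is exactly the non-vanishing remainder permitted on the right-hand side of the second inequality.
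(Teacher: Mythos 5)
Your argument is correct, but it follows a genuinely different route than the paper. The paper derives this lemma in one line from Lemma~\ref{lemma:approximation:requ}, which in turn rests on a \emph{global} polynomial approximation: by the theorem of Bagby, Bos and Levenberg, a single polynomial $p_N$ of degree $N$ approximates $\phi$ simultaneously in all $C^k(\Omega)$ norms with rate $N^{-(s-k)}\|\phi\|_{C^s(K)}$; this polynomial is then realized \emph{exactly} by a ReQU network of depth $\calO(\log N)$ and size $\calO(N^d)$ via Li~(2019). Taking $k=0$ gives the first inequality, and $k=1$ plus a triangle inequality gives the second. You instead build a \emph{local} piecewise-polynomial approximant by gluing Taylor polynomials on a $1/N$-grid with a smooth partition of unity, obtain the same $L^\infty$ and $W^{1,\infty}$ errors (in fact the sharper intermediate remainder $N^{-(s-1)}\|\phi\|_{C^s(K)}$ for the gradient, of which the statement only needs the crude $\calO(\|\phi\|_{C^s(K)})$ consequence), and then realize this approximant exactly by ReQU. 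The trade-off: your route is more self-contained (Taylor's theorem plus B-spline identities, no simultaneous-polynomial-approximation theorem), and the resulting network has \emph{constant} depth independent of $N$, since the local polynomials have fixed degree $s-1$ rather than degree $N$; what it costs is the ReQU-realization bookkeeping you correctly flag as the delicate step. One small correction there: for $s=1$ you must not use degree-$s$ (hat-function) bumps, because ReQU networks are $C^1$ and cannot exactly represent a kinked piecewise-linear function; use quadratic (degree-$2$) B-spline bumps in all cases, which are $C^1$, still satisfy the partition-of-unity identity on a uniform grid, and are exact linear combinations of four shifted $\varrho(\cdot)$ units. With that fix, the realization and size budget go through as you sketch.
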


This lemma provides a novel approximation error bound of deep ReQU networks with gradient norm constraint. This highlights a fundamental difference between deep ReLU and ReQU neural networks. As presented by~\cite{Huang2022Error,Jiao2023Approximation}, the gradient norm of deep ReLU networks goes to infinity when the approximation error diminishes. In contrast, Lemma~\ref{lemma:approximation:grad:bounded} demonstrates that deep ReQU neural networks, under a gradient norm constraint, can approximate the target function with an arbitrarily small error.

\par With the aid of the preceding lemmas, we can now establish the following convergence rates for the regularized estimator.

\begin{theorem}[Convergence rates]\label{theorem:erm:rate:regression}
Suppose Assumptions~\ref{assumption:bounded:density:ratio} to~\ref{assumption:boundedness:f} hold. Let $\Omega\subseteq K\subseteq\bbR^{d}$ be two bounded domain. Assume that $f_{0}\in C^{s}(K)$ with $s\in\bbN_{\geq1}$. Set the hypothesis class as a deep ReQU neural network class $\calF=\calN(L,W,S)$ with $L=\calO(\log n)$ and $S=\calO(n^{\frac{d}{d+2s}})$. Let $\what{f}_{\euD}^{\lambda}$ be the deep Sobolev regressor defined as~\eqref{eq:empirical:risk:regularization} for each $\lambda>0$. Then it follows that
\begin{align*}
&\bbE_{\euD\sim\mu^{n}}\Big[\|\what{f}_{\euD}^{\lambda}-f_{0}\|_{L^{2}(\mu_{X})}^{2}\Big]\leq\calO(\lambda)+\calO\Big(n^{-\frac{2s}{d+2s}}\log^{3}n\Big), \\
&\bbE_{\euD\sim\mu^{n}}\Big[\|\nabla\what{f}_{\euD}^{\lambda}\|_{L^{2}(\nu_{X})}^{2}\Big]\leq\calO(1)+\calO\Big(\lambda^{-1}n^{-\frac{2s}{d+2s}}\log^{3}n\Big).
\end{align*}
Further, setting $\lambda=\calO(n^{-\frac{2s}{d+2s}}\log^{3}n)$ implies
\begin{align*}
&\bbE_{\euD\sim\mu^{n}}\Big[\|\what{f}_{\euD}^{\lambda}-f_{0}\|_{L^{2}(\mu_{X})}^{2}\Big]\leq\calO\Big(n^{-\frac{2s}{d+2s}}\log^{2}n\Big), \\
&\bbE_{\euD\sim\mu^{n}}\Big[\|\nabla\what{f}_{\euD}^{\lambda}\|_{L^{2}(\nu_{X})}^{2}\Big]\leq\calO(1).
\end{align*}
Here the constant behind the big $\calO$ notation is independent of $n$.
\end{theorem}

\par Theorem~\ref{theorem:erm:rate:regression} quantifies how the regularization parameter $\lambda$ balances two completing goals: data fitting and the gradient norm of the estimator, and thus provides an a priori guidance for the selection of the regularization term. When one chooses $\lambda=\calO(n^{-\frac{2s}{d+2s}}\log^{3}n)$, the rate of the deep Sobolev regressor $\mathcal{O}(n^{-\frac{2s}{d+2s}}\log^{3}n)$ aligns with the minimax optimal rate up to a log-factor, as established in~\cite{Stone1982optimal,Yang1999Information,gyorfi2002distribution,Tsybakov2009Introduction}. Additionally, our theoretical findings correspond to those in nonparametric regression using deep neural networks~\cite{bauer2019deep,nakada2020Adaptive,schmidt2020nonparametric,kohler2021rate,Farrell2021Deep,Kohler2022Estimation,Jiao2023deep}. In contrast to standard empirical risk minimizers, the deep Sobolev regressor imposes a constraint on the gradient norm while simultaneously ensuring the minimax optimal convergence rate. Consequently, Sobolev regularization improves the stability and enhances the generalization abilities of deep neural networks.

\par A similar problem has been explored by researchers within the context of splines~\cite{Kohler2001Nonparametric,Kohler2002Application,gyorfi2002distribution}, where the objective functional aligns with that of the deep Sobolev regressor~\eqref{eq:empirical:risk:regularization}. However, in these studies, minimization was token over the Sobolev space $H^{1}(\Omega)$ or the continuous function space $C^{1}(\Omega)$ instead of a deep neural network class. The consistency in this setting was studied by~\cite{Kohler2001Nonparametric}, and the convergence rate was proven to be minimax optimal by~\cite{Kohler2002Application} or~\cite[Theorem 21.2]{gyorfi2002distribution}. It is worth noting that the rate analysis in these studies relies heavily on the theoretical properties of the spline space and cannot be generalized to our setting.

\section{Simultaneous Estimation of Regression Function and its Derivative}\label{section:analysis:derivative}

\par In this section, we demonstrate that under certain mild conditions, deep Sobolev regressors converge to the regression function in both the $L^{2}(\mu_{X})$-norm and the $H^{1}(\nu_{X})$-semi-norm. We establish rigorous convergence rates for both the deep Sobolev regressor and its semi-supervised counterpart. Additionally, we provide a priori guidance for selecting the regularization parameter and determining the appropriate size of neural networks.

\par To begin with, we define the convex-hull of the neural network class $\calF$, denoted as  $\conv(\calF)$. Subsequently, we proceed to redefine both the deep Sobolev regressor~\eqref{eq:empirical:risk:regularization} and its semi-supervised counterpart~\eqref{eq:empirical:risk:regularization:data} as
\begin{equation}\label{eq:empirical:risk:convex}
\what{f}_{\euD}^{\lambda}\in\argmin_{f\in\conv(\calF)}\what{L}_{\euD}^{\lambda}(f), \quad
\what{f}_{\euD,\euS}^{\lambda}\in\argmin_{f\in\conv(\calF)}\what{L}_{\euD,\euS}^{\lambda}(f).
\end{equation}
Notice that the functions within the convex-hull $\conv(\calF)$ are also deep neural networks, which can be implemented by the parallelization of neural networks~\cite{Hashem1997Optimal,Guhring2020Error}. Therefore, in the algorithmic implementation, solving~\eqref{eq:empirical:risk:convex} will only result in mere changes compared to solving in the original problem~\eqref{eq:empirical:risk:regularization} or~\eqref{eq:empirical:risk:regularization:data}.

\par Throughout this section, suppose the following assumptions are fulfilled.

\begin{assumption}[Regularity of regression function]\label{assumption:regularity:f0}
The regression function in~\eqref{eq:nonparametric:regression} satisfies $\Delta f_{0}\in L^{2}(\nu_{X})$ and $\nabla f_{0}\cdot\vn=0$ a.e. on $\partial\Omega$, where $\vn$ is the unit normal to the boundary.
\end{assumption}

\par Since there are no measurements available on the boundary $\partial\Omega$ or out of the domain $\Omega$, it is not possible to estimate the derivatives on the boundary accurately. Hence, to simplify the problem without loss of generality, we assume that the underlying regression $f_{0}$ has zero normal derivative on the boundary, as stated in Assumption~\ref{assumption:regularity:f0}. This assumption corresponds to the homogeneous Neumann boundary condition in the context of partial differential equations~\cite{evans2010partial}.

\par We also make the following assumption regarding the regularity of the density function.

\begin{assumption}[Bounded score function]\label{assumption:bounded:score}
The score function of the probability measure $\nu_{X}$ is bounded in $L^{2}(\nu_{X})$-norm, that is, $\|\nabla(\log q)\|_{L^{2}(\nu_{X})}<\infty$.
\end{assumption}

A sufficient condition for Assumption~\ref{assumption:bounded:score} is that $\nabla q$ is uniformly upper bounded and $q$ has a uniform positive lower bound. In fact, this stronger assumption is mild and standard for a distribution $\nu_{X}$.

\par In the following lemma, we show that the population Sobolev penalized risk minimizer $f^{\lambda}$ converges to the regression function $f_{0}$ in $L^{2}(\mu_{X})$-norm with rate $\calO(\lambda^{2})$. Additionally, the $L^{2}(\nu_{X})$-rate of its derivatives is $\calO(\lambda)$.

\begin{lemma}\label{lemma:rate:population:minimizer}
Suppose Assumptions~\ref{assumption:bounded:density:ratio},~\ref{assumption:regularity:f0} and~\ref{assumption:bounded:score} hold. Let $f^{\lambda}$ be the unique minimizer of the population risk~\eqref{eq:population:risk:regularization}. Then it follows that for each $\lambda>0$
\begin{align*}
&\|f^{\lambda}-f_{0}\|_{L^{2}(\mu_{X})}^{2} \\
&\lesssim\lambda^{2}\kappa\Big\{\|\Delta f_{0}\|_{L^{2}(\nu_{X})}^{2}+\|\nabla f_{0}\cdot\nabla(\log q)\|_{L^{2}(\nu_{X})}^{2}\Big\},  \\
&\|\nabla(f^{\lambda}-f_{0})\|_{L^{2}(\nu_{X})}^{2} \\
&\lesssim\lambda\kappa\Big\{\|\Delta f_{0}\|_{L^{2}(\nu_{X})}^{2}+\|\nabla f_{0}\cdot\nabla(\log q)\|_{L^{2}(\nu_{X})}^{2}\Big\}.
\end{align*}
\end{lemma}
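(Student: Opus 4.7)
The plan is to exploit the first-order optimality condition for $f^{\lambda}$ and then use Assumption \ref{assumption:regularity:f0} to shift all regularity burden onto the known function $f_{0}$ via integration by parts, avoiding any need to apply $\Delta$ to $f^{\lambda}$. Since $f^{\lambda}$ minimizes the convex quadratic functional $L^{\lambda}$ over $H^{1}(\nu_{X})$, setting the first variation to zero gives
\[
(f^{\lambda}-f_{0},v)_{L^{2}(\mu_{X})} + \lambda\,(\nabla f^{\lambda},\nabla v)_{L^{2}(\nu_{X})} = 0 \qquad \text{for all } v\in H^{1}(\nu_{X}).
\]
Setting $v = e := f^{\lambda}-f_{0}$ and splitting $\nabla f^{\lambda} = \nabla e + \nabla f_{0}$, this rearranges to the energy identity
\[
\|e\|_{L^{2}(\mu_{X})}^{2} + \lambda\|\nabla e\|_{L^{2}(\nu_{X})}^{2} = -\lambda\,(\nabla f_{0},\nabla e)_{L^{2}(\nu_{X})}.
\]

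Next, I would integrate by parts on the right-hand side against the density $q$ of $\nu_{X}$. Using the divergence theorem and the expansion $\nabla\cdot(q\nabla f_{0}) = q\bigl[\Delta f_{0} + \nabla(\log q)\cdot\nabla f_{0}\bigr]$,
\[
\int_{\Omega}\nabla f_{0}\cdot\nabla e\, q\,dx = -\int_{\Omega}\bigl[\Delta f_{0} + \nabla(\log q)\cdot\nabla f_{0}\bigr]\,e\,q\,dx + \int_{\partial\Omega}e\,q\,\nabla f_{0}\cdot\bm{n}\,dS.
\]
The boundary integral vanishes by the Neumann hypothesis $\nabla f_{0}\cdot\bm{n}=0$ of Assumption \ref{assumption:regularity:f0}, so the right-hand side of the energy identity becomes $\lambda\bigl(\Delta f_{0} + \nabla(\log q)\cdot\nabla f_{0},\,e\bigr)_{L^{2}(\nu_{X})}$. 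This is exactly where the $L^{2}(\nu_{X})$-boundedness of the score function (Assumption \ref{assumption:bounded:score}) and the assumed regularity $\Delta f_{0}\in L^{2}(\nu_{X})$ enter, producing a finite source term $G := \|\Delta f_{0}\|_{L^{2}(\nu_{X})} + \|\nabla f_{0}\cdot\nabla(\log q)\|_{L^{2}(\nu_{X})}$.

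Finally, I would chain Cauchy--Schwarz with the bounded-density-ratio inequality $\|e\|_{L^{2}(\nu_{X})} \leq \sqrt{\kappa}\,\|e\|_{L^{2}(\mu_{X})}$ provided by Assumption \ref{assumption:bounded:density:ratio} and apply Young's inequality $ab \leq \tfrac{1}{2}a^{2} + \tfrac{1}{2}b^{2}$ with $a = \|e\|_{L^{2}(\mu_{X})}$ and $b = \lambda\sqrt{\kappa}\,G$, giving
\[
\|e\|_{L^{2}(\mu_{X})}^{2} + \lambda\|\nabla e\|_{L^{2}(\nu_{X})}^{2} \leq \tfrac{1}{2}\|e\|_{L^{2}(\mu_{X})}^{2} + \tfrac{1}{2}\lambda^{2}\kappa\,G^{2}.
\]
Absorbing $\tfrac{1}{2}\|e\|_{L^{2}(\mu_{X})}^{2}$ into the left-hand side and bounding $G^{2}\leq 2(\|\Delta f_{0}\|_{L^{2}(\nu_{X})}^{2} + \|\nabla f_{0}\cdot\nabla(\log q)\|_{L^{2}(\nu_{X})}^{2})$ delivers both claims simultaneously, the $L^{2}(\mu_{X})$-bound being quadratic and the $H^{1}(\nu_{X})$-seminorm bound being linear in $\lambda$.

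The main obstacle is really a bookkeeping one: carefully justifying the integration by parts. Lemma \ref{lemma:existence:prm} already places $f^{\lambda}$ in $H^{2}(\nu_{X})$, so $e$ lies in a class where the divergence theorem is standard; combined with $\zeta>0$ (Assumption \ref{assumption:bounded:density:ratio}), which keeps $q$ bounded away from zero on $\Omega$, this legitimises the pointwise identity $\nabla(\log q) = \nabla q/q$ and ensures $q\nabla f_{0}$ has enough regularity for the boundary integral to be well defined. Beyond this verification, the argument is a compact energy estimate.
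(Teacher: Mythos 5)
Your proof is correct and follows essentially the same route as the paper's: starting from the Euler--Lagrange identity for $f^{\lambda}$, transferring derivatives onto $f_{0}$ via Green's formula in the $\nu_{X}$-weighted inner product (with the boundary term killed by the Neumann condition), and then applying Cauchy--Schwarz together with the density-ratio bound. The paper's Lemma \ref{lemma:Green} packages exactly your integration-by-parts identity, and the only cosmetic difference is in the final step, where you invoke Young's inequality while the paper first cancels one factor of $\|f^{\lambda}-f_{0}\|_{L^{2}(\mu_{X})}$ and then substitutes back; both yield the claimed $\lambda^{2}$ and $\lambda$ rates.
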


\par Up to now, we have shown the convergence of the population Sobolev penalized risk minimizer. However, researchers are primarily  concerned with convergence rates of the empirical estimators obtained via a finite number of labeled data pairs $\euD=\{(X_{i},Y_{i})\}_{i=1}^{n}$. In the remaining part of this section, we mainly focus on the convergence rate analysis for the deep Sobolev regressor and its semi-supervised counterpart in~\eqref{eq:empirical:risk:convex}.

\subsection{Analysis for Deep Sobolev Regressor}

\par The theoretical foundation for simultaneous estimation of the regression function and its gradient is the following oracle-type inequality.

\begin{lemma}[Oracle inequality]\label{lemma:oracle:erm}
Suppose Assumptions~\ref{assumption:bounded:density:ratio} to~\ref{assumption:bounded:score} hold. Let $\what{f}_{\euD}^{\lambda}$ be the deep Sobolev regressor defined as~\eqref{eq:empirical:risk:convex}. Then it follows that for each $\lambda>0$ and each $n\geq\log N(B_{0}\delta,\calF,L^{2}(\euD))$,
\begin{align*}
&\bbE_{\euD\sim\mu^{n}}\Big[\|\what{f}_{\euD}^{\lambda}-f_{0}\|_{L^{2}(\mu_{X})}^{2}\Big] \\
&\lesssim\beta\lambda^{2}+\varepsilon_{\app}(\calF,\lambda)+\varepsilon_{\gen}(\calF,n), \\
&\bbE_{\euD\sim\mu^{n}}\Big[\|\nabla(\what{f}_{\euD}^{\lambda}-f_{0})\|_{L^{2}(\nu_{X})}^{2}\Big] \\
&\lesssim\beta\lambda+\lambda^{-1}\varepsilon_{\app}(\calF,\lambda)+\lambda^{-1}\varepsilon_{\gen}(\calF,n),
\end{align*}
where $\beta$ is a positive constant defined as
\begin{equation*}
\beta=\kappa\Big\{\|\Delta f_{0}\|_{L^{2}(\nu_{X})}^{2}+\|\nabla f_{0}\cdot\nabla(\log q)\|_{L^{2}(\nu_{X})}^{2}\Big\},
\end{equation*}
the approximation error $\varepsilon_{\app}(\calF,\lambda)$ and the generalization error $\varepsilon_{\gen}(\calF,n)$ are defined, respectively, as
\begin{align*}
&\varepsilon_{\app}(\calF,\lambda) \\
&=\inf_{f\in\calF}\Big\{\|f-f_{0}\|_{L^{2}(\mu_{X})}^{2}+\lambda\|\nabla(f-f_{0})\|_{L^{2}(\nu_{X})}^{2}\Big\}, \\ 
&\varepsilon_{\gen}(\calF,n) \\
&=\frac{B_{0}^{2}+\sigma^{2}}{\log^{-1}n}\inf_{\delta>0}\Big\{\Big(\frac{2\log N(B_{0}\delta,\calF,L^{2}(\euD))}{n}\Big)^{\frac{1}{2}}+\delta\Big\}.
\end{align*}
\end{lemma}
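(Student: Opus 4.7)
The plan is to compare the empirical minimizer $\what{f}_{\euD}^{\lambda}$ with the population minimizer $f^{\lambda}$ guaranteed by Lemma~\ref{lemma:existence:prm}, and then bridge to the target $f_{0}$ via Lemma~\ref{lemma:rate:population:minimizer}. The first step is to exploit the strong convexity of $L^{\lambda}$ at $f^{\lambda}$: because $L^{\lambda}$ is a quadratic functional on $\calA$ whose ``Hessian'' is $\|\cdot\|_{L^{2}(\mu_{X})}^{2}+\lambda\|\nabla\cdot\|_{L^{2}(\nu_{X})}^{2}$, and $f^{\lambda}$ is the unconstrained minimizer in $\calA$, the first-order optimality condition supplies the exact Bregman-type identity
\begin{equation*}
L^{\lambda}(f)-L^{\lambda}(f^{\lambda})=\|f-f^{\lambda}\|_{L^{2}(\mu_{X})}^{2}+\lambda\|\nabla(f-f^{\lambda})\|_{L^{2}(\nu_{X})}^{2}
\end{equation*}
for every admissible $f$. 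Since $\conv(\calF)\subseteq\calA$, this applies at $f=\what{f}_{\euD}^{\lambda}$.

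Next I would perform the standard four-term decomposition: for any deterministic $f_{\calF}\in\conv(\calF)$,
\begin{equation*}
L^{\lambda}(\what{f}_{\euD}^{\lambda})-L^{\lambda}(f^{\lambda})=\mathrm{(I)}+\mathrm{(II)}+\mathrm{(III)}+\mathrm{(IV)},
\end{equation*}
with $\mathrm{(I)}=L^{\lambda}(\what{f}_{\euD}^{\lambda})-\what{L}_{\euD}^{\lambda}(\what{f}_{\euD}^{\lambda})$, $\mathrm{(II)}=\what{L}_{\euD}^{\lambda}(\what{f}_{\euD}^{\lambda})-\what{L}_{\euD}^{\lambda}(f_{\calF})$, $\mathrm{(III)}=\what{L}_{\euD}^{\lambda}(f_{\calF})-L^{\lambda}(f_{\calF})$, and $\mathrm{(IV)}=L^{\lambda}(f_{\calF})-L^{\lambda}(f^{\lambda})$. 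Term $\mathrm{(II)}$ is nonpositive by the definition of $\what{f}_{\euD}^{\lambda}$. Term $\mathrm{(III)}$ has mean zero, because the population regularizer $\lambda\|\nabla f\|_{L^{2}(\nu_{X})}^{2}$ is identical in $\what{L}_{\euD}^{\lambda}$ and $L^{\lambda}$ and cancels, leaving the unbiased empirical squared loss. For term $\mathrm{(IV)}$, I would apply the Step~1 identity to $f_{\calF}$, split $f_{\calF}-f^{\lambda}=(f_{\calF}-f_{0})+(f_{0}-f^{\lambda})$, and invoke Lemma~\ref{lemma:rate:population:minimizer} (which gives $\|f^{\lambda}-f_{0}\|_{L^{2}(\mu_{X})}^{2}+\lambda\|\nabla(f^{\lambda}-f_{0})\|_{L^{2}(\nu_{X})}^{2}\lesssim\beta\lambda^{2}$); taking the infimum over $f_{\calF}\in\calF\subseteq\conv(\calF)$ bounds $\mathrm{(IV)}$ by $\varepsilon_{\app}(\calF,\lambda)+\beta\lambda^{2}$.

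For term $\mathrm{(I)}$, the core quantity is $\bbE\sup_{f\in\conv(\calF)}(L^{\lambda}(f)-\what{L}_{\euD}^{\lambda}(f))$, which reduces to a uniform deviation of the empirical squared loss once the population regularizer cancels. Under Assumptions~\ref{assumption:subGaussian} and~\ref{assumption:boundedness:f}, a localized/offset Rademacher argument (chaining on the $L^{2}$-covering numbers with a sub-Gaussian tail bound and peeling over excess-risk levels) delivers a bound of the form $\varepsilon_{\gen}(\calF,n)$; the $\inf_{\delta>0}$ shape in its definition is precisely the classical offset trade-off between an entropy-integral contribution $(B_{0}^{2}+\sigma^{2})\vcdim(\calF)/(n\delta)$ and a linearization contribution $(B_{0}+\sigma)\delta$. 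Assembling the four terms and taking expectation gives
\begin{equation*}
\bbE\|\what{f}_{\euD}^{\lambda}-f^{\lambda}\|_{L^{2}(\mu_{X})}^{2}+\lambda\bbE\|\nabla(\what{f}_{\euD}^{\lambda}-f^{\lambda})\|_{L^{2}(\nu_{X})}^{2}\lesssim\beta\lambda^{2}+\varepsilon_{\app}(\calF,\lambda)+\varepsilon_{\gen}(\calF,n),
\end{equation*}
and the two stated bounds follow by dropping the $\lambda$ factor (for the $L^{2}(\mu_{X})$ error), dividing by $\lambda$ (for the gradient error), and applying a final triangle inequality together with Lemma~\ref{lemma:rate:population:minimizer} to pass from $f^{\lambda}$ back to $f_{0}$.

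The main obstacle I anticipate is the generalization step: the hypothesis class is $\conv(\calF)$, whose VC dimension may genuinely inflate, so the complexity control must be executed with an instrument that is stable under convex hulls (Rademacher complexity, for which $\calR(\conv(\calF))=\calR(\calF)$, or $L^{2}$-covering numbers), and only afterwards translated back to $\vcdim(\calF)$. Extracting the precise $\varepsilon_{\gen}(\calF,n)$ shape, in particular the free $\delta$ parameter that interpolates between fast and slow rates, is what requires the peeling-plus-sub-Gaussian argument; the remainder of the proof is a clean combination of the first-order optimality identity with the population-minimizer rates already established.
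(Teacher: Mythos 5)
Your proposal is correct and takes a genuinely different route from the paper. You anchor the argument at the \emph{population} minimizer $f^{\lambda}$ via the exact Bregman identity $L^{\lambda}(f)-L^{\lambda}(f^{\lambda})=\|f-f^{\lambda}\|_{L^{2}(\mu_{X})}^{2}+\lambda\|\nabla(f-f^{\lambda})\|_{L^{2}(\nu_{X})}^{2}$ (which does follow from the variational characterization in Lemma~\ref{lemma:existence:prm}), run a four-term decomposition in which (II) is nonpositive and (III) is mean-zero, bound (IV) by $\varepsilon_{\app}+\beta\lambda^{2}$ using Lemma~\ref{lemma:rate:population:minimizer}, and return to $f_{0}$ by a final triangle inequality. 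The paper never passes through $f^{\lambda}$: its Step~(III) applies the first-order variational inequality at the \emph{empirical} minimizer $\what{f}_{\euD}^{\lambda}$ inside $\conv(\calF)$, hits the resulting $\nabla f_{0}$ cross term with Green's formula (Lemma~\ref{lemma:Green}) and Assumption~\ref{assumption:regularity:f0}, then closes via a self-bounding absorption (a $\tfrac{1}{8}R(\what{f}_{\euD}^{\lambda})$ fraction moved to the left), an offset Rademacher bound on $R-2\what{R}_{\euD}$ (Step~(I)), and a separate treatment of the noise cross term (Step~(II)). Your route is conceptually cleaner (no absorption bookkeeping, the role of $f^{\lambda}$ is explicit), but your term-(I) step is imprecisely described: you call it an offset/localized argument, yet $\bbE\sup_{f\in\conv(\calF)}\bigl(L^{\lambda}(f)-\what{L}_{\euD}^{\lambda}(f)\bigr)$ is the \emph{unoffset} uniform deviation. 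To make it rigorous you should split off the cross term $\tfrac{2}{n}\sum_{i}\xi_{i}(f(X_{i})-f_{0}(X_{i}))$, bound the bounded-loss part by symmetrization--contraction, and bound the sub-Gaussian part by a Dudley-type integral in the $L^{2}(\euD)$ covering radius; the $\inf_{\delta>0}$ shape of $\varepsilon_{\gen}(\calF,n)$ then comes from the polynomial-in-$\delta^{-1}$ entropy of the convex hull (Lemma~\ref{lemma:cn:convex:hull}), not from peeling over excess-risk levels. With that repair the unoffset route still lands on $\varepsilon_{\gen}(\calF,n)$ up to constants, which is fine here precisely because the convex hull forces a slow rate regardless; the paper's offset machinery would only pay off if the hypothesis class were $\calF$ itself rather than $\conv(\calF)$.
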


\par As discussed in Section~\ref{section:analysis:regression},~\cite[Chapter 21]{gyorfi2002distribution} has investigated an optimization problem similar to the deep Sobolev regressor. However, to the best of our knowledge, we are the first to demonstrate the oracle inequality for the gradient of estimator. The proof employs a similar technique as that of Lemma~\ref{lemma:rate:population:minimizer}. Specifically, the deep Sobolev regressor acts as the minimizer of~\eqref{eq:empirical:risk:convex}, which implies that it satisfies a variational inequality derived from the first-order optimality condition~\cite{Hinze2009Optimization,troltzsch2010optimal}. By utilizing standard techniques from statistical learning theory, we are able to derive the desired oracle inequality.

\par In simple terms, if we select an appropriate neural network class and have a sufficiently large number of labeled data pairs, we can make the approximation error and generalization error arbitrarily small. Consequently, the overall error is primarily determined by the regularization parameter $\lambda$. At this point, the error bound aligns with rates in Lemma~\ref{lemma:rate:population:minimizer}.

\par Recall the oracle inequality derived in Lemma~\ref{lemma:oracle:erm:regression}, which requires the neural network to approximate the regression function while restricting its gradient norm. In contrast, the approximation term in Lemma~\ref{lemma:oracle:erm} necessitates the neural network to approximate both the regression function and its derivatives simultaneously. Thus, we now introduce the following approximation error bound in $H^1$-norm.

\begin{lemma}[Approximation in $H^{1}$-norm]\label{lemma:approximation:H1}
Let $\Omega\subseteq K\subseteq\bbR^{d}$ be two bounded domain. Set the hypothesis class as a deep ReQU neural network $\calF=\calN(L,W,S)$ with $L=\calO(\log N)$ and $S=\calO(N^{d})$. Then for each $\phi\in C^{s}(K)$ with $s\in\bbN_{\geq2}$, there exists $f\in\calF$ such that
\begin{align*}
\|f-\phi\|_{L^{2}(\mu_{X})}&\leq CN^{-s}\|\phi\|_{C^{s}(K)}, \\
\|\nabla(f-\phi)\|_{L^{2}(\nu_{X})}&\leq CN^{-(s-1)}\|\phi\|_{C^{s}(K)},
\end{align*}
where $C$ is a constant independent of $N$.
\end{lemma}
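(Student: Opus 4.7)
The plan is to use a classical piecewise polynomial approximation combined with an exact ReQU realization. I would cover a bounded box containing $K$ by a uniform grid of spacing $h = 1/N$, producing $\mathcal{O}(N^d)$ cells. Around each cell center $x_j$ I take the Taylor polynomial $P_j$ of $\phi$ of degree $s-1$, and let $\{\psi_j\}$ be a smooth $C^1$ partition of unity subordinate to the grid with $\supp(\psi_j) \subseteq B(x_j, 2h)$, $0 \leq \psi_j \leq 1$, $\sum_j \psi_j \equiv 1$ on $K$, and $\|\psi_j\|_{W^{k,\infty}} \leq C h^{-k}$ for $k=0,1$. The approximant is $\tilde{\phi} = \sum_j \psi_j P_j$.

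The error analysis follows a Bramble--Hilbert style argument. Taylor's theorem gives the local bounds $\|P_j - \phi\|_{L^\infty(B(x_j,2h))} \leq C h^s \|\phi\|_{C^s(K)}$ and $\|\nabla(P_j - \phi)\|_{L^\infty(B(x_j,2h))} \leq C h^{s-1} \|\phi\|_{C^s(K)}$. Using $\sum_j \psi_j \equiv 1$, hence $\sum_j \nabla \psi_j \equiv 0$ on $K$, I would decompose
\[
\nabla(\tilde{\phi} - \phi) = \sum_j \psi_j \nabla(P_j - \phi) + \sum_j \nabla \psi_j (P_j - \phi),
\]
and bound each piece by $\mathcal{O}(h^{s-1} \|\phi\|_{C^s(K)})$, exploiting that only $\mathcal{O}(1)$ patches overlap at each point and that $\|\nabla \psi_j\|_\infty \cdot \|P_j - \phi\|_\infty = \mathcal{O}(h^{-1} \cdot h^s) = \mathcal{O}(h^{s-1})$. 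A similar computation yields $\|\tilde{\phi} - \phi\|_{L^\infty(K)} \leq C h^s \|\phi\|_{C^s(K)}$. Since $\mu_X$ and $\nu_X$ are probability measures supported on $\Omega \subseteq K$, integrating these pointwise bounds delivers the claimed $L^2(\mu_X)$ and $L^2(\nu_X)$ rates $N^{-s}$ and $N^{-(s-1)}$, respectively.

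The remaining step is to realize $\tilde{\phi}$ exactly inside $\calF = \calN(L, S)$. Because $\varrho(x) + \varrho(-x) = x^2$, ReQU networks implement multiplication exactly via $xy = \tfrac{1}{4}[(x+y)^2 - (x-y)^2]$, so any polynomial of fixed degree in $d$ variables is realized exactly with depth $\mathcal{O}(1)$ and constant size. A $C^1$ tensor-product partition of unity built from cubic-B-spline-type bumps is itself a piecewise polynomial and is exactly representable by a ReQU network of bounded depth. Implementing each product $\psi_j \cdot P_j$ therefore uses $\mathcal{O}(1)$ parameters per cell, and summing over $\mathcal{O}(N^d)$ cells via a balanced binary tree of additions gives total size $S = \mathcal{O}(N^d)$ and depth $L = \mathcal{O}(\log N)$. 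These ReQU exact-representation facts are established in \citet{Li2019Better, Duan2022Convergence, Jiao2023Approximation}, which I would invoke as black boxes.

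The main obstacle is obtaining the $H^1$ rate rather than merely the $L^2$ rate. The partition-of-unity cancellation in the second step is essential, and on the network side one needs that ReQU is genuinely $C^1$ so that the derivative approximation of $\tilde{\phi}$ transfers to the derivative of the realizing network via the chain rule without any additional loss. This is a distinctive feature of ReQU compared with ReLU: as noted after Lemma~\ref{lemma:approximation:grad:bounded}, ReLU-based approximants typically have gradients blowing up as the approximation error diminishes, whereas the $C^1$ structure of ReQU permits simultaneous control of function value and derivative with the single budget $S = \mathcal{O}(N^d)$, $L = \mathcal{O}(\log N)$.
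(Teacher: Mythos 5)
Your construction is sound but follows a genuinely different route from the paper's. The paper proves this lemma in a single line by invoking its Lemma~\ref{lemma:approximation:requ}, whose proof (a) applies a classical result of Bagby, Bos and Levenberg to produce a \emph{single global} polynomial $p_N$ of degree at most $N$ satisfying $\|D^\gamma(\phi-p_N)\|_{L^\infty(K)}\lesssim N^{-(s-|\gamma|)}\|\phi\|_{C^s(K)}$ simultaneously for all $|\gamma|\le\min\{s,N\}$, and (b) cites \citet{Li2019Better} to realize $p_N$ \emph{exactly} as a ReQU network of depth $d\lfloor\log_2 N\rfloor+d$ and size $\calO(N^d)$; the $\log N$ depth comes from iterated-squaring representation of a degree-$N$ polynomial. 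You instead build a \emph{local} spline-type approximant: low-degree Taylor polynomials $P_j$ (degree $s-1$, fixed) glued by a $C^1$ partition of unity on a grid of $\calO(N^d)$ cells, with the $H^1$ rate secured by the finite-overlap property and $\|\nabla\psi_j\|_\infty=\calO(h^{-1})$. Both give the right budget; yours trades the black-box power of Bagby's theorem and high-degree polynomial realization for a more elementary, constructive Bramble--Hilbert argument where each local piece needs only $\calO(1)$ neurons and $\calO(1)$ depth. The only point that needs tightening in your write-up is the closing paragraph: since you (correctly) realize $\tilde\phi$ \emph{exactly} by a ReQU network, there is no further ``chain-rule transfer'' loss to worry about --- the derivative error is entirely that of $\tilde\phi-\phi$, exactly as in the paper's argument where the network equals $p_N$ identically. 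Also note that the cancellation $\sum_j\nabla\psi_j\equiv0$ you invoke is not actually needed for the estimate you then carry out; finite overlap plus the $h^{-1}$ derivative bound on each $\psi_j$ already gives the $h^{s-1}$ rate.
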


\par With the aid of previously prepared lemmas, we have following convergence rates for the deep Sobolev regressor.

\begin{theorem}[Convergence rates]\label{theorem:erm:rate}
Suppose Assumptions~\ref{assumption:bounded:density:ratio} to~\ref{assumption:bounded:score} hold. Let $\Omega\subseteq K\subseteq\bbR^{d}$ be two bounded domain. Assume that $f_{0}\in C^{s}(K)$ with $s\in\bbN_{\geq2}$. Set the hypothesis class as a deep ReQU neural network class $\calF=\calN(L,W,S)$ with $L=\calO(\log n)$ and $S=\calO(n^{\frac{d}{d+4s}})$. Let $\what{f}_{\euD}^{\lambda}$ be the deep Sobolev regressor defined in~\eqref{eq:empirical:risk:convex} with regularization parameter $\lambda>0$. Then it follows that
\begin{align*}
&\bbE_{\euD\sim\mu^{n}}\Big[\|\what{f}_{\euD}^{\lambda}-f_{0}\|_{L^{2}(\mu_{X})}^{2}\Big] \\
&\leq\calO(\lambda^{2})+\calO\Big(n^{-\frac{2s}{d+4s}}\log^{4}n\Big), \\
&\bbE_{\euD\sim\mu^{n}}\Big[\|\nabla(\what{f}_{\euD}^{\lambda}-f_{0})\|_{L^{2}(\nu_{X})}^{2}\Big] \\
&\leq\calO(\lambda)+\calO\Big(\lambda^{-1}n^{-\frac{2s}{d+4s}}\log^{4}n\Big).
\end{align*}
Further, setting $\lambda=\calO(n^{-\frac{s}{d+4s}}\log^{2}n)$ implies
\begin{align*}
\bbE_{\euD\sim\mu^{n}}\Big[\|\what{f}_{\euD}^{\lambda}-f_{0}\|_{L^{2}(\mu_{X})}^{2}\Big]&\leq\calO\Big(n^{-\frac{2s}{d+4s}}\log^{4}n\Big), \\
\bbE_{\euD\sim\mu^{n}}\Big[\|\nabla(\what{f}_{\euD}^{\lambda}-f_{0})\|_{L^{2}(\nu_{X})}^{2}\Big]
&\leq\calO\Big(n^{-\frac{s}{d+4s}}\log^{2}n\Big).
\end{align*}
Here the constant behind the big $\calO$ notation is independent of $n$.
\end{theorem}

\par  Theorem~\ref{theorem:erm:rate} provides theoretical guidance for the selection of the size of neural networks and the choice of regularization parameters. In comparison to the regularization parameter $\lambda=\calO(n^{-\frac{2s}{d+2s}}\log^{3}n)$ employed in Theorem~\ref{theorem:erm:rate:regression}, $\lambda=\calO(n^{-\frac{s}{d+4s}}\log^{2}n)$ utilized in Theorem~\ref{theorem:erm:rate} is much larger.
The $L^{2}(\mu_{X})$-rate $\calO(n^{-\frac{2s}{d+4s}})$ of the deep Sobolev regressor does not attain the minimax optimality. Furthermore, the convergence rate $\calO(n^{-\frac{s}{d+4s}}\log^{4}n)$ for the derivatives is also slower than the minimax optimal rate $\calO(n^{-\frac{2(s-1)}{d+2s}})$ derived in~\cite{Stone1982optimal}.
\subsection{Analysis for Semi-Supervised Deep Sobolev Regressor}

\par In scenarios where the distribution $\nu_{X}$ is unknown, estimating the Sobolev penalty using the unlabeled data becomes crucial. In qualitative terms, having a sufficiently large number of unlabeled data points allows us to estimate the regularization term with an arbitrarily small error. However, the following questions are not answered quantitatively:
\begin{quote}
\emph{\normalsize How does the error of the semi-supervised estimator depend on the number of unlabeled data? How does the unlabeled data in semi-supervised learning improve the standard supervised estimators?}
\end{quote}
In this section, we provide a comprehensive and rigorous analysis for the semi-supervised deep Sobolev regressor. To begin with, we present the following oracle inequality.

\begin{assumption}[Bounded derivatives of hypothesis]\label{assumption:boundedness:f:derivative}
There exists positive constants $\{B_{1,k}\}_{k=1}^{d}$, such that $\sup_{x\in\Omega}|D_{k}f_{0}(x)|\leq B_{1,k}$ for $1\leq k\leq d$. Further, the first-order partial derivatives of functions in hypothesis class $\calF$ are also bounded, i.e., $\sup_{x\in\Omega}|D_{k}f(x)|\leq B_{1,k}$ for each $1\leq k\leq d$ and $f\in\calF$. Denote by $B_{1}^{2}:=\sum_{k=1}^{d}B_{1,k}^{2}$
\end{assumption}

\par The inclusion of Assumption~\ref{assumption:boundedness:f:derivative} is essential in the analysis of generalization error that involves derivatives, as it plays a similar role to Assumption~\ref{assumption:boundedness:f} in the previous analysis.

\begin{lemma}[Oracle inequality]\label{lemma:oracle:erm:data}
Suppose Assumptions~\ref{assumption:bounded:density:ratio} to~\ref{assumption:boundedness:f:derivative} hold. Let $\what{f}_{\euD,\euS}^{\lambda}$ be the semi-supervised deep Sobolev regressor defined in~\eqref{eq:empirical:risk:convex}. For each $\lambda>0$, $n\geq\log N(B_{0}\delta,\calF,L^{2}(\euD))$ and $m\geq\max_{1\leq k\leq d}\log N(B_{1,k}\delta,D_{k}\calF,L^{2}(\euS))$, 
\begin{align*}
&\bbE_{(\euD,\euS)\sim\mu^{n}\times\nu_{X}^{m}}\Big[\|\what{f}_{\euD,\euS}^{\lambda}-f_{0}\|_{L^{2}(\mu_{X})}^{2}\Big] \\
&\lesssim\tilde{\beta}\lambda^{2}+\varepsilon_{\app}(\calF,\lambda) \\
&\quad+\varepsilon_{\gen}(\calF,n)+\varepsilon_{\gen}^{\reg}(\nabla\calF,m), \\
&\bbE_{(\euD,\euS)\sim\mu^{n}\times\nu_{X}^{m}}\Big[\|\nabla(\what{f}_{\euD,\euS}^{\lambda}-f_{0})\|_{L^{2}(\nu_{X})}^{2}\Big] \\
&\lesssim\tilde{\beta}\lambda+\lambda^{-1}\varepsilon_{\app}(\calF,\lambda) \\
&\quad+\lambda^{-1}\varepsilon_{\gen}(\calF,n)+\lambda^{-1}\varepsilon_{\gen}^{\reg}(\nabla\calF,m),
\end{align*}
where $\tilde{\beta}$ is a positive constant defined as $\tilde{\beta}=\beta+B_{1}^{2}$, the approximation error $\varepsilon_{\app}(\calF,\lambda)$ and the generalization error $\varepsilon_{\gen}(\calF,n)$ are defined as those in Lemma~\ref{lemma:oracle:erm}. The generalization error $\varepsilon_{\gen}^{\reg}(\nabla\calF,m)$ corresponding to the regularization term are defined as
\begin{equation*}
\varepsilon_{\gen}^{\reg}(\nabla\calF,m)=B_{1}^{2}\inf_{\delta>0}\Big\{\max_{1\leq k\leq d}\frac{N(B_{1,k}\delta,D_{k}\calF,L^{2}(\euS))}{m}+\delta\Big\}.
\end{equation*}
\end{lemma}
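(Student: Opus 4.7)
The plan is to extend the proof of Lemma \ref{lemma:oracle:erm} by introducing an additional uniform-concentration step that handles the Sobolev penalty built from the unlabeled sample $\euS$. The framework otherwise carries over verbatim from the fully-supervised case: I will combine the basic optimality inequality, the first-order optimality identity at the population risk minimizer $f^\lambda$, an approximation bound, and the population-minimizer error estimate from Lemma \ref{lemma:rate:population:minimizer} to control both the $L^2(\mu_X)$ risk and the $L^2(\nu_X)$ gradient risk of $\what{f}_{\euD,\euS}^\lambda$.

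Starting from the basic inequality $\what{L}_{\euD,\euS}^\lambda(\what{f}_{\euD,\euS}^\lambda) \leq \what{L}_{\euD,\euS}^\lambda(f^*)$ for an arbitrary comparator $f^* \in \conv(\calF)$, I write $\what{L}_{\euD,\euS}^\lambda - L^\lambda = \Delta_1 + \lambda\Delta_2$, where $\Delta_1(f) = \frac{1}{n}\sum_i (f(X_i)-Y_i)^2 - \bbE[(f(X)-Y)^2]$ is the supervised empirical process and $\Delta_2(f) = \frac{1}{m}\sum_i \sum_k |D_k f(Z_i)|^2 - \|\nabla f\|_{L^2(\nu_X)}^2$ is the new unlabeled-data process. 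The first-order optimality of $f^\lambda$ supplies the quadratic identity $L^\lambda(f) - L^\lambda(f^\lambda) = \|f-f^\lambda\|_{L^2(\mu_X)}^2 + \lambda\|\nabla(f-f^\lambda)\|_{L^2(\nu_X)}^2$, and substituting it into the basic inequality yields
\begin{equation*}
\|\what{f}_{\euD,\euS}^\lambda - f^\lambda\|_{L^2(\mu_X)}^2 + \lambda\|\nabla(\what{f}_{\euD,\euS}^\lambda - f^\lambda)\|_{L^2(\nu_X)}^2 \leq [L^\lambda(f^*) - L^\lambda(f^\lambda)] + (\Delta_1 + \lambda\Delta_2)(f^* - \what{f}_{\euD,\euS}^\lambda).
\end{equation*}
Choosing $f^*$ as a near-optimal joint approximator to $f_0$ in $\calF$ and invoking Lemma \ref{lemma:rate:population:minimizer} bound the first bracket by $\lesssim \varepsilon_{\app}(\calF,\lambda) + \beta\lambda^2$.

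The supervised deviation $\bbE[\sup_f |\Delta_1(f)|]$ is treated exactly as in Lemma \ref{lemma:oracle:erm}, via Bernstein-type peeling with Assumptions \ref{assumption:subGaussian}--\ref{assumption:boundedness:f}, producing $\varepsilon_{\gen}(\calF,n)$. For the new unlabeled deviation $\bbE[\sup_f |\Delta_2(f)|]$, Assumption \ref{assumption:boundedness:f:derivative} furnishes uniform envelopes $|D_k f|^2 \leq B_{1,k}^2$ on each of the $d$ coordinate classes, whose VC-dimensions (and, after convex-hull closure, up to absolute constants) are bounded by $\vcdim(\nabla\calF)$. A Bernstein-type peeling argument with variance estimate $\var[|D_k f(Z)|^2] \leq B_{1,k}^2 \|D_k f\|_{L^2(\nu_X)}^2$ then yields a bound of the $\inf_\delta$ form that defines $\varepsilon_{\gen}^{\reg}(\nabla\calF,m)$. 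Finally, the triangle inequality together with the population-minimizer rates from Lemma \ref{lemma:rate:population:minimizer} transfers the bounds from $\what{f}_{\euD,\euS}^\lambda - f^\lambda$ to $\what{f}_{\euD,\euS}^\lambda - f_0$, producing the $\tilde{\beta}\lambda^2$ and $\tilde{\beta}\lambda$ offsets; the extra $B_1^2$ inside $\tilde{\beta} = \beta + B_1^2$ (absent in Lemma \ref{lemma:oracle:erm}) arises when bounding $\lambda\|\nabla f^*\|_{L^2(\nu_X)}^2 \leq \lambda B_1^2$ and similarly for $f^\lambda$, which appears as a variance proxy for the Sobolev-penalty empirical process.

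The main obstacle lies in the uniform concentration step for the squared-gradient class over the convex hull: one must verify that convex-hull closure, squaring, and summation across $d$ coordinates inflate neither the uniform envelope nor the VC-type complexity beyond $\vcdim(\nabla\calF)$ up to absolute and logarithmic factors, and that the Bernstein-type localization still delivers the precise $\inf_\delta$ shape appearing in $\varepsilon_{\gen}^{\reg}$. A secondary subtlety is that the single identity above must simultaneously produce an $L^2(\mu_X)$ and an $H^1(\nu_X)$ bound; the gradient bound follows by transferring the $\lambda\|\nabla(\cdot)\|_{L^2(\nu_X)}^2$ term to the left-hand side and dividing by $\lambda$, which is precisely what introduces the $\lambda^{-1}$ prefactors on $\varepsilon_{\app}$, $\varepsilon_{\gen}$, and $\varepsilon_{\gen}^{\reg}$ in the stated bound.
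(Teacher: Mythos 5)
Your proposal is correct in spirit but takes a genuinely different route from the paper's proof, and the difference is worth noting. The paper's proof does \emph{not} go through the population minimizer $f^{\lambda}$ as an anchor. Instead, it works directly from the variational (first-order optimality) inequality of the empirical minimizer $\what{f}_{\euD,\euS}^{\lambda}$ over $\conv(\calF)$, applies Green's formula (Lemma \ref{lemma:Green}) to the cross term $\lambda(\nabla f_0,\nabla(\what{f}_{\euD,\euS}^{\lambda}-f))_{L^{2}(\euS)}$ after replacing the empirical inner product by its population counterpart via Rademacher symmetrization and Ledoux--Talagrand contraction, uses the offset-symmetrization technique for the main least-squares process, and finishes with a bootstrap step that absorbs $\bbE[R(\what{f})]$ into the left-hand side. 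Your route is to start from the basic inequality $\what{L}_{\euD,\euS}^{\lambda}(\what{f})\leq\what{L}_{\euD,\euS}^{\lambda}(f^{*})$, convert it via the strong-convexity identity $L^{\lambda}(f)-L^{\lambda}(f^{\lambda})=\|f-f^{\lambda}\|_{L^{2}(\mu_{X})}^{2}+\lambda\|\nabla(f-f^{\lambda})\|_{L^{2}(\nu_{X})}^{2}$ into a direct bound on $\|\what{f}-f^{\lambda}\|^{2}+\lambda\|\nabla(\what{f}-f^{\lambda})\|^{2}$, control the deviation terms $\Delta_1,\Delta_2$ uniformly over $\conv(\calF)$, and then transfer to $f_{0}$ via Lemma \ref{lemma:rate:population:minimizer}. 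This is conceptually cleaner --- the quadratic identity packages the whole variational structure and removes the need for a bootstrap --- but it pushes the difficulty into the uniform empirical-process bounds $\bbE[\sup_{f}|\Delta_1(f)|]$ and $\bbE[\sup_{f}|\Delta_2(f)|]$ over the convex hull, whereas the paper's offset technique gives self-normalized bounds that absorb $\what{R}_{\euD}(\what{f})$ terms naturally. In particular, to recover the $\inf_{\delta}$ shape of $\varepsilon_{\gen}^{\reg}(\nabla\calF,m)$ you would need the convex-hull covering bound of Lemma \ref{lemma:cn:convex:hull} together with the derivative VC-dimension bound Lemma \ref{lemma:vcdim:derivative} --- this is a real step, not a formality, and you correctly flag it as the main obstacle.

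One inaccuracy worth fixing: your account of where the extra $B_1^2$ in $\tilde{\beta}=\beta+B_1^2$ comes from is off. It does not arise from bounding $\lambda\|\nabla f^{*}\|_{L^{2}(\nu_{X})}^{2}\leq\lambda B_1^{2}$; in your decomposition the approximation term is $L^{\lambda}(f^{*})-L^{\lambda}(f^{\lambda})=\|f^{*}-f^{\lambda}\|^{2}+\lambda\|\nabla(f^{*}-f^{\lambda})\|^{2}$, which after the transfer via Lemma \ref{lemma:rate:population:minimizer} gives only $\varepsilon_{\app}(\calF,\lambda)+\beta\lambda^{2}$, with no $B_1^2$. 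In the paper the $\lambda^{2}B_1^{2}$ term appears in Step (II) from an AM--GM split of $c\lambda B_1^{2}\sqrt{\vcdim(\nabla\calF)/(m\delta)}\leq\lambda^{2}B_1^{2}+c^{2}B_1^{2}\vcdim(\nabla\calF)/(4m\delta)$ after bounding the Rademacher complexity of the empirical gradient inner product; in your route it would similarly come out of the analogous AM--GM split when bounding $\lambda\bbE[\sup_{f}|\Delta_2(f)|]$, i.e.\ from the empirical-process term and not from the approximation term.
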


\par In comparison to Lemma~\ref{lemma:oracle:erm}, the error bound has not undergone significant changes, and it has only been augmented by one additional generalization error associated with the regularization term. Further, this term vanishes as the number of unlabeled data increases.

\par In particular, we focus on the scenario where the distributions of covariates in both labeled and unlabeled data are identical, i.e., $\nu_{X}=\mu_{X}$. When only the labeled data pairs (e.g.,~\eqref{eq:gp}) are used, the generalization error corresponding to the regularization term is denoted as $\varepsilon_{\gen}^{\reg}(\nabla\calF,n)$. In contrast, for the semi-supervised Sobolev regressor, the corresponding generalization term becomes:
\begin{multline*}
\varepsilon_{\gen}^{\reg}(\nabla\calF,m+n) \\
=B_{1}^{2}\inf_{\delta>0}\Big\{\max_{1\leq k\leq d}\frac{\log N(B_{1,k}\delta,D_{k}\calF,L^{2}(\euS))}{m+n}+\delta\Big\}.
\end{multline*}
It is worth noting that for every $m\in\bbN_{\geq1}$, the inequality $\varepsilon_{\gen}^{\reg}(\nabla\calF,m+n)<\varepsilon_{\gen}^{\reg}(\nabla\calF,n)$ holds. This demonstrates the provable advantages of incorporation of unlabeled data in the semi-supervised learning framework.

\par Finally, we derive convergence rates of the semi-supervised deep Sobolev regressor.

\begin{theorem}[Convergence rates]\label{theorem:erm:rate:data}
Suppose Assumptions~\ref{assumption:bounded:density:ratio} to~\ref{assumption:boundedness:f:derivative} hold. Let $\Omega\subseteq K\subseteq\bbR^{d}$ be two bounded domain. Assume that $f_{0}\in C^{s}(K)$ with $s\in\bbN_{\geq2}$. Set the hypothesis class as a deep ReQU neural network class $\calF=\calN(L,W,S)$ with $L=\calO(\log n)$ and $S=\calO(n^{\frac{d}{d+4s}})$. Let $\what{f}_{\euD,\euS}^{\lambda}$ be the regularized empirical risk minimizer defined as~\eqref{eq:empirical:risk:regularization:data} with regularization parameter $\lambda=\calO(n^{-\frac{s}{d+4s}}\log^{2}n)$. Then it follows that
\begin{align*}
&\bbE_{(\euD,\euS)\sim\mu^{n}\times\nu_{X}^{m}}\Big[\|\what{f}_{\euD,\euS}^{\lambda}-f_{0}\|_{L^{2}(\mu_{X})}^{2}\Big] \\
&\leq\calO\Big(n^{-\frac{2s}{d+4s}}\log^{4}n\Big)+\calO\Big(n^{\frac{d}{d+4s}}\log^{4}nm^{-1}\Big), \\
&\bbE_{(\euD,\euS)\sim\mu^{n}\times\nu_{X}^{m}}\Big[\|\nabla(\what{f}_{\euD,\euS}^{\lambda}-f_{0})\|_{L^{2}(\nu_{X})}^{2}\Big] \\
&\leq\calO\Big(n^{-\frac{s}{d+4s}}\log^{2}n\Big)+\calO\Big(n^{\frac{d+s}{d+4s}}\log^{2}nm^{-1}\Big).
\end{align*}
Here the constant behind the big $\calO$ notation is independent of $n$.
\end{theorem}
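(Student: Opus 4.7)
The plan is to combine the oracle inequality from Lemma \ref{lemma:oracle:erm:data} with the approximation bound from Lemma \ref{lemma:approximation:H1} and a VC-dimension estimate for the ReQU network class $\calF$ and its gradient class $\nabla\calF$, then optimize the trade-off by plugging in the prescribed $\lambda=\calO(n^{-s/(d+4s)})$ and $S=\calO(n^{d/(d+4s)})$. The four error contributions in Lemma \ref{lemma:oracle:erm:data}, namely the variational bias $\tilde\beta\lambda^2$ (or $\tilde\beta\lambda$), the approximation error $\varepsilon_{\app}(\calF,\lambda)$, the supervised generalization error $\varepsilon_{\gen}(\calF,n)$, and the unsupervised generalization error $\varepsilon_{\gen}^{\reg}(\nabla\calF,m)$, will each be shown to match or be absorbed into one of the two claimed rate components.

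Second, I would choose the network width parameter $N$ in Lemma \ref{lemma:approximation:H1} so that $N^{d}\asymp n^{d/(d+4s)}$, i.e.\ $N\asymp n^{1/(d+4s)}$, which is consistent with the prescribed $L=\calO(\log n)$ and $S=\calO(n^{d/(d+4s)})$. Applying Lemma \ref{lemma:approximation:H1} to $\phi=f_0\in C^s(K)$ then yields a neural network $f^\ast\in\calF$ with $\|f^\ast-f_0\|_{L^2(\mu_X)}^2\lesssim n^{-2s/(d+4s)}$ and $\|\nabla(f^\ast-f_0)\|_{L^2(\nu_X)}^2\lesssim n^{-2(s-1)/(d+4s)}$. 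Combined with the choice $\lambda=\calO(n^{-s/(d+4s)})$, this gives
\begin{equation*}
\varepsilon_{\app}(\calF,\lambda)\lesssim n^{-\frac{2s}{d+4s}}+n^{-\frac{s}{d+4s}}\cdot n^{-\frac{2(s-1)}{d+4s}}=n^{-\frac{2s}{d+4s}}+n^{-\frac{3s-2}{d+4s}},
\end{equation*}
and the second term is absorbed into the first precisely because $s\geq 2$, explaining the smoothness hypothesis.

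Third, I would bound the two generalization terms by VC-dimension arguments. Since $\calF=\calN(L,S)$ with $L=\calO(\log n)$ and $S=\calO(n^{d/(d+4s)})$, standard pseudo-dimension estimates for piecewise-polynomial networks give $\vcdim(\calF)\lesssim SL\log S\lesssim n^{d/(d+4s)}$ up to logarithmic factors. The key point (and slight subtlety) is that the partial derivative of a ReQU network is again piecewise polynomial with the same architectural complexity up to constants, so $\vcdim(\nabla\calF)$ admits the same bound. Optimizing $\delta$ in the expressions for $\varepsilon_{\gen}(\calF,n)$ and $\varepsilon_{\gen}^{\reg}(\nabla\calF,m)$ yields $\varepsilon_{\gen}(\calF,n)\lesssim\sqrt{\vcdim(\calF)/n}\lesssim n^{-2s/(d+4s)}$ and $\varepsilon_{\gen}^{\reg}(\nabla\calF,m)\lesssim\sqrt{\vcdim(\nabla\calF)/m}\lesssim n^{d/(2(d+4s))}m^{-1/2}$.

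Finally, I would assemble the pieces. For the $L^2(\mu_X)$ bound, $\tilde\beta\lambda^2=\calO(n^{-2s/(d+4s)})$, $\varepsilon_{\app}$ and $\varepsilon_{\gen}(\calF,n)$ are both $\calO(n^{-2s/(d+4s)})$, and the unlabeled-data term contributes $\calO(n^{d/(2(d+4s))}m^{-1/2})$, yielding the first claim. For the gradient bound, multiplication by $\lambda^{-1}=\calO(n^{s/(d+4s)})$ converts each of these contributions exactly to $\calO(n^{-s/(d+4s)})$ or to $\calO(n^{(d+2s)/(2(d+4s))}m^{-1/2})$ after the elementary exponent arithmetic $s/(d+4s)+d/(2(d+4s))=(d+2s)/(2(d+4s))$, giving the second claim. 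The main obstacle I anticipate is the VC-dimension bound for $\nabla\calF$: one must verify that differentiating a ReQU network produces a function computable by a network of comparable depth and sparsity, so that the pseudo-dimension bound for piecewise-polynomial networks applies uniformly to $\calF$ and $\nabla\calF$ simultaneously; all other steps reduce to routine substitution once the oracle inequality of Lemma \ref{lemma:oracle:erm:data} and the $H^1$-approximation result of Lemma \ref{lemma:approximation:H1} are in hand.
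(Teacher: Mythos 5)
Your proposal is correct and follows essentially the same route as the paper: substitute the $H^{1}$-approximation bound (Lemma \ref{lemma:approximation:H1}) and VC-dimension estimates for $\calF$ and $\nabla\calF$ into the oracle inequality (Lemma \ref{lemma:oracle:erm:data}), optimize the free parameter $\delta$ in each generalization term, and plug in $N\asymp n^{1/(d+4s)}$ and $\lambda\asymp n^{-s/(d+4s)}$. The obstacle you correctly flag --- that $\vcdim(\nabla\calF)$ must admit a bound comparable to $\vcdim(\calF)$ --- is exactly what the paper's Lemma \ref{lemma:vcdim:derivative} supplies, by showing that $D_{k}f$ for a ReQU network $f$ of depth $L$ and sparsity $S$ is representable as a ReQU-ReLU network of depth $\calO(L)$ and sparsity $\calO(LS)$, whence $\vcdim(D_{k}\calF)\lesssim L^{2}S\log(LS)$; this is the same order as $\vcdim(\calF)$ up to logarithmic factors, which is all the rate calculation requires. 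Your exponent arithmetic, including the absorption of $n^{-(3s-2)/(d+4s)}$ into $n^{-2s/(d+4s)}$ when $s\geq2$ and the identity $\frac{s}{d+4s}+\frac{d}{2(d+4s)}=\frac{d+2s}{2(d+4s)}$, matches the paper. The only cosmetic difference is that the paper additionally tracks the noise proxy $\sigma$ through its choices of $N$, $\delta_{0}$, $\delta_{1}$, $\lambda$, but since the theorem statement treats $\sigma$ as a constant this does not affect correctness.
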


\par For the number of unlabeled data $m$ sufficiently large, the convergence rate of the semi-supervised deep Sobolev regressor tends to the rate derived in Theorem~\ref{theorem:erm:rate}.

\section{Applications and Numerical Experiments}\label{section:applications}
In this section, we demonstrate the effectiveness of our proposed SDORE in the context of derivative estimation, and nonparametric variable selection.

\subsection{Derivative Estimation}

\par In this section we give a one-dimensional example, and a detailed example in two dimensions is shown in Appendix~\ref{appendix:experiments:derivative}.

\begin{example}\label{example:onedim}
Let the regression function be $f_0(x) = 1 + 36x^2 - 59x^3 + 21x^5 + 0.5\cos(\pi x)$. The labeled data pairs are generated from a regression model $Y=f_0(X)+\xi$, where $X$ is sampled from the uniform distribution on $[0,1]$, and $\xi$ is sampled from a Gaussian distribution $N(0,\sigma^{2})$. Here the variance $\sigma^{2}$ is determined by a given signal-to-noise ratio $\frac{\bbE[f_{0}^{2}(X)]}{\sigma^{2}}=30$. The unlabeled data are also drawn from the uniform distribution on $[0,1]$. The regularization parameter is set as $\lambda=0.005$.
\end{example}

\par To demonstrate the effectiveness of SDORE in scenario where only few labeled sample is available, we conducted SDORE using 40 labeled data pairs and an additional 1000 unlabeled samples. The comparisons with the least-squares regression are presented in Figure~\ref{fig:toy}, which includes point-wise comparisons of function values and derivatives. In the upper panel, the least-squares estimator generally matches the target function. However, the least-squares estimator fits the noise in the data rather than the underlying patterns near the left and right endpoints. Also, the lower panel shows that its estimated derivatives is inaccurate and unstable near the left and right endpoints. In comparison, our SDORE method successfully estimates the regression function and its derivatives simultaneously, and the regularization avoids the overfitting on the primitive function.

\par The errors in derivative estimates by SDORE are more pronounced near the interval boundary. This is primarily due to the lack of observations of function values outside the intervals, preventing accurate estimation of the boundary derivatives. 
From a theoretical perspective, the convergence of the derivative in $L^{2}$-norm is guaranteed by Theorem~\ref{theorem:erm:rate:regression}. However, this theorem does not provide guarantees for accuracy on the boundary. Estimating the boundary error requires the interior estimation of second-order derivatives, as outlined in the trace theorem~\cite[Theorem 1 in Section 5.5]{evans2010partial}.

\begin{figure}[htbp] 
\centering
\includegraphics[width=\linewidth]{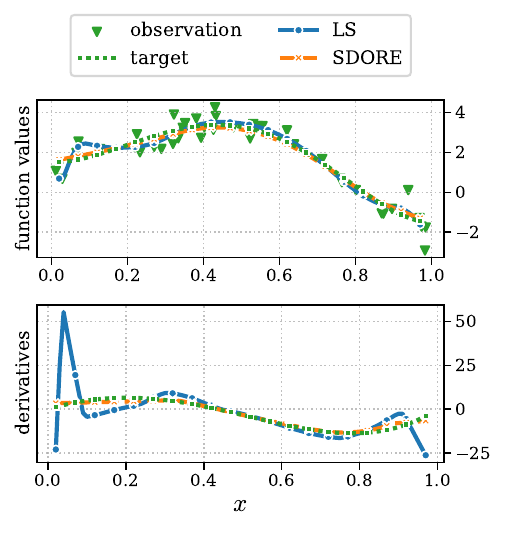}
\caption{Numerical results of Example~\ref{example:onedim}. (left) Scatter plot of noisy observations (paired data used for supervised learning), line plot of the ground-truth regression function and its values predicted by least-squares (LS) regression and SDORE. (right) The ground truth derivative function and its estimated values by LS and SDORE.}
\label{fig:toy}
\end{figure}

\subsection{Nonparametric Variable Selection}

\par Deep neural network is a widely utilized tool in nonparametric statistics and machine learning. It effectively captures the nonlinear relationship between the covariate vector and the corresponding label. However, the interpretability of neural network estimators has faced significant criticism. This is primarily due to the inability to determine the relevance of variables in the covariate vector and quantify their impact on the neural network's output. 

\par In this section, we propose a novel approach to address this issue by measuring the importance of a variable through its corresponding partial derivatives. Leveraging the deep Sobolev regressor, we introduce a nonparametric variable selection technique with deep neural networks. Remarkably, our method incorporates variable selection as a natural outcome of the regression process, eliminating the need for the design of a separate algorithm for this purpose.

\par Before proceeding, we impose additional sparsity structure on the underlying regression function, that is, there exists $f_{0}^{*}:\bbR^{d^{*}}\rightarrow\bbR$ ($1\leq d^{*}\leq d$) such that
\begin{multline}\label{eq:sparsity}
f_{0}(x_{1},\ldots,x_{d})=f_{0}^{*}(x_{j_{1}},\ldots,x_{j_{d^{*}}}), \\
 \{j_{1},\ldots,j_{d^{*}}\}\subseteq[d].
\end{multline}
This sparsity setting has garnered significant attention in the study of linear models and additive models, as extensively discussed in~\cite{Wainwright2019High}. In the context of reproducing kernel Hilbert space,~\cite{Rosasco2010regularization,Mosci2012Is,Rosasco2013Nonparametric} introduced a nonparametric variable selection algorithm. Nevertheless, their approach and analysis heavily depend on the finite dimensional explicit representation of the estimator, making it unsuitable for generalizing to deep neural network estimators.

\par We introduce the definition of relevant set, which was proposed by~\cite[Definition 10]{Rosasco2013Nonparametric}. The goal of the variable selection is to estimate the relevant set.

\begin{definition}[Relevant set]
Let $f:\bbR^{d}\rightarrow\bbR$ be a differentiable function. A variable $k\in[d]$ is irrelevant for the function $f$ with respect to the probability measure $\nu_{X}$, if $D_{k}f(X)=0$ $\nu_{X}$-almost surely, and relevant otherwise. The set of relevant variables is defined as 
\begin{equation*}
\calI(f)=\{k\in[d]:\|D_{k}f\|_{L^{2}(\nu_{X})}>0\}.
\end{equation*}
\end{definition}

\subsubsection{Convergence Rates and Selection Consistency}

\begin{assumption}[Sparsity of the regression function]\label{assumption:sparse}
The number of relevant variables is less than the dimension $d$, that is, there exists a positive integer $d^{*}\leq d$, such that $|\calI(f_{0})|=d^{*}$.
\end{assumption}

\par Under Assumption~\ref{assumption:sparse}, our focus is solely on estimating the low-dimensional function $f_{0}^{*}$ in~\eqref{eq:sparsity} using deep neural networks. Consequently, the approximation and generalization error in Lemma~\ref{lemma:oracle:erm} are reliant solely on the intrinsic dimension $d^{*}$. This implies an immediate result as follows.

\begin{corollary}\label{corollary:variable:selection}
Suppose Assumptions~\ref{assumption:bounded:density:ratio} to~\ref{assumption:sparse} hold. Let $\Omega\subseteq K\subseteq\bbR^{d}$ be two bounded domain. Assume that $f_{0}\in C^{s}(K)$ with $s\in\bbN_{\geq2}$. Set the hypothesis class $\calF$ as a ReQU neural network class $\calF=\calN(L,W,S)$ with $L=\calO(\log n)$ and $S=\calO(n^{\frac{d^{*}}{d^{*}+4s}})$. Let $\what{f}_{\euD}^{\lambda}$ be the regularized empirical risk minimizer defined as~\eqref{eq:empirical:risk:regularization:data} with regularization parameter $\lambda=\calO(n^{-\frac{s}{d^{*}+4s}}\log^{2}n)$. Then the following inequality holds
\begin{align*}
\bbE_{\euD\sim\mu^{n}}\Big[\|\what{f}_{\euD}^{\lambda}-f_{0}\|_{L^{2}(\mu_{X})}\Big]&\leq\calO\Big(n^{-\frac{s}{d^{*}+4s}}\log^{4}n\Big), \\
\bbE_{\euD\sim\mu^{n}}\Big[\|\nabla(\what{f}_{\euD}^{\lambda}-f_{0})\|_{L^{2}(\nu_{X})}\Big]
&\leq\calO\Big(n^{-\frac{s}{2(d^{*}+4s)}}\log^{2}n\Big).
\end{align*}
\end{corollary}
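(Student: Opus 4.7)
The plan is to apply the proof machinery of Theorem \ref{theorem:erm:rate} verbatim, with a single substitution: replace the ambient dimension $d$ by the intrinsic dimension $d^{*}$ wherever it appears in the approximation and generalization estimates. The sparsity Assumption \ref{assumption:sparse} gives $f_{0}(x)=f_{0}^{*}(x_{j_{1}},\ldots,x_{j_{d^{*}}})$ with $f_{0}^{*}\in C^{s}(K^{*})$ on a bounded set $K^{*}\subseteq\bbR^{d^{*}}$. This allows me to reduce the approximation step to one on a $d^{*}$-dimensional domain.

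First I would establish a sparse analogue of Lemma \ref{lemma:approximation:H1}. The idea is to apply the existing $d^{*}$-dimensional approximation result to $f_{0}^{*}$, producing a ReQU network $g\in\calN(\tilde L,\tilde S)$ with $\tilde L=\calO(\log N)$, $\tilde S=\calO(N^{d^{*}})$, and with $\|g-f_{0}^{*}\|_{L^{2}}\leq CN^{-s}\|f_{0}^{*}\|_{C^{s}(K^{*})}$ and $\|\nabla(g-f_{0}^{*})\|_{L^{2}}\leq CN^{-(s-1)}\|f_{0}^{*}\|_{C^{s}(K^{*})}$. I then lift $g$ to an element of $\calN(L,S)$ on $\bbR^{d}$ by prepending a fixed coordinate-selection affine layer (picking out the $d^{*}$ relevant coordinates); this preserves the approximation bounds in both $L^{2}(\mu_{X})$ and $L^{2}(\nu_{X})$-semi-norm, and the total number of non-zero weights remains $\calO(N^{d^{*}})$. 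With $S=\calO(n^{d^{*}/(d^{*}+4s)})$ and $N\asymp n^{1/(d^{*}+4s)}$, this places the required approximant in our hypothesis class $\calF$.

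Next I would substitute this sparse approximation into the oracle inequality of Lemma \ref{lemma:oracle:erm}. The approximation error becomes
\begin{equation*}
\varepsilon_{\app}(\calF,\lambda)\lesssim N^{-2s}+\lambda N^{-2(s-1)}\lesssim n^{-\frac{2s}{d^{*}+4s}}+\lambda\, n^{-\frac{2(s-1)}{d^{*}+4s}},
\end{equation*}
while the generalization error is controlled by $\vcdim(\calF)$, which for ReQU networks depends only on $L$ and $S$ (not on the ambient input dimension $d$) and is therefore $\calO(n^{d^{*}/(d^{*}+4s)})$ up to logarithmic factors. Choosing $\lambda=\calO(n^{-s/(d^{*}+4s)})$ as in Theorem \ref{theorem:erm:rate} then balances the three contributions $\beta\lambda^{2}$, $\varepsilon_{\app}$, and $\varepsilon_{\gen}$, producing
\begin{equation*}
\bbE\|\what{f}_{\euD}^{\lambda}-f_{0}\|_{L^{2}(\mu_{X})}^{2}\lesssim n^{-\frac{2s}{d^{*}+4s}},\qquad \bbE\|\nabla(\what{f}_{\euD}^{\lambda}-f_{0})\|_{L^{2}(\nu_{X})}^{2}\lesssim n^{-\frac{s}{d^{*}+4s}}.
\end{equation*}
The stated $L^{2}$-rates (not squared) follow by Jensen's inequality $\bbE\|\cdot\|\leq\sqrt{\bbE\|\cdot\|^{2}}$.

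The only nontrivial step is the first one: producing a network that approximates $f_{0}$ with error governed by $d^{*}$ rather than $d$, while still living inside a class whose VC-dimension is the sparse-scale $\calO(n^{d^{*}/(d^{*}+4s)})$. The coordinate-selection trick handles the approximation side, and the independence of the ReQU VC-dimension bound from the ambient $d$ (it depends only on $L$ and $S$) handles the complexity side; once both are in hand, the remainder of the argument is a direct reuse of Theorem \ref{theorem:erm:rate} with $d\mapsto d^{*}$.
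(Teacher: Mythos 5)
Your proposal is correct and follows the same route the paper takes. The paper does not write out a separate proof for Corollary~\ref{corollary:variable:selection}; it simply remarks that under Assumption~\ref{assumption:sparse} the approximation error and generalization error in Lemma~\ref{lemma:oracle:erm} depend only on the intrinsic dimension $d^{*}$, so the result is ``immediate'' from Theorem~\ref{theorem:erm:rate} with $d\mapsto d^{*}$. You have filled in exactly the two details that make this precise --- (i) building the sparse approximant by applying the $d^{*}$-dimensional ReQU approximation result to $f_{0}^{*}$ and lifting it via a coordinate-selection affine layer, which keeps $S=\calO(N^{d^{*}})$; and (ii) observing that the VC-dimension bound of Lemma~\ref{lemma:vcdim} depends only on $L$ and $S$, not on the ambient input dimension, so the generalization term scales with $d^{*}$ as well --- followed by the Jensen step to pass from the squared to the unsquared rates.
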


\par The convergence rate presented in Corollary~\ref{corollary:variable:selection} is solely determined by the intrinsic dimension $d^{*}$ and remains unaffected by the data dimension $d$, which effectively mitigates the curse of dimensionality when $d^{*}$ is significantly smaller than $d$.

\par Furthermore, we establish the selection properties of the deep Sobolev regressor, which directly follow from the convergence of derivatives.

\begin{corollary}[Selection consistency]\label{corollary:selection:consistency}
Under the same conditions as Corollary~\ref{corollary:variable:selection}. It follows that
\begin{equation*}
\lim_{n\rightarrow\infty}\pr\Big\{\calI(f_{0})=\calI(\what{f}_{\euD}^{\lambda})\Big\}=1,
\end{equation*}
where $\lambda=\calO(n^{-\frac{s}{d^{*}+4s}}\log^{2}n)$.
\end{corollary}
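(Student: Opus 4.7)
The plan is to leverage the $H^1$-type convergence rate in Corollary~\ref{corollary:variable:selection} to deduce that the $L^{2}(\nu_{X})$-norms of the partial derivatives of $\what{f}_{\euD}^{\lambda}$ converge to those of $f_{0}$, and then separate the relevant indices from the irrelevant ones through a thresholding argument with a suitably chosen cutoff $\tau_{n}$. Concretely, set $\beta_{k}(f):=\|D_{k}f\|_{L^{2}(\nu_{X})}$ and define the estimated relevant set as $\calI(\what{f}_{\euD}^{\lambda})=\{k\in[d]:\beta_{k}(\what{f}_{\euD}^{\lambda})>\tau_{n}\}$, where $\tau_{n}\to 0$ at a rate to be fixed.

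First, by the reverse triangle inequality together with the fact that the $L^{2}(\nu_{X})$-norm of $D_{k}(\what{f}_{\euD}^{\lambda}-f_{0})$ is dominated by $\|\nabla(\what{f}_{\euD}^{\lambda}-f_{0})\|_{L^{2}(\nu_{X})}$, I obtain
\begin{equation*}
\max_{k\in[d]}\bigl|\beta_{k}(\what{f}_{\euD}^{\lambda})-\beta_{k}(f_{0})\bigr|\;\leq\;\|\nabla(\what{f}_{\euD}^{\lambda}-f_{0})\|_{L^{2}(\nu_{X})}.
\end{equation*}
Combining this with the bound $\bbE\|\nabla(\what{f}_{\euD}^{\lambda}-f_{0})\|_{L^{2}(\nu_{X})}\lesssim n^{-s/(2(d^{*}+4s))}$ from Corollary~\ref{corollary:variable:selection} and Markov's inequality yields that, for any sequence $\tau_{n}\gg n^{-s/(2(d^{*}+4s))}$,
\begin{equation*}
\pr\Bigl\{\max_{k\in[d]}|\beta_{k}(\what{f}_{\euD}^{\lambda})-\beta_{k}(f_{0})|\leq\tau_{n}/2\Bigr\}\longrightarrow 1.
\end{equation*}

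Next, I would split the analysis on the event $\calE_{n}:=\{\max_{k\in[d]}|\beta_{k}(\what{f}_{\euD}^{\lambda})-\beta_{k}(f_{0})|\leq\tau_{n}/2\}$. Set $\beta_{\min}:=\min_{k\in\calI(f_{0})}\beta_{k}(f_{0})$, which is strictly positive by the definition of the relevant set under Assumption~\ref{assumption:sparse}, and choose $\tau_{n}=n^{-s/(4(d^{*}+4s))}$, so that $\tau_{n}\to 0$ and $\tau_{n}\gg n^{-s/(2(d^{*}+4s))}$. On $\calE_{n}$, for each $k\notin\calI(f_{0})$ we have $\beta_{k}(\what{f}_{\euD}^{\lambda})\leq\tau_{n}/2<\tau_{n}$ so $k\notin\calI(\what{f}_{\euD}^{\lambda})$, while for each $k\in\calI(f_{0})$, for all $n$ sufficiently large $\beta_{k}(\what{f}_{\euD}^{\lambda})\geq\beta_{\min}-\tau_{n}/2>\tau_{n}$ so $k\in\calI(\what{f}_{\euD}^{\lambda})$. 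Hence $\calI(\what{f}_{\euD}^{\lambda})=\calI(f_{0})$ on $\calE_{n}$, and since $\pr(\calE_{n})\to 1$ the conclusion follows.

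The main obstacle is the interpretation of $\calI(\what{f}_{\euD}^{\lambda})$: the neural-network estimator's partial derivatives are generically nonzero, so selection consistency cannot be phrased with the literal threshold $0$ used in the definition of $\calI(f_{0})$. The clean fix is to use a data-independent threshold $\tau_{n}\to 0$ that dominates the $H^{1}$-convergence rate while remaining bounded below the separation gap $\beta_{\min}$. The delicate step is balancing these two requirements on $\tau_{n}$: it must vanish slowly enough to absorb the stochastic fluctuation of $\|\nabla(\what{f}_{\euD}^{\lambda}-f_{0})\|_{L^{2}(\nu_{X})}$ (controlled by the rate in Corollary~\ref{corollary:variable:selection} via Markov), yet fast enough that the genuinely relevant coordinates, whose population norms are bounded below by $\beta_{\min}$, remain detectable. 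A union bound over the finitely many coordinates $k\in[d]$ then yields the claim.
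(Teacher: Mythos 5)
Your proof is correct given the thresholded reading of $\calI(\what{f}_{\euD}^{\lambda})$, and in fact your discussion of the ``main obstacle'' identifies a genuine defect in the paper's own argument. The paper's proof proceeds by Markov's inequality together with (the triangle and reverse triangle inequalities) exactly as you do, but it works directly with the literal definition $\calI(f)=\{k:\|D_{k}f\|_{L^{2}(\nu_{X})}>0\}$. For the inclusion $\calI(\what{f}_{\euD}^{\lambda})\subseteq\calI(f_{0})$ this creates a gap: from $\pr\{\|D_{k}\what{f}_{\euD}^{\lambda}\|_{L^{2}(\nu_{X})}>\epsilon\}\to 0$ for every $\epsilon>0$ the paper infers that $k\notin\calI(\what{f}_{\euD}^{\lambda})$ with probability tending to one, but convergence in probability to zero does not force $\|D_{k}\what{f}_{\euD}^{\lambda}\|_{L^{2}(\nu_{X})}$ to vanish exactly, and a deep ReQU network will generically have all partial derivatives nonzero on a set of positive $\nu_{X}$-measure. (The other inclusion $\calI(f_{0})\subseteq\calI(\what{f}_{\euD}^{\lambda})$ is fine once one uses the reverse triangle inequality and the gap $\|D_{k}f_{0}\|_{L^{2}(\nu_{X})}>0$.)

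Your fix is the standard and correct one: redefine the estimated relevant set with a vanishing data-independent cutoff $\tau_{n}$, pick $\tau_{n}$ in the window between the $H^{1}$-rate $n^{-s/(2(d^{*}+4s))}$ and the separation constant $\beta_{\min}$ (your choice $\tau_{n}=n^{-s/(4(d^{*}+4s))}$ does this), control the uniform fluctuation $\max_{k}|\beta_{k}(\what{f}_{\euD}^{\lambda})-\beta_{k}(f_{0})|$ by $\|\nabla(\what{f}_{\euD}^{\lambda}-f_{0})\|_{L^{2}(\nu_{X})}$ via the reverse triangle inequality, and apply Markov. The union bound over $k\in[d]$ is automatic since the maximum is already dominated by a single $H^{1}$-seminorm. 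The only thing worth flagging in your writeup is that, strictly speaking, the statement as printed in the paper uses the un-thresholded $\calI$; your argument proves a corrected version of the corollary (which is almost certainly what was intended, given the paper's own use of a threshold in its numerical experiments), not the corollary verbatim.
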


\par Corollary~\ref{corollary:selection:consistency} demonstrates that, given a sufficiently large number of data pairs, the estimated relevant set $\calI(\what{f}_{\euD}^{\lambda})$ is equal to the ground truth relevant set $\calI(f_{0})$ with high probability. In comparison,~\cite[Theorem 11]{Rosasco2013Nonparametric} only provided a one-side consistency
\begin{equation*}
\lim_{n\rightarrow\infty}\pr\Big\{\calI(f_{0})\subseteq\calI(\what{f}_{\euD}^{\lambda})\Big\}=1,
\end{equation*}
were unable to establish the converse inclusion.

\subsubsection{Numerical Experiments}

\par In this section, we present a high-dimensional example which has sparsity structure to verify the performance of SDORE in variable selection. The additional experiments for variable selection are shown in Appendix~\ref{appendix:experiments:selection}.

\begin{example}\label{example:selection}
Let the regression function be 
\begin{equation*}
f_0(x)=\sum_{i=1}^{3}\sum_{j=i+1}^{4}x_{i}x_{j}.
\end{equation*}
Suppose the covariate in both labeled and unlabeled data are sampled from the uniform distribution on $[0,1]^{20}$. The label $Y$ is generated from the regression model $Y=f_{0}(X)+\xi$, where $\xi$ is the noise term sampled from a Gaussian distribution with the signal-to-noise ratio to be 25, in the same way as Example~\ref{example:onedim}. The regularization parameter is set as $\lambda=1.0\times10^{-2}$.
\end{example}

\par In real-world applications, the process of labeling data can be prohibitively costly, resulting in a limited availability of labeled data. Conversely, there is an abundance of unlabeled data that is readily accessible. Hence, it becomes crucial to leverage few labeled data alongside a substantial amount of unlabeled data for the purpose of variable selection. Nevertheless, the task of variable selection with few labeled samples presents significant challenges. Due to the scarcity of data points, there is a restricted range of variability within the dataset, posing difficulties in accurately determining the variables that hold true significance in predicting the desired outcome.

\par To demonstrate the effectiveness of SDORE in this challenging scenario, we employ SDORE for the variable selection in this example, utilizing 50 labeled data pairs and an additional sample containing 100 unlabeled covariate vectors. Additionally, we use least-squares regression on the same data as a comparison. Figure~\ref{fig:selection} visually presents the empirical mean square (EMS) of estimated partial derivatives with respect to each variable on the test set, that is, for each $1\leq k\leq d$,
\begin{equation*}
\mathrm{EMS}_{k}=\frac{1}{n}\sum_{i=1}^{n}|D_{k}\what{f}(X_{i,k})|^{2}.
\end{equation*}
Here $\what{f}$ is an estimator, $\{X_{i}\}_{i=1}^{n}$ is a set of test data, and $X_{i,k}$ represents the $k$-th element of $X_{i}$.
The results by SDORE reveals that the derivatives with respect to relevant variables $x_1$ to $x_4$ are significantly larger than those of the other variables, while least-squares regression wrongly regards $x_9$ as relevant variables, possibly due to the lack of paired training samples.
This shows that our proposed method can estimate the derivatives accurately, which facilitates the variable selection. 
We select the variables by setting a 75\% quantile threshold of the estimated partial derivatives. The partial derivatives greater than the threshold is considered relevant.
Additionally, Figure~\ref{fig:selection} displays the mean selection error (SE), calculated as the mean of the false positive rate and false negative rate as defined by~\cite{Rosasco2013Nonparametric}, as well as the root mean squared prediction error (PE) on the regression function.
Notably, the results consistently demonstrate the superior performance of SDORE over least-squares regression, underscoring the advantages of incorporating unlabeled data.

\begin{remark}
Since in Figure~\ref{fig:selection}, for SDORE, the estimated partial derivatives with respect to the first four features is significantly larger than others, we can choose the threshold directly. In other application scenarios, if we can not observe such a clear difference, we can employ the strategy such as cross-validation (CV) to determine the number of features. Specifically, the cross-validation process involves dividing the dataset into a training set and a validation set independently. The training set is used for Sobolev regression, and the model's performance is evaluated on the validation set. The mean square of partial derivatives, also known as the important score, is sorted from largest to smallest. The cross-validation process begins by selecting the feature with the largest important score, and adds the remaining most important features incrementally until the accuracy in the validation set no longer shows improvement.
\end{remark}

\begin{figure}[htbp]  
\centering
\includegraphics[width=\linewidth]{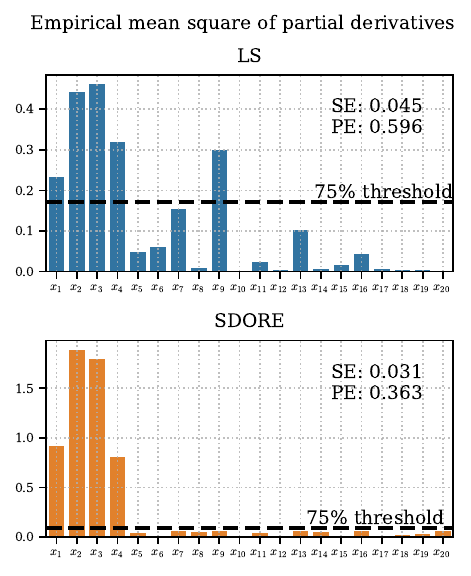}
\caption{Numerical results of Example~\ref{example:selection}. The empirical mean square of the partial derivatives of the regression function $f_{0}$ (which depends only on the $x_1$ to $x_4$), estimated by least-squares fitting (LS, left) and SDORE (right). 
The dashed line is the 75\% quantile threshold for variable selection.
We also report the mean selection error (SE) for the estimated derivative function and the root mean squared prediction error (PE) for the primitive function by each method.}
\label{fig:selection}
\end{figure}

\section{Conclusion}\label{section:conclusion}

\par In this paper, we present a novel semi-supervised deep Sobolev regressor that allows for the simultaneous estimation of the underlying regression function and its gradient. We provide a thorough convergence rate analysis for this estimator, demonstrating the provable benefits of incorporating unlabeled data into the semi-supervised learning framework. To the best of our knowledge, these results are original contributions to the literature in the field of deep learning, thereby enhancing the theoretical understanding of semi-supervised learning and gradient penalty strategy. From an application standpoint, our approach introduces powerful new tools for nonparametric variable selection. Moreover, our method has demonstrated exceptional performance in various numerical examples, further validating its efficacy.

\par We would like to highlight the generality of our method and analysis, as it can be extended to various loss functions. In our upcoming research, we have extended the Sobolev penalized strategy to encompass a wide range of statistical and machine learning tasks, such as density estimation, deconvolution, classification, and quantile regression. Furthermore, we have discovered the significant role that the semi-supervised deep Sobolev regressor plays in addressing inverse problems related to partial differential equations. There still remains some challenges that need to be addressed. For example, in Theorem~\ref{theorem:erm:rate}, the $L^{2}(\mu_{X})$-rate $\calO(n^{-\frac{2s}{d+4s}}\log^{4}n)$ of the deep Sobolev regressor does not attain the minimax optimality. Additionally, the convergence rate $\calO(n^{-\frac{s}{d+4s}}\log^{2}n)$ for the derivatives is also slower than the minimax optimal rate $\calO(n^{-\frac{2(s-1)}{d+2s}})$ derived in~\cite{Stone1982optimal}. Moreover, while Corollary~\ref{corollary:selection:consistency} establishes selection consistency, it does not provide the rate of convergence. Furthermore, an interesting avenue for future research would be to investigate deep nonparametric regression with a sparse/group sparse penalty.

\appendices

\section{Supplemental Definitions and Lemmas}

\par In this section, we present some definitions and lemmas for preparation. We first extend Green's formula in Lebesgue measure to general measures.

\begin{lemma}[Green's formula in general measure]\label{lemma:Green}
Let $\nu_{X}$ be a probability measure on $\Omega$ with density function $q(x)\in W^{1,\infty}(\Omega)$. Let $u\in H^{1}(\nu_{X})$ and let $v\in H^{2}(\nu_{X})$ satisfying $\nabla v\cdot\vn=0$ a.e. on $\partial\Omega$, where $\vn$ is the unit normal to the boundary. Then it follows that
\begin{equation*}
-(\nabla u,\nabla v)_{L^{2}(\nu_{X})}=(\Delta v+\nabla v\cdot\nabla(\log q),u)_{L^{2}(\nu_{X})}
\end{equation*}
\end{lemma}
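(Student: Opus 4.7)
The plan is to reduce the claim to the classical divergence theorem applied to the vector field $q\nabla v$, then expand the divergence using the product rule, and finally recognize $\nabla q/q$ as $\nabla(\log q)$.

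First I would rewrite the left-hand side in Lebesgue form by pulling in the density:
\begin{equation*}
-(\nabla u,\nabla v)_{L^{2}(\nu_{X})}=-\int_{\Omega}\nabla u\cdot\nabla v\,q\,dx=-\int_{\Omega}\nabla u\cdot(q\nabla v)\,dx.
\end{equation*}
The idea is that $q\nabla v\in(L^{2}(\Omega))^{d}$ with divergence in $L^{2}(\Omega)$ (computed below), so this is a legitimate pairing to feed into Green's identity.

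Second, I would apply the standard divergence theorem (for $H^{1}$ scalars against $H(\mathrm{div})$ vector fields on a Lipschitz domain) to $u$ and $q\nabla v$:
\begin{equation*}
\int_{\Omega}\nabla u\cdot(q\nabla v)\,dx=-\int_{\Omega}u\,\nabla\!\cdot\!(q\nabla v)\,dx+\int_{\partial\Omega}u\,(q\nabla v)\cdot\bm{n}\,dS.
\end{equation*}
The boundary term vanishes since $\nabla v\cdot\bm{n}=0$ a.e.\ on $\partial\Omega$ by hypothesis (and $q$ is bounded so $q\nabla v\cdot\bm{n}=0$ as well). The product rule then gives
\begin{equation*}
\nabla\!\cdot\!(q\nabla v)=q\,\Delta v+\nabla q\cdot\nabla v=q\bigl(\Delta v+\nabla(\log q)\cdot\nabla v\bigr),
\end{equation*}
where I have used $\nabla q/q=\nabla(\log q)$; this identity makes sense a.e.\ because $q$ is the density of a probability measure and the expression $\nabla(\log q)\cdot\nabla v$ already appears in the statement (so we may interpret the right-hand side as $q^{-1}\nabla q\cdot\nabla v$ on $\{q>0\}$, extended by $0$ on $\{q=0\}$, which is a null set under $\nu_{X}$). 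Substituting and reabsorbing the factor of $q$ into $d\nu_{X}$ gives exactly the desired identity
\begin{equation*}
-(\nabla u,\nabla v)_{L^{2}(\nu_{X})}=\bigl(u,\Delta v+\nabla v\cdot\nabla(\log q)\bigr)_{L^{2}(\nu_{X})}.
\end{equation*}

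The main obstacle to watch for is justifying the integration by parts at the stated regularity. Since $q\in W^{1,\infty}(\Omega)$ and $v\in H^{2}(\nu_{X})$, the vector field $q\nabla v$ lies in $(L^{2}(\Omega))^{d}$ and its distributional divergence, computed above, is in $L^{2}(\Omega)$; together with $u\in H^{1}(\nu_{X})\subset H^{1}(\Omega)$ (using the upper bound on the density) this places us in the setting of the trace/Green identity for Lipschitz domains. If one prefers to avoid the $H(\mathrm{div})$ machinery, the same conclusion follows by density: approximate $u$ and $v$ by smooth functions $u_{k}\in C^{\infty}(\bar\Omega)$ and $v_{k}\in C^{\infty}(\bar\Omega)$ with $\nabla v_{k}\cdot\bm{n}=0$ on $\partial\Omega$, apply the classical Green formula to $u_{k}$ and $q v_{k}$, and pass to the limit using dominated convergence and the bounds $\|q\|_{L^{\infty}},\|\nabla q\|_{L^{\infty}}<\infty$. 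Either route yields the identity without new analytic input beyond what is already encoded in Assumption \ref{assumption:bounded:density:ratio} and the hypotheses on $u,v,q$.
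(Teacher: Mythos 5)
Your argument is correct and is essentially the same as the paper's: both reduce to the integration-by-parts/divergence identity applied to $q\nabla v$ paired against $u$, kill the boundary term via $\nabla v\cdot\bm{n}=0$, and then expand $\nabla\cdot(q\nabla v)=q\Delta v+\nabla q\cdot\nabla v=q(\Delta v+\nabla(\log q)\cdot\nabla v)$. Your discussion of the regularity needed to justify the Green identity (the $H(\mathrm{div})$ viewpoint or the density argument) is a reasonable elaboration of a step the paper leaves implicit, but the core computation is identical.
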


\begin{proof}[Proof of Lemma~\ref{lemma:Green}]
It is straightforward that
\begin{align*}
&-(\nabla u,\nabla v)_{L^{2}(\nu_{X})} \\
&=-\int_{\Omega}\nabla u\cdot\nabla vqdx \\
&=-\int_{\Omega}\nabla\cdot(\nabla vqu)dx+\int_{\Omega}\nabla\cdot(\nabla vq)udx \\
&=-\int_{\partial\Omega}(\nabla v\cdot\vn)uqds+\int_{\Omega}\nabla\cdot(\nabla vq)udx \\
&=\int_{\Omega}\Delta vuqdx+\int_{\Omega}\nabla v\cdot\nabla(\log q)uqdx \\
&=(\Delta v,u)_{L^{2}(\nu_{X})}+(\nabla v\cdot\nabla(\log q),u)_{L^{2}(\nu_{X})},
\end{align*}
where the second equality holds from integration by parts, the third equality follows from the divergence theorem~\cite[Theorem 1 in Section C.2]{evans2010partial}, and the forth one used the assumption $\nabla v\cdot\vn=0$ and the equality $\nabla(\log q)=\nabla q/q$.
\end{proof}

\par We next present the maximal inequality for sub-Gaussian variables.

\begin{lemma}\label{lemma:subgaussian:max}
Let $\xi_{j}$ be $\sigma^{2}$-sub-Gaussian for each $1\leq j\leq N$. Then
\begin{equation*}
\bbE\Big[\max_{1\leq j\leq N}\xi_{j}^{2}\Big]\leq 4\sigma^{2}(\log N+1).
\end{equation*}
\end{lemma}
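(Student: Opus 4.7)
The plan is a Chernoff-style argument that combines a Laplace-transform bound for $\xi_j^2$ with a union bound and a one-parameter optimization. Three steps: (i) upgrade the sub-Gaussian MGF to a chi-square-type MGF for $\xi_j^2$; (ii) bound $\bbE[\max_j\xi_j^2]$ by the log of a sum of MGFs via Jensen's inequality; (iii) optimize the scalar parameter.

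First, I would establish that for every $0\leq t<1/(2\sigma^2)$ and each $j$,
\begin{equation*}
\bbE\bigl[\exp(t\xi_j^2)\bigr]\leq(1-2t\sigma^2)^{-1/2}.
\end{equation*}
The cleanest derivation uses a Gaussian decoupling trick. Let $Z\sim N(0,1)$ be independent of $\xi_j$. The Gaussian MGF identity gives $\bbE_Z[\exp(\sqrt{2t}\,\xi_j Z)\mid\xi_j]=\exp(t\xi_j^2)$. Swapping the order of expectation and applying the sub-Gaussian MGF of $\xi_j$ with the (conditionally fixed) parameter $\sqrt{2t}Z$ yields
\begin{equation*}
\bbE[\exp(t\xi_j^2)]=\bbE_Z\bigl[\bbE_{\xi_j}[\exp(\sqrt{2t}\,\xi_j Z)\mid Z]\bigr]\leq\bbE_Z[\exp(tZ^2\sigma^2)]=(1-2t\sigma^2)^{-1/2}.
\end{equation*}

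Next, I would apply Jensen's inequality to move the maximum outside the exponential, then use a crude union bound:
\begin{equation*}
\bbE\Bigl[\max_{1\leq j\leq N}\xi_j^2\Bigr]\leq\frac{1}{t}\log\bbE\Bigl[\max_{j}\exp(t\xi_j^2)\Bigr]\leq\frac{1}{t}\log\Bigl(\sum_{j=1}^N\bbE[\exp(t\xi_j^2)]\Bigr)\leq\frac{1}{t}\log\frac{N}{\sqrt{1-2t\sigma^2}}.
\end{equation*}
Choosing $t=1/(4\sigma^2)$, which sits comfortably in the admissible range, collapses the right-hand side to $4\sigma^2\bigl(\log N+\frac{1}{2}\log 2\bigr)$, which is bounded by $4\sigma^2(\log N+1)$ since $\frac{1}{2}\log 2<1$.

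The only nontrivial ingredient is the chi-square MGF bound in the first step, since the one-sided sub-Gaussian hypothesis only directly controls linear exponentials of $\xi_j$. Once this Laplace-transform bound is secured via the Gaussian decoupling identity above, the remainder is a textbook Chernoff/union-bound/optimization computation. No additional assumptions beyond the sub-Gaussian MGF hypothesis itself (in particular, no independence across $j$) are required.
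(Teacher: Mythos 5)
Your proposal is correct and takes essentially the same route as the paper: Jensen's inequality to exponentiate the maximum, a union bound over the $N$ terms, the chi-square-type moment generating function bound for $\xi_j^2$, and the choice of parameter (your $t=1/(4\sigma^2)$ is exactly the paper's $\lambda=1/2$ after the change of variable $\lambda=2t\sigma^2$). The only cosmetic difference is that the paper cites the bound $\bbE[\exp(t\xi_j^2)]\leq(1-2t\sigma^2)^{-1/2}$ from \citet[Theorem 2.6]{Wainwright2019High}, whereas you derive it from scratch via the Gaussian decoupling identity.
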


\begin{proof}[Proof of Lemma~\ref{lemma:subgaussian:max}]
By Jensen's inequality, it is straightforward that
\begin{align*}
\exp\Big(\frac{\lambda}{2\sigma^{2}}\bbE\Big[\max_{1\leq j\leq N}\xi_{j}^{2}\Big]\Big)
&\leq\bbE\Big[\max_{1\leq j\leq N}\exp\Big(\frac{\lambda\xi_{j}^{2}}{2\sigma^{2}}\Big)\Big] \\
&\leq N\bbE\Big[\exp\Big(\frac{\lambda\xi_{1}^{2}}{2\sigma^{2}}\Big)\Big]
\leq\frac{N}{\sqrt{1-\lambda}},
\end{align*}
where the last inequality holds from~\cite[Theorem 2.6]{Wainwright2019High} for each $\lambda\in[0,1)$. Letting $\lambda=1/2$ yields the desired inequality.
\end{proof}

\par To measure the complexity of a function class, we next introduce the Vapnik-Chervonenkis (VC) dimension and some associated lemmas.

\begin{definition}[VC-dimension]\label{def:vcdim}
Let $\calF$ be a class of functions from $\Omega$ to $\{\pm1\}$. For any non-negative integer $m$, we define the growth function of $\calF$ as
\begin{equation*}
\Pi_{\calF}(m)=\max_{\{x_{i}\}_{i=1}^{m}\subseteq\Omega}\big|\{(f(x_{1}),\ldots,f(x_{m})):f\in\calF\}\big|.
\end{equation*}
A set $\{x_{i}\}_{i=1}^{m}$ is said to be shattered by $\calF$ when 
\begin{equation*}
|\{(f(x_{1}),\ldots,f(x_{m})):f\in\calF\}|=2^{m}.
\end{equation*}
The VC-dimension of $\calF$, denoted $\vcdim(\calF)$, is the size of the largest set that can be shattered by $\calF$, that is, $\vcdim(\calF)=\max\{m:\Pi_{\calF}(m)=2^{m}\}$. For a class $\calF$ of real-valued functions, we define $\vcdim(\calF)=\vcdim(\sign(\calF))$.
\end{definition}

\par The following lemma provides a VC-dimension bound for the empirical covering number.

\begin{lemma}[{\cite[Theorem 12.2]{Anthony1999neural}}]\label{lemma:cn:vcdim}
Let $\calF$ be a set of real functions from $\Omega$ to the bounded interval $[-B,B]$. Let $\delta\in(0,1)$ and $\euD=\{X_{i}\}_{i=1}^{n}\subseteq\Omega$. Then for each $1\leq p\leq\infty$ and $n\geq\vcdim(\calF)$, the following inequality holds
\begin{equation*}
\log N(\delta,\calF,L^{p}(\euD))\leq c\vcdim(\calF)\log(nB\delta^{-1}),
\end{equation*}
where $c>0$ is an absolute constant.
\end{lemma}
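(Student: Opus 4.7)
The plan is to prove this classical covering number bound via the Dudley--Pollard--Haussler strategy: reduce to the $L^{\infty}$ case, discretize the codomain, and count distinct discretized patterns using the Sauer--Shelah lemma applied to a thresholded binary class. First I would dispense with the dependence on $p$. For any function bounded by $B$ on a sample of size $n$, Jensen's inequality gives $\|f\|_{L^{p}(\euD)}\leq \|f\|_{L^{\infty}(\euD)}$ for every $1\leq p\leq\infty$, so any $L^{\infty}(\euD)$ $\delta$-cover is automatically an $L^{p}(\euD)$ $\delta$-cover. Thus it suffices to prove $\log N(\delta,\calF,L^{\infty}(\euD))\leq c\,\vcdim(\calF)\log(nB\delta^{-1})$.

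Next I would discretize $[-B,B]$ into bins of width $\delta/2$ using the thresholds $t_{k}=-B+k\delta/2$ for $k=0,\ldots,K$ with $K=\lceil 4B/\delta\rceil$. Associate to each $f\in\calF$ its quantized pattern on $\euD$, namely the integer vector $(q_{i}(f))_{i=1}^{n}$ where $q_{i}(f)$ is the unique index with $f(X_{i})\in[t_{q_{i}(f)},t_{q_{i}(f)+1})$. Two functions sharing the same pattern agree pointwise on $\euD$ up to $\delta/2$, so they are $\delta$-close in $L^{\infty}(\euD)$. Hence the $L^{\infty}(\euD)$ $\delta$-covering number is bounded above by the number of distinct patterns realized by $\calF$.

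To count patterns, note that each $q_{i}(f)$ is determined by the sequence of signs $(\mathbb{1}[f(X_{i})>t_{k}])_{k=0}^{K}$. Thus the pattern count is at most the number of dichotomies cut out by the enlarged binary class $\scrG=\{\mathbb{1}[f(\cdot)>t]:f\in\calF,\ t\in\bbR\}$ on the $n(K+1)$ evaluation pairs $\{(X_{i},t_{k})\}_{i,k}$. Under the paper's convention that $\vcdim(\calF)$ coincides with the pseudo-dimension (the VC-dimension of the subgraph class $\scrG$), Sauer--Shelah applied on this grid bounds the number of such dichotomies by $(e\,n(K+1)/\vcdim(\calF))^{\vcdim(\calF)}$. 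Taking logarithms, substituting $K+1\lesssim B/\delta$, and absorbing universal constants yields the stated bound.

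The main obstacle, and the step I would want to handle carefully, is the passage from the VC-dimension of $\calF$ as defined via $\sign(\calF)$ at the single threshold zero to the combinatorial complexity of $\scrG$ indexed over all real thresholds. If the definition is read strictly, one must verify (or cite) the standard equivalence that $\vcdim(\scrG)\leq c\,\vcdim(\calF)$, e.g.\ by translating $\calF$ by constants and noting that shattering witnesses for each translate can be aggregated into a single shattered configuration for $\calF$ at threshold zero, up to a constant loss. Once this combinatorial input is secured, the remainder of the argument is mechanical bookkeeping of logarithms.
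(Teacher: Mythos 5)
The paper supplies no proof of this lemma; it is cited verbatim from Anthony and Bartlett (1999), Theorem~12.2. Your quantization-and-Sauer argument correctly reproduces the strategy behind that theorem, but it establishes the bound in terms of the pseudo-dimension $\mathrm{Pdim}(\calF)$, i.e.\ the VC-dimension of the subgraph class $\scrG=\{(x,t)\mapsto\sign(f(x)-t):f\in\calF\}$. You then assert that ``the paper's convention'' equates $\vcdim(\calF)$ with this pseudo-dimension; that is not so. Definition~\ref{def:vcdim} explicitly sets $\vcdim(\calF)=\vcdim(\sign(\calF))$, the VC-dimension of the sign class at the single threshold zero, which is in general a strictly smaller quantity.

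The bridge you propose---aggregating shattering witnesses over constant translates to show $\vcdim(\scrG)\leq c\,\vcdim(\sign(\calF))$---does not hold for a general class: it implicitly requires $\calF$ to be closed under addition of constants. Without that closure the inequality fails outright. Take $\calF$ to be the set of all nonnegative constant functions on $\Omega$ with values in $[0,B]$: then $\sign(\calF)$ realizes a single dichotomy, so $\vcdim(\sign(\calF))=0$, while the $L^{\infty}(\euD)$ $\delta$-covering number of $\calF$ on any one-point sample is of order $B/\delta$, contradicting a bound of the form $\log N\lesssim 0\cdot\log(nB\delta^{-1})$. This gap reflects an imprecision in the lemma as printed: Anthony and Bartlett's Theorem~12.2 is stated for pseudo-dimension, not for $\vcdim(\sign(\calF))$, and the lemma should really be read with that substitution. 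In the paper's downstream uses $\calF$ is a ReQU network class with trainable output biases, hence effectively closed under constant shifts, so $\mathrm{Pdim}(\calF)$ and $\vcdim(\sign(\calF))$ are comparable and your proof suffices; but as a proof of the lemma exactly as printed, the threshold-aggregation step is a genuine gap that cannot be closed without an extra hypothesis on $\calF$. The rest of your argument (reduction to $L^{\infty}$ via monotonicity of empirical $L^p$ norms, bin-width-$\delta/2$ quantization, and the Sauer--Shelah count over the $n(K+1)$ grid points) is correct.
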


\par Lemma~\ref{lemma:cn:vcdim} demonstrates that the metric entropy of a function class is bounded by its VC-dimension. The following lemma provides a VC-dimension bound for a deep neural network classes with a piecewise-polynomial activation function, and with a fixed architecture, i.e., the positions of the nonzero parameters are fixed.

\begin{lemma}[{\cite[Theorem 7]{Bartlett2019nearly}}]\label{lemma:vcdim}
Let $\calN$ be a deep neural network architecture with $L$ layers and $S$ non-zero parameters. The activation function is piecewise-polynomial. Then $\vcdim(\calN)\leq cLS\log(S)$, where $c>0$ is an absolute constant.
\end{lemma}

\par With the help of Lemmas~\ref{lemma:cn:vcdim} and~\ref{lemma:vcdim}, we can bound the metric entropy of the deep neural networks by its depth and number of nonzero parameters as the following lemma. The proof of this lemma is inspired by~\cite[Lemma 5]{schmidt2020nonparametric}.

\begin{lemma}\label{lemma:covering:number:sparse}
Let $\calN\subseteq\calN(L,W,S)$ be a set of deep neural networks from $\Omega$ to the bounded interval $[-B,B]$. The activation function is piecewise-polynomial. Let $\delta\in(0,1)$ and $\euD=\{X_{i}\}_{i=1}^{n}\subseteq\Omega$. Then 
\begin{equation*}
\log N(\delta,\calN,L^{2}(\euD))\leq cLS\log(S)\log\Big(\frac{nB}{\delta}\Big),
\end{equation*}
where $c>0$ is an absolute constant.
\end{lemma}

\begin{proof}[Proof of Lemma~\ref{lemma:covering:number:sparse}]
Before proceeding, it follows from the technique of removal of inactive nodes~\cite[eq. (9)]{schmidt2020nonparametric} that 
\begin{equation}\label{eq:lemma:covering:number:sparse:1}
\calN\subseteq\calN(L,W,S)=\calN(L,W\wedge S,S).
\end{equation}
For each deep neural network in $\calN(L,W,S)$, the number of parameters $T$ satisfies  
\begin{equation}\label{eq:lemma:covering:number:sparse:2}
\begin{aligned}
T&:=\sum_{\ell=0}^{L}(N_{\ell}+1)N_{\ell+1}\leq(L+1)2^{-L}\prod_{\ell=0}^{L}(N_{\ell}+1) \\
&\leq\prod_{\ell=0}^{L}(N_{\ell}+1)\leq(W+1)^{L+1}\leq(S+1)^{L+1},
\end{aligned}
\end{equation}
where the last inequality is due to~\eqref{eq:lemma:covering:number:sparse:1}. Then there exist $\binom{T}{s}$ combinations to pick $s$ non-zero parameters from all $T$ parameters, which yields a partition
\begin{align*}
\calN^{s}
&:=\Big\{\phi\in\calN:\sum_{\ell=0}^{L}(\|A_{\ell}\|_{0}+\|b_{\ell}\|_{0})=s\Big\} \\
&=\big\{\calN_{1}^{s},\ldots,\calN_{m}^{s}\big\}, \quad m_{s}=\binom{T}{s},
\end{align*}
where the deep neural networks in the same subset have the same positions of the non-zeros parameters. Consequently, 
\begin{align*}
&N(\delta,\calN,L^{2}(\euD)) \\
&=\sum_{s=1}^{S}N(\delta,\calN^{s},L^{2}(\euD))
=\sum_{s=1}^{S}\sum_{i=1}^{m_{s}}N(\delta,\calN_{i}^{s},L^{2}(\euD)) \\
&\leq\sum_{s=1}^{S}\sum_{i=1}^{m_{s}}\Big(\frac{nB}{\delta}\Big)^{\vcdim(\calN_{i}^{s})}
\leq\sum_{s=1}^{S}\binom{T}{s}\Big(\frac{nB}{\delta}\Big)^{cLs\log(s)} \\
&\leq\sum_{s=1}^{S}(S+1)^{(L+1)s}\Big(\frac{nB}{\delta}\Big)^{cLs\log(s)} \\
&\leq(S+1)^{(L+1)(S+1)}\Big(\frac{nB}{\delta}\Big)^{cL(S+1)\log(S)},
\end{align*}
where the first inequality holds from Lemma~\ref{lemma:cn:vcdim}, and the second inequality follows from Lemma~\ref{lemma:vcdim}. The third inequality used the inequality $\binom{T}{s}\leq T^{s}$ and~\eqref{eq:lemma:covering:number:sparse:2}. Taking logarithm on both sides of the inequality yields the desired result.
\end{proof}

\par By an argument similar to~\cite[Lemma 5.7]{Duan2022Convergence}, we derive the following lemma, which shows that the first-order derivative of a ReQU neural network can be represented by a ReQU-ReLU network. With the help of this lemma and Lemma~\ref{lemma:covering:number:sparse}, we can bound the metric entropy of the class of derivatives of ReQU networks.

\begin{lemma}\label{lemma:vcdim:derivative}
Let $f:\bbR^{d}\rightarrow\bbR$ be a ReQU neural network with depth no more that $L$ and the number of non-zero weights no more than $S$. Then $D_{k}f$ can be implemented by a ReQU-ReLU neural network with depth no more that $cL$ and the number of non-zero weights no more than $c^{\prime}LS$, where $c$ and $c^{\prime}$ are two positive absolute constants.
\end{lemma}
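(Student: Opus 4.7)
The plan is to unroll the chain rule through the layers of the ReQU network and show that each term in the resulting expression can be assembled by a small subnetwork of fixed depth, using only ReQU and ReLU activations. Concretely, writing $h_0(x)=x$ and $h_{\ell}(x)=\varrho(T_{\ell-1}(h_{\ell-1}(x)))$ for $\ell=1,\ldots,L$ so that $f(x)=T_L(h_L(x))$, a direct application of the chain rule gives
\begin{equation*}
D_k f(x)=A_L\,\mathrm{diag}(\varrho'(T_{L-1}(h_{L-1})))\,A_{L-1}\cdots\mathrm{diag}(\varrho'(T_0(x)))\,A_0\,e_k,
\end{equation*}
where $e_k$ is the $k$-th standard basis vector. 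The crucial observation is that $\varrho'(z)=2\max\{z,0\}=2\,\mathrm{ReLU}(z)$, so every occurrence of the derivative of the activation can be produced by a single ReLU unit applied to exactly the same affine pre-activation that already appears in the forward pass.

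Next, I would realize the recursion $g_\ell=\mathrm{diag}(2\,\mathrm{ReLU}(T_{\ell-1}(h_{\ell-1})))A_{\ell-1}g_{\ell-1}$ (with $g_0=A_0 e_k$) as a ReQU-ReLU network by running the forward stream $h_\ell$ and the derivative stream $g_\ell$ in parallel, concatenating their outputs at every stage. The affine factor $A_{\ell-1}g_{\ell-1}$ is absorbed into the subsequent linear transformation at no cost. The only non-linear operation that is not immediately available is the coordinate-wise product of two vectors; for this I would use the polarization identity together with the fact that $z^2=\varrho(z)+\varrho(-z)$, so that for any two scalars $u,v$,
\begin{equation*}
uv=\tfrac{1}{4}\bigl(\varrho(u+v)+\varrho(-u-v)-\varrho(u-v)-\varrho(v-u)\bigr).
\end{equation*}
Hence each scalar multiplication costs four ReQU units and one additional layer of depth. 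The diagonal multiplication in the recursion for $g_\ell$ therefore becomes an extra block of width $\calO(N_\ell)$ and depth $\calO(1)$ inserted between layers.

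Finally, I would count the resources. Each of the $L$ original layers gives rise to a super-block whose depth is bounded by an absolute constant (one ReQU step for $h_\ell$, one ReLU step for $\varrho'(T_{\ell-1}(h_{\ell-1}))$, and one ReQU step for the coordinate-wise product), yielding total depth at most $cL$. In the same super-block the number of non-zero weights is bounded by a constant multiple of the non-zero weights in the corresponding layer of the forward network plus a constant multiple of $N_\ell$, both of which are dominated by $S$; summing over $L$ layers gives the bound $c'LS$. The main obstacle is the careful bookkeeping: one has to verify that the forward features $h_{\ell-1}$ and the pre-activation $T_{\ell-1}(h_{\ell-1})$ can be routed forward without duplicating weights more than a constant number of times, and that the element-wise multiplication blocks compose cleanly with the surrounding affine transformations so that the factor multiplying $S$ remains absolute. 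With this accounting in hand the lemma follows.
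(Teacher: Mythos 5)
Your proposal is correct and takes essentially the same approach as the paper: the chain-rule unrolling with parallel forward and derivative streams, using $\varrho'(z)=2\,\mathrm{ReLU}(z)$ and the ReQU polarization identity $uv=\tfrac14\bigl(\varrho(u+v)+\varrho(-u-v)-\varrho(u-v)-\varrho(v-u)\bigr)$ for the coordinate-wise products, is exactly the paper's layer-by-layer induction viewed from the chain-rule perspective. The bookkeeping you flag as the main obstacle is precisely what the paper's induction carries out explicitly (tracking depth $\le 2\ell-2$ and weight count $\le c\ell\, S_{\ell,p}$ at stage $\ell$), yielding the same $\calO(L)$ depth and $\calO(LS)$ weight bounds.
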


\begin{proof}[Proof of Lemma~\ref{lemma:vcdim:derivative}]
We prove this lemma by induction. For simplicity of presentation, we omit the intercept terms in this proof. Denote by $\varrho_{1}=\max\{0,x\}$ and $\varrho_{2}=(\max\{0,x\})^{2}$. It is straightforward to verify that
\begin{equation}\label{eq:proof:lemma:vcdim:derivative:1}
\varrho_{2}^{\prime}(z)=2\varrho_{1}(z),
\end{equation}
and 
\begin{multline}\label{eq:proof:lemma:vcdim:derivative:2}
yz=\frac{1}{4}\Big(\varrho_{2}(y+z)+\varrho_{2}(-y-z) \\
-\varrho_{2}(y-z)-\varrho_{2}(z-y)\Big). 
\end{multline}

\par For the two-layers ReQU sub-network, the $p$-th element can be defined as
\begin{equation*}
f_{p}^{(2)}(x):=\sum_{j\in[N_{2}]}a_{pj}^{(2)}\varrho_{2}\Big(\sum_{i\in[N_{1}]}a_{ji}^{(1)}x_{i}\Big).
\end{equation*}
The number of non-zero weights of $f_{p}^{(2)}$ is given by
\begin{equation*}
S_{2,p}:=\sum_{j\in[N_{2}]:a_{pj}^{(2)}\neq0}\|(a_{ji}^{(1)})_{i=1}^{N_{1}}\|_{0}.
\end{equation*}
By some simple calculation, we have that for each $1\leq k\leq d$,
\begin{align*}
D_{k}f_{p}^{(2)}(x)
&=\sum_{j\in[N_{2}]}a_{pj}^{(2)}D_{k}\varrho_{2}\Big(\sum_{i\in[N_{1}]}a_{ji}^{(1)}x_{i}\Big) \\
&=2\sum_{j\in[N_{2}]}a_{pj}^{(2)}\varrho_{1}\Big(\sum_{i\in[N_{1}]}a_{ji}^{(1)}x_{i}\Big)a_{jk}^{(1)}, 
\end{align*}
where the last equality holds from~\eqref{eq:proof:lemma:vcdim:derivative:1}. Thus $D_{k}f_{p}^{(2)}$ can be implemented by a ReLU network with 2 layers and the number of non-zero weights is same to $f_{p}^{(2)}$, that is, $S_{2,p}^{\prime,k}=S_{2,p}$.

\par For the three layers ReQU sub-network, by a same argument, we have
\begin{equation*}
f_{p}^{(3)}(x):=\sum_{j\in[N_{3}]}a_{pj}^{(3)}\varrho_{2}(f_{j}^{(2)}(x)),
\end{equation*}
the number of non-zeros weights of which is
\begin{equation*}
S_{3,p}:=\sum_{j\in[N_{3}]:a_{pj}^{(3)}\neq0}S_{2,j}.
\end{equation*}
Then its derivatives are given by
\begin{align*}
D_{k}f_{p}^{(3)}(x)&=\sum_{j\in[N_{3}]}a_{pj}^{(3)}D_{k}\varrho_{2}(f_{j}^{(2)}(x)) \\
&=2\sum_{j\in[N_{3}]}a_{pj}^{(3)}\varrho_{1}(f_{j}^{(2)}(x))D_{k}f_{j}^{(2)}(x) \\
&=\frac{1}{2}\sum_{j\in[N_{3}]}a_{pj}^{(3)}\Big\{\varrho_{2}\Big(\varrho_{1}(f_{j}^{(2)}(x))+D_{k}f_{j}^{(2)}(x)\Big) \\
&\quad+\varrho_{2}\Big(-\varrho_{1}(f_{j}^{(2)}(x))-D_{k}f_{j}^{(2)}(x)\Big) \\
&\quad-\varrho_{2}\Big(\varrho_{1}(f_{j}^{(2)}(x))-D_{k}f_{j}^{(2)}(x)\Big) \\
&\quad-\varrho_{2}\Big(-\varrho_{1}(f_{j}^{(2)}(x))+D_{k}f_{j}^{(2)}(x)\Big)\Big\},
\end{align*}
where the second equality holds from~\eqref{eq:proof:lemma:vcdim:derivative:1} and the last one is due to~\eqref{eq:proof:lemma:vcdim:derivative:2}. This implies that $D_{k}f_{p}^{(3)}$ can be implemented by a ReQU-ReLU mixed network with 4 layers. Furthermore, the number of non-zero weights of $D_{k}f_{p}^{(3)}$ is given by
\begin{align}
S_{3,p}^{\prime,k}
&:=\sum_{j\in[N_{3}]:a_{pj}^{(3)}\neq0}\Big(S_{2,j}+S_{2,j}^{\prime,k}+12\Big) \nonumber \\
&\leq\sum_{j\in[N_{3}]:a_{pj}^{(3)}\neq0}\Big(S_{2,j}+13S_{2,j}^{\prime,k}\Big) \nonumber \\
&=14\sum_{j\in[N_{3}]:a_{pj}^{(3)}\neq0}S_{2,j}=14S_{3,p}. \label{eq:proof:lemma:vcdim:derivative:3}
\end{align}

\par We claim that the depth of $D_{k}f_{p}^{(\ell-1)}$ is no more than $2\ell-2$ and the number of non-zero weights satisfies
\begin{equation}\label{eq:proof:lemma:vcdim:derivative:4}
S_{\ell,p}^{\prime,k}\leq13\ell S_{\ell,p}, \quad 3\leq \ell\leq L.
\end{equation}
The case of $\ell=3$ has be shown in~\eqref{eq:proof:lemma:vcdim:derivative:3}, and it remains to verify that this inequality also holds for $\ell$, provided that~\eqref{eq:proof:lemma:vcdim:derivative:4} holds for $\ell-1$.

\par According to~\eqref{eq:proof:lemma:vcdim:derivative:4}, suppose that $D_{k}f_{j}^{(\ell-1)}$ has $2(\ell-1)-2$ layers and no more than $13(\ell-1)S_{\ell-1,p}^{\prime,k}$ non-zero weights for $j\in[N_{\ell-1}]$. Notice the $p$-th element of the $\ell$-th layer are given by
\begin{equation*}
f_{p}^{(\ell)}(x):=\sum_{j\in[N_{\ell}]}a_{pj}^{(\ell)}\varrho_{2}(f_{j}^{(\ell-1)}(x)),
\end{equation*}
the number of non-zeros weights of which is
\begin{equation*}
S_{\ell,p}:=\sum_{j\in[N_{\ell}]:a_{pj}^{(\ell)}\neq0}S_{\ell-1,j}.
\end{equation*}
Then its derivatives are defined as
\begin{align*}
&D_{k}f_{p}^{(\ell)}(x) \\
&=\sum_{j\in[N_{\ell}]}a_{pj}^{(\ell)}D_{k}\varrho_{2}(f_{j}^{(\ell-1)}(x)) \\
&=2\sum_{j\in[N_{\ell}]}a_{pj}^{(\ell)}\varrho_{1}(f_{j}^{(\ell-1)}(x))D_{k}f_{j}^{(\ell-1)}(x) \\
&=\frac{1}{2}\sum_{j\in[N_{\ell}]}a_{pj}^{(\ell)}\Big\{\varrho_{2}\Big(\varrho_{1}(f_{j}^{(\ell-1)}(x))+D_{k}f_{j}^{(\ell-1)}(x)\Big) \\
&\quad+\varrho_{2}\Big(-\varrho_{1}(f_{j}^{(\ell-1)}(x))-D_{k}f_{j}^{(\ell-1)}(x)\Big) \\
&\quad-\varrho_{2}\Big(\varrho_{1}(f_{j}^{(\ell-1)}(x))-D_{k}f_{j}^{(\ell-1)}(x)\Big) \\
&\quad-\varrho_{2}\Big(-\varrho_{1}(f_{j}^{(\ell-1)}(x))+D_{k}f_{j}^{(\ell-1)}(x)\Big)\Big\}.
\end{align*}
Hence $D_{k}f_{p}^{(\ell)}$ has $2(\ell-1)-2+2$ layers and the number of non-zero weights of $D_{k}f_{p}^{(\ell)}$ is given by
\begin{align*}
S_{\ell,p}^{\prime,k}
&:=\sum_{j\in[N_{\ell}]:a_{pj}^{(\ell)}\neq0}\Big(S_{\ell-1,j}+S_{\ell-1,j}^{\prime,k}+12\Big) \\
&\leq\sum_{j\in[N_{\ell}]:a_{pj}^{(\ell)}\neq0}\Big(S_{\ell-1,j}+13(\ell-1)S_{\ell-1,j}+12S_{\ell-1,j}\Big) \\
&=13S_{\ell,p},
\end{align*}
which deduces~\eqref{eq:proof:lemma:vcdim:derivative:4} for $\ell$. Therefore, we complete the proof.
\end{proof}

\begin{remark}
Notice that both ReLU and ReQU are piecewise-polynomial activation functions. By the proof of Lemma~\ref{lemma:vcdim} in~\cite[Theorem 7]{Bartlett2019nearly}, it is apparent that the VC-dimension bounds also hold for ReQU-ReLU neural networks, which are constructed in Lemma~\ref{lemma:vcdim:derivative}. In addition, see~\cite[Theorem 5.1]{Duan2022Convergence} for a complete proof of the VC-dimension bound of ReQU-ReLU networks.
\end{remark}

\par Combining Lemmas~\ref{lemma:covering:number:sparse} and~\ref{lemma:vcdim:derivative} yields the following results.
\begin{lemma}\label{lemma:covering:number:sparse:grad}
Let $\calN\subseteq\calN(L,W,S)$ be a set of deep neural networks from $\Omega$ to the bounded interval $[-B,B]$. The activation function is piecewise-polynomial. Let $\delta\in(0,1)$ and $\euD=\{X_{i}\}_{i=1}^{n}\subseteq\Omega$. Then 
\begin{equation*}
\log N(\delta,D_{k}\calN,L^{2}(\euD))\leq cL^{2}S\log(S)\log\Big(\frac{nB}{\delta}\Big),
\end{equation*}
where $c>0$ is an absolute constant.
\end{lemma}

\par We conclude this section by introducing an approximation error bound for deep ReQU neural networks.

\begin{lemma}[Approximation error]\label{lemma:approximation:requ}
Let $\Omega\subseteq K\subseteq\bbR^{d}$ be two bounded domain. For each $\phi\in C^{s}(K)$ with $s\in\bbN_{\geq1}$, there exists a ReQU neural network $f$ with the depth and the number of nonzero weights no more than $d\lfloor\log_{2}N\rfloor+d$ and $C^{\prime}N^{d}$, respectively, such that $0\leq k\leq\min\{s,N\}$,
\begin{equation*}
\inf_{f\in\calF}\|f-\phi\|_{C^{k}(\Omega)}\leq CN^{-(s-k)}\|\phi\|_{C^{s}(K)},
\end{equation*}
where $C$ and $C^{\prime}$ are constants independent of $N$.
\end{lemma}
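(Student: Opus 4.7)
The plan is to combine three classical ingredients: exact polynomial representation by ReQU networks, local Taylor approximation on a regular mesh, and a smooth ReQU-representable partition of unity. The novelty, compared with the analogous ReLU approximation results, is that ReQU activations compute products exactly, which lets us track derivatives up to order $k$ without any loss.

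\textbf{Step 1: Exact products and polynomials via ReQU.} Using $x^{2}=\varrho(x)+\varrho(-x)$ together with the polarization identity
\[
xy=\tfrac14\bigl(\varrho(x+y)+\varrho(-x-y)-\varrho(x-y)-\varrho(y-x)\bigr),
\]
any product of two inputs can be implemented by a one-hidden-layer ReQU subnetwork with $4$ nonzero weights, without any approximation error. Iterating in a balanced binary tree, a monomial $x^{\alpha}$ of degree $|\alpha|\leq s$ in $d$ variables is implemented exactly by a ReQU network of depth $\lceil\log_{2}s\rceil=O(1)$ and $O(1)$ parameters (treating $s$ as a constant). Hence any polynomial of degree $\leq s$ in $d$ variables is realized exactly by a ReQU network of constant depth and constant width.

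\textbf{Step 2: Local Taylor polynomials on a mesh.} Partition a box containing $\Omega$ into $N^{d}$ congruent subcubes of side $\sim 1/N$ indexed by $\alpha\in\{1,\dots,N\}^{d}$, with centers $x_{\alpha}$. On each subcube take the Taylor polynomial $T_{\alpha}$ of $\phi$ of degree $s-1$ about $x_{\alpha}$. Classical Taylor remainder yields, for all $0\leq k\leq \min\{s,N\}$,
\[
\|T_{\alpha}-\phi\|_{C^{k}(\text{subcube}_{\alpha})}\lesssim N^{-(s-k)}\|\phi\|_{C^{s}(K)}.
\]

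\textbf{Step 3: ReQU partition of unity.} Construct a tensor-product partition of unity $\{\psi_{\alpha}\}$ subordinate to the mesh, where each $\psi_{\alpha}=\prod_{j=1}^{d}\chi(N x_{j}-\alpha_{j})$ for a fixed one-dimensional piecewise-quadratic bump $\chi$ supported in $[-1,1]$ with $\sum_{i}\chi(\cdot-i)\equiv 1$. Such a $\chi$ is a $C^{1}$ piecewise polynomial of degree $2$, hence representable \emph{exactly} by a small ReQU network. Taking the $d$-fold product via the polarization trick introduces $\lceil\log_{2}d\rceil$ additional layers and $O(1)$ parameters per $\alpha$. Because $\chi$ has compact support, at each $x\in\Omega$ only $O(1)$ indices $\alpha$ contribute, and $\|\nabla^{j}\psi_{\alpha}\|_{L^{\infty}}\lesssim N^{j}$.

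\textbf{Step 4: Assembly and parameter count.} Define the approximant
\[
f(x)=\sum_{\alpha}\psi_{\alpha}(x)\,T_{\alpha}(x).
\]
Step 1 shows each product $\psi_{\alpha}T_{\alpha}$ is exactly a ReQU subnetwork of constant depth and $O(1)$ parameters; summing $N^{d}$ such subnetworks gives a total of $O(N^{d})$ nonzero weights. Locating the active mesh index along each coordinate and forming the tensor product yields depth $d\lfloor\log_{2}N\rfloor+d$ as claimed (the $\log_{2}N$ factor encodes the coordinate-wise location of the active bump, and the additive $d$ handles the tensor product and final summation).

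\textbf{Step 5: Error bound.} Using $\sum_{\alpha}\psi_{\alpha}\equiv 1$ and the polynomial reproduction identity $\sum_{\alpha}\psi_{\alpha}\nabla^{j}\phi(x_{\alpha})\approx\nabla^{j}\phi$ (a standard quasi-interpolation estimate), we write
\[
f-\phi=\sum_{\alpha}\psi_{\alpha}(T_{\alpha}-\phi).
\]
Differentiating $k$ times, applying the Leibniz rule, and using the uniform local-finite overlap together with $\|\nabla^{j}\psi_{\alpha}\|_{L^{\infty}}\lesssim N^{j}$ and $\|T_{\alpha}-\phi\|_{C^{j}(\text{subcube}_{\alpha})}\lesssim N^{-(s-j)}\|\phi\|_{C^{s}(K)}$ yields
\[
\|f-\phi\|_{C^{k}(\Omega)}\lesssim N^{-(s-k)}\|\phi\|_{C^{s}(K)}.
\]

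\textbf{Main obstacle.} The delicate point is Step 5: the derivatives of the partition of unity grow like $N^{k}$, so a naive bound on $\nabla^{k}(\psi_{\alpha}(T_{\alpha}-\phi))$ only gives $N^{k}\cdot N^{-s}=N^{-(s-k)}$ uniformly in $\alpha$, but one must rule out constructive-interference across overlapping $\alpha$'s. This is handled by invoking the polynomial-reproduction property of the tensor-product B-spline partition of unity through degree $s-1$, i.e., $\sum_{\alpha}\psi_{\alpha}T_{\alpha}^{(\beta)}$ reproduces $\nabla^{\beta}\phi$ up to the Taylor remainder on each cube. Verifying this reproduction property for the specific ReQU-representable bump $\chi$ (rather than a generic smooth PoU) is where most of the work concentrates, and it is handled by choosing $\chi$ as a quadratic B-spline so that tensor products reproduce all polynomials of coordinate degree $\leq 2$, combined with a Bramble--Hilbert-style estimate to extend to degree $s$.
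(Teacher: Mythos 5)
Your proof goes a genuinely different route from the paper's, which is a direct two-step argument: (i) a simultaneous polynomial approximation theorem in $C^{k}$ norms (cited from Bagby, Bos and Levenberg) produces a \emph{single global} polynomial $p_{N}$ of degree at most $N$ with $\sup_{K}|D^{\gamma}(\phi-p_{N})|\lesssim N^{-(s-|\gamma|)}\|\phi\|_{C^{s}(K)}$ for all $|\gamma|\leq\min\{s,N\}$; (ii) the exact-representation theorem of Li, Tang and Yu realizes $p_{N}$ \emph{exactly} as a ReQU network of depth $d\lfloor\log_{2}N\rfloor+d$ and $\calO(N^{d})$ nonzero weights. Because the network output is literally the polynomial $p_{N}$, it is $C^{\infty}$, so the $C^{k}$ error is well defined and controlled for every claimed $k$.

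Your local partition-of-unity construction has a genuine gap at $k\geq2$: the function $f=\sum_{\alpha}\psi_{\alpha}T_{\alpha}$ you build is only $C^{1}$. Your bump $\chi$ is a $C^{1}$ piecewise quadratic, and $\varrho(x)=(\max\{0,x\})^{2}$ has a jump in its second derivative at the kink, so any compactly supported ReQU-representable bump of this form fails to be $C^{2}$ at its knots. Hence for $s\geq2$ the quantity $\|f-\phi\|_{C^{2}(\Omega)}$ in the lemma is not even defined for your $f$, and the argument establishes the stated bound only for $k\in\{0,1\}$. This is repairable within your framework --- replace $\chi$ by a higher-degree B-spline built from $\ell$-fold compositions of $\varrho$, since $(\max\{0,x\})^{2^{\ell}}\in C^{2^{\ell}-1}$, and take $\ell\gtrsim\log_{2}(s+1)$ so the bump has enough smoothness with still $\calO(1)$ parameters --- but as written the construction does not cover the full range of $k$. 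A secondary note: the ``constructive interference'' worry in Step 5 is a non-issue. With at most $2^{d}$ bumps active at any $x$, the plain Leibniz estimate $\|\nabla^{j}\psi_{\alpha}\|_{\infty}\|\nabla^{k-j}(T_{\alpha}-\phi)\|_{\infty}\lesssim N^{j}\cdot N^{-(s-k+j)}=N^{-(s-k)}$ already sums over $j$ and over the $\calO(1)$ active indices $\alpha$ to $\calO(N^{-(s-k)})$, so the polynomial-reproduction and Bramble--Hilbert machinery you invoke is unnecessary for the $C^{k}$ target.
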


\begin{proof}[Proof of Lemma~\ref{lemma:approximation:requ}]
We first approximate the target function $\phi\in C^{s}(K)$ by polynomials. According to~\cite[Theorem 2]{Bagby2002Multivariate}, for each $N\in\bbN$, these exists a polynomial $p_{N}$ of degree at most $N$ on $\bbR^{d}$ such that for $0\leq|\gamma|\leq\min\{s,N\}$,
\begin{multline}\label{eq:proof:lemma:approximation:requ:1}
\sup_{x\in K}|D^{\gamma}(\phi(x)-p_{N}(x))| \\
\leq\frac{C}{N^{s-|\gamma|}}\sum_{|\alpha|\leq s}\sup_{x\in K}|D^{\alpha}\phi(x)|,
\end{multline}
where $C$ is a positive constant depending only on $d$, $s$ and $K$. Applying~\cite[Theorem 3.1]{Li2019Better}, one obtains that there exists a ReQU neural network $f$ with the depth $d\lfloor\log_{2}N\rfloor+d$ and nonzero weights no more than $C^{\prime}N^{d}$, such that
\begin{equation}\label{eq:proof:lemma:approximation:requ:2}
f=p_{N},
\end{equation}
where $C^{\prime}$ is a constant independent of $N$. Combining~\eqref{eq:proof:lemma:approximation:requ:1} and~\ref{eq:proof:lemma:approximation:requ:2} yields
\begin{align*}
\|f-\phi\|_{C^{k}(\Omega)}
&\leq\sup_{x\in K}|D^{\gamma}(f(x)-p_{N}(x))| \\
&\leq CN^{-(s-k)}\|\phi\|_{C^{s}(K)},
\end{align*}
for each $0\leq k\leq\min\{s,N\}$. This completes the proof.
\end{proof}

\section{Proofs of Results in Section~\ref{section:regularized:estimator}}

\par Proofs of theoretical results in Section~\ref{section:regularized:estimator} are shown in this section.

\begin{proof}[Proof of Lemma~\ref{lemma:existence:prm}]
By~\eqref{eq:population:risk:regularization} and the standard variational theory~\cite{evans2010partial}, it is sufficient to focus on the variational problem
\begin{equation}\label{eq:proof:lemma:existence:prm:1}
\scrB(f^{\lambda},g)=(f_{0},g)_{L^{2}(\mu_{X})}, \quad \forall g\in H^{1}(\nu_{X}),
\end{equation}
where the bilinear form $\scrB:H^{1}(\nu_{X})\times H^{1}(\nu_{X})\rightarrow\bbR$ is defined as
\begin{equation*}
\scrB(f,g):=\lambda(\nabla f,\nabla g)_{L^{2}(\nu_{X})}+(f,g)_{L^{2}(\mu_{X})}.
\end{equation*}
It is straightforward to verify the boundedness and coercivity of the bilinear form from Assumption~\ref{assumption:bounded:density:ratio}, that is,
\begin{align*}
|\scrB(f,g)|&\leq(\lambda\vee\zeta^{-1/2})\|f\|_{H^{1}(\nu_{X})}\|g\|_{H^{1}(\nu_{X})}, \\
\scrB(f,f)&\geq(\lambda\wedge\kappa^{-1/2})\|f\|_{H^{1}(\nu_{X})},
\end{align*}
for each $f,g\in H^{1}(\nu_{X})$. Further, since that $f_{0}\in L^{2}(\mu_{X})$, the functional $F:H\rightarrow\bbR,~g\mapsto(f_{0},g)_{L^{2}(\mu_{X})}$ is bounded and linear. Then according to Lax-Milgram theorem~\cite[Theorem 1 in Chapter 6.2]{evans2010partial}, there exists a unique solution $f^{\lambda}\in H^{1}(\nu_{X})$ to the varitional problem~\eqref{eq:proof:lemma:existence:prm:1}. This completes the proof of the uniqueness. See~\cite[Theorem 2.4.2.7]{Grisvard2011Elliptic} for the proof of the higher regularity of the solution.
\end{proof}

\section{Proofs of Results in Section~\ref{section:analysis:regression}}

\par In this section, we demonstrate proofs of theoretical results in Section~\ref{section:analysis:regression}, including Lemma~\ref{lemma:oracle:erm:regression}, Lemma~\ref{lemma:approximation:grad:bounded} and Theorem~\ref{theorem:erm:rate:regression}. The proof of Lemma~\ref{lemma:oracle:erm:regression} uses the technique of offset Rademacher complexity, which has been investigated by~\cite{{Liang2015Learning}}.

\begin{proof}[Proof of Lemma~\ref{lemma:oracle:erm:regression}]
Recall the population excess risk $R(f)$ and the empirical excess risk $\what{R}_{\euD}(f)$ defined in the proof of Lemma~\ref{lemma:oracle:erm}. We further define the regularized excess risk and regularized empirical risk as
\begin{align*}
R^{\lambda}(f)&:=R(f)+\lambda\|\nabla f\|_{L^{2}(\nu_{X})}^{2}, \\
\what{R}_{\euD}^{\lambda}(f)&:=\what{R}_{\euD}(f)+\lambda\|\nabla f\|_{L^{2}(\nu_{X})}^{2}.
\end{align*}
It suffices to shown that
\begin{multline}\label{eq:proof:theorem:empirical:minimizer:regression:0}
\bbE_{\euD}\Big[R^{\lambda}(\what{f}_{\euD}^{\lambda})\Big]\lesssim\inf_{f\in\calF}R^{\lambda}(f) \\
+\frac{B_{0}^{2}+\sigma^{2}}{\log^{-1}n}\inf_{\delta>0}\Big\{\frac{2\log N(B_{0}\delta,\calF,L^{2}(\euD))}{n}+\delta\Big\}.
\end{multline}
Before proceeding, we provide the proof sketch. Firstly, in \emph{Step (I)}, we show that 
\begin{equation}\label{eq:proof:theorem:empirical:minimizer:regression:1}
\begin{aligned}
&\bbE_{\euD}\Big[R^{\lambda}(\what{f}_{\euD}^{\lambda})-2\what{R}_{\euD}^{\lambda}(\what{f}_{\euD}^{\lambda})\Big] \\
&=\bbE_{\euD}\Big[R(\what{f}_{\euD}^{\lambda})-2\what{R}_{\euD}(\what{f}_{\euD}^{\lambda})\Big] \\
&\leq cB_{0}^{2}\inf_{\delta>0}\Big\{\frac{2\log N(B_{0}\delta,\calF,L^{2}(\euD))}{n}+\delta\Big\},
\end{aligned}
\end{equation}
where $c$ is an absolute positive constant. It remains to consider the regularized empirical risk. According to~\eqref{eq:nonparametric:regression}, we have
\begin{align*}
&\what{L}_{\euD}^{\lambda}(\what{f}_{\euD}^{\lambda}) \\
&=\what{R}_{\euD}^{\lambda}(\what{f}_{\euD}^{\lambda})-\frac{2}{n}\sum_{i=1}^{n}\xi_{i}(\what{f}_{\euD}^{\lambda}(X_{i})-f_{0}(X_{i}))+\bbE\Big[\frac{1}{n}\sum_{i=1}^{n}\xi_{i}^{2}\Big].
\end{align*}
Taking expectation with respect to $\euD\sim\mu^{n}$ on both sides of the equality yields that for each $f\in\calF$,
\begin{align*}
&\bbE_{\euD}\Big[\what{R}_{\euD}^{\lambda}(\what{f}_{\euD}^{\lambda})\Big] \\
&=\bbE_{\euD}\Big[\what{L}_{\euD}^{\lambda}(f)\Big]+2\bbE_{\euD}\Big[\frac{1}{n}\sum_{i=1}^{n}\xi_{i}\what{f}_{\euD}^{\lambda}(X_{i})\Big]-\bbE\Big[\frac{1}{n}\sum_{i=1}^{n}\xi_{i}^{2}\Big] \\
&\leq R^{\lambda}(f)+\frac{1}{2}\bbE_{\euD}\Big[\what{R}_{\euD}(\what{f}_{\euD}^{\lambda})\Big] \\
&\quad+c(B_{0}^{2}+\sigma^{2})\inf_{\delta>0}\Big\{\frac{2\log N(B_{0}\delta,\calF,L^{2}(\euD))}{n}+\delta\Big\},
\end{align*}
which implies
\begin{multline}\label{eq:proof:theorem:empirical:minimizer:regression:2}
\what{R}_{\euD}^{\lambda}(\what{f}_{\euD}^{\lambda})\leq2R^{\lambda}(f) \\
+2c(B_{0}^{2}+\sigma^{2})\inf_{\delta>0}\Big\{\frac{2\log N(B_{0}\delta,\calF,L^{2}(\euD))}{n}+\delta\Big\},
\end{multline}
where $c$ is an absolute positive constant. Here the inequality invokes
\begin{multline}\label{eq:proof:theorem:empirical:minimizer:regression:5}
\bbE_{\euD}\Big[\frac{1}{n}\sum_{i=1}^{n}\xi_{i}\what{f}_{\euD}^{\lambda}(X_{i})\Big]
\leq\frac{1}{4}\bbE_{\euD}\Big[\what{R}_{\euD}(\what{f}_{\euD}^{\lambda})\Big] \\
+\frac{8\sigma^{2}}{n}\inf_{\delta>0}\Big\{\frac{\log N(B_{0}\delta,\calF,L^{2}(\euD))}{n}+\delta\Big\} \\
+2(B_{0}^{2}+\sigma^{2})\delta,
\end{multline}
which is obtained in \emph{Step (II)}. Combining~\eqref{eq:proof:theorem:empirical:minimizer:regression:1} and~\eqref{eq:proof:theorem:empirical:minimizer:regression:2} obtains~\eqref{eq:proof:theorem:empirical:minimizer:regression:0}.

\par\noindent\emph{Step (I).}
Given a ghost sample $\euD^{\prime}=\{(X_{i}^{\prime},Y_{i}^{\prime})\}_{i=1}^{n}$, where $\{X_{i}^{\prime}\}_{i=1}^{n}$ are independently drawn from $\mu_{X}$. Further, the ghost sample $\euD^{\prime}$ is independent of $\euD=\{(X_{i},Y_{i})\}_{i=1}^{n}$. Let $\varepsilon=\{\varepsilon_{i}\}_{i=1}^{n}$ be a set of Rademacher variables and independent of $\euD$ and $\euD^{\prime}$. Since that $\what{f}_{\euD}^{\lambda}\in\calF$, by the technique of symmetrization, we have
\begin{align}
&\bbE_{\euD}\Big[R(\what{f}_{\euD}^{\lambda})-2\what{R}_{\euD}(\what{f}_{\euD}^{\lambda})\Big]\leq\bbE_{\euD}\Big[\sup_{f\in\calF}R(f)-2\what{R}_{\euD}(f)\Big] \nonumber \\
&=\bbE_{\euD}\Big[\sup_{f\in\calF}\bbE_{\euD^{\prime}}\Big[\frac{1}{n}\sum_{i=1}^{n}(f(X^{\prime}_{i})-f_{0}(X^{\prime}_{i}))^{2}\Big] \nonumber \\
&\quad-\frac{2}{n}\sum_{i=1}^{n}(f(X_{i})-f_{0}(X_{i}))^{2}\Big] \nonumber \\
&\leq\bbE_{\euD}\bbE_{\euD^{\prime}}\Big[\sup_{f\in\calF}\frac{1}{n}\sum_{i=1}^{n}(f(X^{\prime}_{i})-f_{0}(X^{\prime}_{i}))^{2} \nonumber \\
&\quad-\frac{2}{n}\sum_{i=1}^{n}(f(X_{i})-f_{0}(X_{i}))^{2}\Big] \nonumber \\
&=\bbE_{\euD}\bbE_{\euD^{\prime}}\Big[\sup_{f\in\calF}\frac{3}{2n}\sum_{i=1}^{n}(f(X^{\prime}_{i})-f_{0}(X^{\prime}_{i}))^{2} \nonumber \\
&\quad-\frac{1}{2n}\sum_{i=1}^{n}(f(X^{\prime}_{i})-f_{0}(X^{\prime}_{i}))^{2} \nonumber \\
&\quad-\frac{3}{2n}\sum_{i=1}^{n}(f(X_{i})-f_{0}(X_{i}))^{2} \nonumber \\
&\quad-\frac{1}{2n}\sum_{i=1}^{n}(f(X_{i})-f_{0}(X_{i}))^{2}\Big] \nonumber \\
&\leq\bbE_{\euD}\bbE_{\euD^{\prime}}\Big[\sup_{f\in\calF}\frac{3}{2n}\sum_{i=1}^{n}(f(X^{\prime}_{i})-f_{0}(X^{\prime}_{i}))^{2} \nonumber \\
&\quad-\frac{1}{8B_{0}^{2}n}\sum_{i=1}^{n}(f(X^{\prime}_{i})-f_{0}(X^{\prime}_{i}))^{4} \nonumber \\
&\quad-\frac{3}{2n}\sum_{i=1}^{n}(f(X_{i})-f_{0}(X_{i}))^{2} \nonumber \\
&\quad-\frac{1}{8B_{0}^{2}n}\sum_{i=1}^{n}(f(X_{i})-f_{0}(X_{i}))^{4}\Big] \nonumber \\
&=\bbE_{\euD}\bbE_{\varepsilon}\Big[\sup_{f\in\calF}\frac{3}{n}\sum_{i=1}^{n}\varepsilon_{i}(f(X_{i})-f_{0}(X_{i}))^{2} \nonumber \\
&\quad-\frac{1}{4B_{0}^{2}n}\sum_{i=1}^{n}(f(X_{i})-f_{0}(X_{i}))^{4}\Big], \label{eq:proof:lemma:oracle:erm:regression:1:1}
\end{align}
where the second inequality follows from the convexity of supremum and Jensen's inequality, and the third inequality is owing to the fact that $0\leq(f(X_{i})-f_{0}(X_{i}))^{2}\leq4B_{0}^{2}$ for each $f\in\calF$.

\par Let $\delta>0$ and let $\calF_{\delta}$ be an $L^{2}(\euD)$ $(B_{0}\delta)$-cover of $\calF$ satisfying $|\calF_{\delta}|=N(B_{0}\delta,\calF,L^{2}(\euD))$. Then it follows from Cauchy-Schwarz inequality that for each $f\in\calF$, there exists $f_{\delta}\in\calF_{\delta}$ such that
\begin{align*}
&\frac{1}{n}\sum_{i=1}^{n}\varepsilon_{i}(f(X_{i})-f_{0}(X_{i}))^{2}-\frac{1}{n}\sum_{i=1}^{n}\varepsilon_{i}(f_{\delta}(X_{i})-f_{0}(X_{i}))^{2}, \\
&\leq\Big(\frac{1}{n}\sum_{i=1}^{n}(f(X_{i})+f_{\delta}(X_{i})-2f_{0}(X_{i}))^{2} \\
&\quad\times(f(X_{i})-f_{\delta}(X_{i}))^{2}\Big)^{1/2}\Big(\frac{1}{n}\sum_{i=1}^{n}\varepsilon_{i}^{2}\Big)^{1/2} \\
&\quad\leq 4B_{0}^{2}\delta.
\end{align*}
By a same argument, we obtain
\begin{align*}
&\frac{1}{B_{0}^{2}n}\Big(-\sum_{i=1}^{n}(f(X_{i})-f_{0}(X_{i}))^{4}+\sum_{i=1}^{n}(f_{\delta}(X_{i})-f_{0}(X_{i}))^{4}\Big) \\
&\leq 32B_{0}^{2}\delta.
\end{align*}
Combining \eqref{eq:proof:theorem:empirical:minimizer:1:1} with above two inequalities yields
\begin{equation}\label{eq:proof:lemma:oracle:erm:regression:1:2}
\begin{aligned}
&\bbE_{\euD}\Big[R(\what{f}_{\euD}^{\lambda})-2\what{R}_{\euD}(\what{f}_{\euD}^{\lambda})\Big]-20B_{0}^{2}\delta\\
&\leq\bbE_{\euD}\bbE_{\varepsilon}\Big[\max_{f\in\calF_{\delta}}\frac{3}{n}\sum_{i=1}^{n}\varepsilon_{i}(f(X_{i})-f_{0}(X_{i}))^{2} \\
&\quad-\frac{1}{4B_{0}^{2}n}\sum_{i=1}^{n}(f(X_{i})-f_{0}(X_{i}))^{4}\Big].
\end{aligned}
\end{equation}

\par In order to estimate the expectation in \eqref{eq:proof:lemma:oracle:erm:regression:1:2}, we consider the following probability conditioning on $\euD=\{(X_{i},Y_{i})\}_{i=1}^{n}$
\begin{multline*}
\pr_{\varepsilon}\Big\{\frac{3}{n}\sum_{i=1}^{n}\varepsilon_{i}(f(X_{i})-f_{0}(X_{i}))^{2} \\
>t+\frac{1}{4B_{0}^{2}n}\sum_{i=1}^{n}(f(X_{i})-f_{0}(X_{i}))^{4}\Big\}.
\end{multline*}
For a fixed sample $\euD=\{(X_{i},Y_{i})\}_{i=1}^{n}$, the random variables $\{\varepsilon_{i}(f(X_{i})-f_{0}(X_{i}))^{2}\}_{i=1}^{n}$ are independent and satisfy
\begin{equation*}
\bbE_{\varepsilon}\big[\varepsilon_{i}(f(X_{i})-f_{0}(X_{i}))^{2}\big]=0,
\end{equation*}
and for each $1\leq i\leq n$,
\begin{align*}
-(f(X_{i})-f_{0}(X_{i}))^{2}
&\leq\varepsilon_{i}(f(X_{i})-f_{0}(X_{i}))^{2} \\
&\leq(f(X_{i})-f_{0}(X_{i}))^{2}.
\end{align*}
Consequently, it follows from Hoeffding's inequality \cite[Lemma D.2]{mohri2018foundations} that
\begin{align*}
&\pr_{\varepsilon}\Big\{\frac{3}{n}\sum_{i=1}^{n}\varepsilon_{i}(f(X_{i})-f_{0}(X_{i}))^{2} \\
&\quad>t+\frac{1}{4B_{0}^{2}n}\sum_{i=1}^{n}(f(X_{i})-f_{0}(X_{i}))^{4}\Big\} \\
&\leq\exp\Bigg(-\frac{(\frac{nt}{3}+\frac{1}{12B_{0}^{2}}\sum_{i=1}^{n}(f(X_{i})-f_{0}(X_{i}))^{4})^{2}}{2\sum_{i=1}^{n}(f(X_{i})-f_{0}(X_{i}))^{4}}\Bigg) \\
&\leq\exp\Big(-\frac{nt}{18B_{0}^{2}}\Big),
\end{align*}
where we used the numeric inequality that $(a+y)^{2}/y\geq4a$ for each $a>0$. Then with the aid of the above estimate of the tail probability, it follows that
\begin{align*}
&\bbE_{\varepsilon}\Big[\max_{f\in\calF_{\delta}}\frac{3}{n}\sum_{i=1}^{n}\varepsilon_{i}(f(X_{i})-f_{0}(X_{i}))^{2} \\
&\quad-\frac{1}{4B_{0}^{2}n}\sum_{i=1}^{n}(f(X_{i})-f_{0}(X_{i}))^{4}\Big] \\
&\leq T+N(B_{0}\delta,\calF,L^{2}(\euD))\int_{T}^{\infty}\exp\Big(-\frac{nt}{18B_{0}^{2}}\Big)dt \\
&=T+\frac{18B_{0}^{2}}{n}N(B_{0}\delta,\calF,L^{2}(\euD))\exp\Big(-\frac{nT}{18B_{0}^{2}}\Big).
\end{align*}
By setting $T=18B_{0}^{2}\log N(B_{0}\delta,\calF,L^{2}(\euD))n^{-1}$, we deduces
\begin{equation}\label{eq:proof:lemma:oracle:erm:regression:1:3}
\begin{aligned}
&\bbE_{\varepsilon}\Big[\max_{f\in\calF_{\delta}}\frac{3}{n}\sum_{i=1}^{n}\varepsilon_{i}(f(X_{i})-f_{0}(X_{i}))^{2} \\
&\quad-\frac{1}{4B_{0}^{2}n}\sum_{i=1}^{n}(f(X_{i})-f_{0}(X_{i}))^{4}\Big] \\
&\leq\frac{18B_{0}^{2}}{n}(1+\log N(B_{0}\delta,\calF,L^{2}(\euD))).
\end{aligned}
\end{equation}
Combining \eqref{eq:proof:lemma:oracle:erm:regression:1:2} and \eqref{eq:proof:lemma:oracle:erm:regression:1:3} implies that
\begin{equation}\label{eq:proof:lemma:oracle:erm:regression:1:4}
\begin{aligned}
&\bbE_{\euD}\Big[R(\what{f}_{\euD}^{\lambda})-2\what{R}_{\euD}(\what{f}_{\euD}^{\lambda})\Big] \\
&\leq \frac{18B_{0}^{2}}{n}(1+\log N(B_{0}\delta,\calF,L^{2}(\euD)))+20B_{0}^{2}\delta.
\end{aligned}
\end{equation}
This completes the proof of~\eqref{eq:proof:theorem:empirical:minimizer:regression:1}.

\par\noindent\emph{Step (II).} 
Recall the $L^{2}(\euD)$ $(B_{0}\delta)$-cover $\calF_{\delta}$ of the hypothesis class $\calF$. There exists $f_{\delta}\in\calF_{\delta}$ such that 
\begin{equation*}
\frac{1}{n}\sum_{i=1}^{n}|f_{\delta}(X_{i})-\what{f}_{\euD}^{\lambda}(X_{i})|^{2}\leq(B_{0}\delta)^{2}, 
\end{equation*}
which implies
\begin{equation}\label{eq:proof:lemma:oracle:erm:regression:2:1}
\begin{aligned}
&\bbE_{\euD}\Big[\frac{1}{n}\sum_{i=1}^{n}\xi_{i}(\what{f}_{\euD}^{\lambda}(X_{i})-f_{\delta}(X_{i}))\Big] \\
&\leq\bbE_{\euD}^{1/2}\Big[\frac{1}{n}\sum_{i=1}^{n}\xi_{i}^{2}\Big]\bbE_{\euD}^{1/2}\Big[\frac{1}{n}\sum_{i=1}^{n}(\what{f}_{\euD}^{\lambda}(X_{i})-f_{\delta}(X_{i}))^{2}\Big] \\
&\leq B_{0}\sigma\delta, 
\end{aligned}
\end{equation}
and
\begin{equation}\label{eq:proof:lemma:oracle:erm:regression:2:2}
\begin{aligned}
&\what{R}_{\euD}^{1/2}(f_{\delta}) \\
&\leq\Big(\frac{1}{n}\sum_{i=1}^{n}(f_{\delta}(X_{i})-\what{f}_{\euD}^{\lambda}(X_{i}))^{2}\Big)^{1/2}+\what{R}_{\euD}^{1/2}(\what{f}_{\euD}^{\lambda}) \\
&\leq B_{0}\delta+\what{R}_{\euD}^{1/2}(\what{f}_{\euD}^{\lambda}),
\end{aligned}
\end{equation}
where we used Cauchy-Schwarz inequality and Assumption \ref{assumption:subGaussian}. Consequently, we have
\begin{align}
&\bbE_{\euD}\Big[\frac{1}{n}\sum_{i=1}^{n}\xi_{i}\what{f}_{\euD}^{\lambda}(X_{i})\Big] \nonumber \\
&=\bbE_{\euD}\Big[\frac{1}{n}\sum_{i=1}^{n}\xi_{i}(\what{f}_{\euD}^{\lambda}(X_{i})-f_{0}(X_{i}))\Big] \nonumber \\
&\leq\bbE_{\euD}\Big[\frac{1}{n}\sum_{i=1}^{n}\xi_{i}(f_{\delta}(X_{i})-f_{0}(X_{i}))\Big]+B_{0}\sigma\delta \nonumber \\
&\leq\bbE_{\euD}\Big[\frac{\what{R}_{\euD}^{1/2}(\what{f}_{\euD}^{\lambda})+B_{0}\delta}{\sqrt{n}}\psi(f_{\delta})\Big]+B_{0}\sigma\delta \nonumber \\
&\leq\Big(\bbE_{\euD}^{1/2}\Big[\what{R}_{\euD}(\what{f}_{\euD}^{\lambda})\Big]+B_{0}\delta\Big)\frac{1}{\sqrt{n}}\bbE_{\euD}^{1/2}\Big[\psi^{2}(f_{\delta})\Big]+B_{0}\sigma\delta \nonumber \\
&\leq\frac{1}{4}\bbE_{\euD}\Big[\what{R}_{\euD}(\what{f}_{\euD}^{\lambda})\Big]+\frac{2}{n}\bbE_{\euD}\Big[\psi^{2}(f_{\delta})\Big] \nonumber \\
&\quad+\frac{1}{4}B_{0}^{2}\delta^{2}+B_{0}\sigma\delta. \label{eq:proof:theorem:empirical:minimizer:2:3}
\end{align}
Here, the first inequality holds from \eqref{eq:proof:lemma:oracle:erm:regression:2:1}, the second inequality is from \eqref{eq:proof:lemma:oracle:erm:regression:2:2}, where
\begin{equation*}
\psi(f_{\delta}):=\frac{\sum_{i=1}^{n}\xi_{i}(f_{\delta}(X_{i})-f_{0}(X_{i}))}{\sqrt{n}\what{R}_{\euD}^{1/2}(f_{\delta})}.
\end{equation*}
The third inequality follows from Cauchy-Schwarz inequality, while the last one is owing to the inequality $ab\leq a^{2}/4+b^{2}$ for $a,b>0$. Observe that for each fixed $f$ independent of $\xi$, the random variable $\psi(f)$ is sub-Gaussian with variance proxy $\sigma^{2}$.
Then using Lemma \ref{lemma:subgaussian:max} gives that
\begin{equation}\label{eq:proof:theorem:empirical:minimizer:2:4}
\bbE_{\xi}\Big[\psi^{2}(f_{\delta})\Big]\leq\bbE_{\xi}\Big[\max_{f\in\calF_{\delta}}\psi^{2}(f)\Big]\leq4\sigma^{2}(\log|\calF_{\delta}|+1).
\end{equation}
Combining~\eqref{eq:proof:theorem:empirical:minimizer:2:3} and~\eqref{eq:proof:theorem:empirical:minimizer:2:4} yields~\eqref{eq:proof:theorem:empirical:minimizer:regression:5}.
\end{proof}

\begin{proof}[Proof of Lemma~\ref{lemma:approximation:grad:bounded}]
Using Lemma~\ref{lemma:approximation:requ}, by setting $k=0$, we obtain the estimate in $L^{2}(\mu_{X})$-norm. Further, setting $k=1$ yields the estimate for the first-order derivative. This completes the proof.
\end{proof}

\begin{proof}[Proof of Theorem~\ref{theorem:erm:rate:regression}]
According to Lemma 4.2, we set the hypothesis class $\calF$ as ReQU neural networks $\calF=\calN(L,S)$ with $L=\calO(\log N)$ and $S=\calO(N^{d})$. Then there exists $f\in\calF$ such that $\|f-\phi\|_{L^{2}(\mu_{X})}\leq CN^{-s}$ and $\|\nabla f\|_{L^{2}(\nu_{X})}\leq C$. By using Lemma~\ref{lemma:covering:number:sparse} and set $\delta=1/n$, we find 
\begin{multline*}
\log N(B_{0}n^{-1},\calF,L^{2}(\euD)) \\
\lesssim LS\log S\log n\lesssim N^{d}\log^{2}N\log n.
\end{multline*}
Substituting these estimates into Lemma~\ref{lemma:oracle:erm:regression} yields
\begin{multline*}
\bbE_{\euD}\Big[\|\what{f}_{\euD}^{\lambda}-f_{0}\|_{L^{2}(\mu_{X})}^{2}\Big] \\
\lesssim CN^{-2s}+C\lambda+C\log n\frac{N^{d}\log^{2}N}{n\log^{-1}(n)}.
\end{multline*}
Letting $N=\calO(n^{\frac{1}{d+2s}})$ and $\lambda=\calO(n^{-\frac{2s}{d+2s}}\log^{3}n)$ deduces the desired result.
\end{proof}

\section{Proofs of Results in Section~\ref{section:analysis:derivative}}

\par In this section, we show proofs of theoretical results in Section~\ref{section:analysis:derivative}. The proofs for the deep Sobolev regressor are shown in Section~\ref{section:appendix:DORE}, and proofs for semi-supervised deep Sobolev regressor are shown in Section~\ref{section:appendix:SDORE}.

\begin{proof}[Proof of Lemma~\ref{lemma:rate:population:minimizer}]
It follows from~\eqref{eq:proof:lemma:existence:prm:1} that
\begin{align*}
&\lambda(\nabla(f^{\lambda}-f_{0}),\nabla h)_{L^{2}(\nu_{X})}+(f^{\lambda}-f_{0},h)_{L^{2}(\mu_{X})} \\
&=\lambda(\Delta f_{0}+\nabla f_{0}\cdot\nabla(\log q),h)_{L^{2}(\nu_{X})} , \quad \forall h\in H^{1}(\nu_{X}),
\end{align*}
where we used Lemma~\ref{lemma:Green} and Assumption~\ref{assumption:regularity:f0}. By setting $h=f^{\lambda}-f_{0}\in H^{1}(\nu_{X})$ and using Cauchy-Schwarz inequality, we derive
\begin{align}
&\lambda\|\nabla(f^{\lambda}-f_{0})\|_{L^{2}(\nu_{X})}^{2}+\|f^{\lambda}-f_{0}\|_{L^{2}(\mu_{X})}^{2} \nonumber \\
&\leq\lambda\|\Delta f_{0}+\nabla f_{0}\cdot\nabla(\log q)\|_{L^{2}(\nu_{X})}\|f^{\lambda}-f_{0}\|_{L^{2}(\nu_{X})} \nonumber \\
&\leq\lambda\kappa^{1/2}\|\Delta f_{0}+\nabla f_{0}\cdot\nabla(\log q)\|_{L^{2}(\nu_{X})} \nonumber \\
&\quad\times\|f^{\lambda}-f_{0}\|_{L^{2}(\mu_{X})} \label{eq:proof:lemma:rate:population:minimizer:1}
\end{align}
which implies immediately
\begin{equation}\label{eq:proof:lemma:rate:population:minimizer:2}
\|f^{\lambda}-f_{0}\|_{L^{2}(\mu_{X})}\leq\lambda\kappa^{1/2}\|\Delta f_{0}+\nabla f_{0}\cdot\nabla(\log q)\|_{L^{2}(\nu_{X})}.
\end{equation}
Substituting~\eqref{eq:proof:lemma:rate:population:minimizer:2} into~\eqref{eq:proof:lemma:rate:population:minimizer:1} deduces the estimate for the derivative, which completes the proof.
\end{proof}

\subsection{Deep Sobolev Regressor}\label{section:appendix:DORE}

\begin{proof}[Proof of Lemma~\ref{lemma:oracle:erm}]
For simplicity of notation, we define the empirical inner-product and norm based on the sample $\euD=\{(X_{i},Y_{i})\}_{i=1}^{n}$, respectively, as
\begin{align*}
(u,v)_{L^{2}(\euD)}&=\frac{1}{n}\sum_{i=1}^{n}u(X_{i})v(X_{i}), \\
\|u\|_{L^{2}(\euD)}^{2}&=\frac{1}{n}\sum_{i=1}^{n}u^{2}(X_{i}),
\end{align*}
for each $u,v\in L^{\infty}(\mu_{X})$. Then we define the excess risk and its empirical counterpart, respectively, as
\begin{equation*}
R(f)=\|f-f_{0}\|_{L^{2}(\mu_{X})}^{2}
\quad\text{and}\quad
\what{R}_{\euD}(f)=\|f-f_{0}\|_{L^{2}(\euD)}^{2}.
\end{equation*}
The proof is divided into five parts which are denoted by (I) to (V):
\begin{enumerate}
\item[(I)] We first relate the excess risk with its empirical counterpart:
\begin{equation}\label{eq:proof:theorem:empirical:minimizer:1}
\begin{aligned}
&\bbE_{\euD}\Big[R(\what{f}_{\euD}^{\lambda})-\what{R}_{\euD}(\what{f}_{\euD}^{\lambda})\Big] \\
&\leq 4B_{0}^{2}\Big\{\Big(\frac{2\log N(B_{0}\delta,\calF,L^{2}(\euD))}{n}\Big)^{\frac{1}{2}}+\delta\Big\}.
\end{aligned}
\end{equation}
\item[(II)] We next derive the following inequality for preparation:
\begin{equation}\label{eq:proof:theorem:empirical:minimizer:2}
\begin{aligned}
&\bbE_{\euD}\Big[\frac{1}{n}\sum_{i=1}^{n}\xi_{i}\what{f}_{\euD}^{\lambda}(X_{i})\Big] \\
&\leq \frac{B_{0}^{2}+\sigma^{2}}{\log^{-1}n}\Big\{\Big(\frac{2\log N(B_{0}\delta,\calF,L^{2}(\euD))}{n}\Big)^{\frac{1}{2}}+\delta\Big\}.
\end{aligned}
\end{equation}
\item[(III)] With the help of variational inequality, we obtain the following inequality:
\begin{equation}\label{eq:proof:theorem:empirical:minimizer:3}
\begin{aligned}
&\lambda\bbE_{\euD}\Big[\|\nabla(\what{f}_{\euD}^{\lambda}-f_{0})\|_{L^{2}(\nu_{X})}^{2}\Big]+\bbE_{\euD}\Big[\what{R}_{\euD}(\what{f}_{\euD}^{\lambda})\Big] \\
&\leq\frac{1}{8}\bbE_{\euD}\Big[R(\what{f}_{\euD}^{\lambda})\Big]+2\bbE_{\euD}\Big[\frac{1}{n}\sum_{i=1}^{n}\xi_{i}\what{f}_{\euD}^{\lambda}(X_{i})\Big] \\
&\quad+c\Big\{\beta\lambda^{2}+\varepsilon_{\app}(\calF,\lambda)\Big\},
\end{aligned}
\end{equation}
where $c$ is an absolute positive constant. Here the constant $\beta$ and the approximation error $\varepsilon_{\app}(\calF,\lambda)$ is defined as
\begin{align*}
&\beta=\kappa\Big\{\|\Delta f_{0}\|_{L^{2}(\nu_{X})}^{2}+\|\nabla f_{0}\cdot\nabla(\log q)\|_{L^{2}(\nu_{X})}^{2}\Big\}, \\
&\varepsilon_{\app}(\calF,\lambda) \\
&=\inf_{f\in\calF}\Big\{\|f-f_{0}\|_{L^{2}(\mu_{X})}^{2}+\lambda\|\nabla(f-f_{0})\|_{L^{2}(\nu_{X})}^{2}\Big\}.
\end{align*}

\item[(IV)] Combining~\eqref{eq:proof:theorem:empirical:minimizer:1},~\eqref{eq:proof:theorem:empirical:minimizer:2} and~\eqref{eq:proof:theorem:empirical:minimizer:3}, we obtain an estimate for $L^{2}(\mu_{X})$-error:
\begin{equation}\label{eq:proof:theorem:empirical:minimizer:4}
\begin{aligned}
&\bbE_{\euD}\Big[\|\what{f}_{\euD}^{\lambda}-f_{0}\|_{L^{2}(\mu_{X})}^{2}\Big] \\
&\lesssim\beta\lambda^{2}+\varepsilon_{\app}(\calF,\lambda)+\varepsilon_{\gen}(\calF,n),
\end{aligned}
\end{equation}
and an estimate for $L^{2}(\nu_{X})$-error of the gradient:
\begin{equation}\label{eq:proof:theorem:empirical:minimizer:5}
\begin{aligned}
&\bbE_{\euD}\Big[\|\nabla(\what{f}_{\euD}^{\lambda}-f_{0})\|_{L^{2}(\nu_{X})}^{2}\Big] \\
&\lesssim\beta\lambda+\frac{\varepsilon_{\app}(\calF,\lambda)}{\lambda}+\frac{\varepsilon_{\gen}(\calF,n)}{\lambda}.
\end{aligned}
\end{equation}
Here the generalization error $\varepsilon_{\gen}(\calF,n)$ is defined as
\begin{align*}
&\varepsilon_{\gen}(\calF,n) \\
&=\frac{B_{0}^{2}+\sigma^{2}}{\log^{-1}n}\inf_{\delta>0}\Big\{\Big(\frac{2\log N(B_{0}\delta,\calF,L^{2}(\euD))}{n}\Big)^{\frac{1}{2}}+\delta\Big\}.
\end{align*}
\end{enumerate}

\par\noindent\emph{Step (I).}
Given a ghost sample $\euD^{\prime}=\{(X_{i}^{\prime},Y_{i}^{\prime})\}_{i=1}^{n}$, where $\{X_{i}^{\prime}\}_{i=1}^{n}$ are independently and identically drawn from $\mu_{X}$. Further, the ghost sample $\euD^{\prime}$ is independent of $\euD=\{(X_{i},Y_{i})\}_{i=1}^{n}$. Let $\varepsilon=\{\varepsilon_{i}\}_{i=1}^{n}$ be a set of Rademacher variables and independent of $\euD$ and $\euD^{\prime}$. Since that $\what{f}_{\euD}^{\lambda}\in\conv(\calF)$, by the technique of symmetrization, we have
\begin{align}
&\bbE_{\euD}\Big[R(\what{f}_{\euD}^{\lambda})-\what{R}_{\euD}(\what{f}_{\euD}^{\lambda})\Big]\leq\bbE_{\euD}\Big[\sup_{f\in\conv(\calF)}R(f)-\what{R}_{\euD}(f)\Big] \nonumber \\
&=\bbE_{\euD}\Big[\sup_{f\in\conv(\calF)}\bbE_{\euD^{\prime}}\Big[\frac{1}{n}\sum_{i=1}^{n}(f(X^{\prime}_{i})-f_{0}(X^{\prime}_{i}))^{2}\Big] \nonumber \\
&\quad-\frac{1}{n}\sum_{i=1}^{n}(f(X_{i})-f_{0}(X_{i}))^{2}\Big] \nonumber \\
&\leq\bbE_{\euD}\bbE_{\euD^{\prime}}\Big[\sup_{f\in\conv(\calF)}\frac{1}{n}\sum_{i=1}^{n}(f(X^{\prime}_{i})-f_{0}(X^{\prime}_{i}))^{2} \nonumber \\
&\quad-\frac{1}{n}\sum_{i=1}^{n}(f(X_{i})-f_{0}(X_{i}))^{2}\Big] \nonumber \\
&=\bbE_{\euD}\bbE_{\euD^{\prime}}\bbE_{\varepsilon}\Big[\sup_{f\in\conv(\calF)}\frac{1}{n}\sum_{i=1}^{n}\varepsilon_{i}\big((f(X^{\prime}_{i})-f_{0}(X^{\prime}_{i}))^{2} \nonumber \\
&\quad-(f(X_{i})-f_{0}(X_{i}))^{2}\big)\Big] \nonumber \\
&=2\bbE_{\euD}\bbE_{\varepsilon}\Big[\sup_{f\in\conv(\calF)}\frac{1}{n}\sum_{i=1}^{n}\varepsilon_{i}(f(X_{i})-f_{0}(X_{i}))^{2}\Big] \nonumber \\
&\leq 4B_{0}\bbE_{\euD}\bbE_{\varepsilon}\Big[\sup_{f\in\conv(\calF)}\frac{1}{n}\sum_{i=1}^{n}\varepsilon_{i}(f(X_{i})-f_{0}(X_{i}))\Big] \nonumber \\
&=4B_{0}\bbE_{\euD}\bbE_{\varepsilon}\Big[\sup_{f\in\conv(\calF)}\frac{1}{n}\sum_{i=1}^{n}\varepsilon_{i}f(X_{i})\Big] \nonumber \\
&=4B_{0}\bbE_{\euD}\bbE_{\varepsilon}\Big[\sup_{f\in\calF}\frac{1}{n}\sum_{i=1}^{n}\varepsilon_{i}f(X_{i})\Big], \label{eq:proof:theorem:empirical:minimizer:1:1}
\end{align}
where the second inequality follows from the convexity of supremum and Jensen's inequality, and the third inequality holds from Ledoux-Talagrand contraction inequality~\cite[Lemma 5.7]{mohri2018foundations} and the fact that $0\leq|f(X_{i})-f_{0}(X_{i})|\leq 2B_{0}$ for each $f\in\conv(\calF)$ and each $1\leq i\leq n$. The last equality invokes the fact that the Rademacher complexity of the convex hull of $\calF$ is equal to that of $\calF$.

\par Let $\delta>0$ and let $\calF_{\delta}$ be an $L^{2}(\euD)$ $(B_{0}\delta)$-cover of $\calF$ satisfying $|\calF_{\delta}|=N(B_{0}\delta,\calF,L^{2}(\euD))$. Then it follows from Cauchy-Schwarz inequality that for each $f\in\calF$, there exists $f_{\delta}\in\calF_{\delta}$ such that
\begin{align*}
&\frac{1}{n}\sum_{i=1}^{n}\varepsilon_{i}f(X_{i})-\frac{1}{n}\sum_{i=1}^{n}\varepsilon_{i}f_{\delta}(X_{i}) \\
&\leq\Big(\frac{1}{n}\sum_{i=1}^{n}\varepsilon_{i}^{2}\Big)^{1/2}\Big(\frac{1}{n}\sum_{i=1}^{n}(f(X_{i})-f_{\delta}(X_{i}))^{2}\Big)^{1/2}\leq B_{0}\delta.
\end{align*}
Combining~\eqref{eq:proof:theorem:empirical:minimizer:1:1} with the above inequality yields
\begin{align}
&\bbE_{\euD}\Big[R(\what{f}_{\euD}^{\lambda})-\what{R}_{\euD}(\what{f}_{\euD}^{\lambda})\Big] \nonumber \\
&\leq 4B_{0}\bbE_{\euD}\bbE_{\varepsilon}\Big[\sup_{f\in\calF_{\delta}}\frac{1}{n}\sum_{i=1}^{n}\varepsilon_{i}f(X_{i})\Big]+4B_{0}^{2}\delta \nonumber \\
&\leq 4B_{0}^{2}\Big(\frac{2\log|\calF_{\delta}|}{n}\Big)^{\frac{1}{2}}+4B_{0}^{2}\delta \nonumber \\
&=4B_{0}^{2}\Big(\frac{2\log N(B_{0}\delta,\calF,L^{2}(\euD))}{n}\Big)^{\frac{1}{2}}+4B_{0}^{2}\delta, \label{eq:proof:theorem:empirical:minimizer:1:2}
\end{align}
where the last inequality holds from Massart's lemma~\cite[Theroem 3.7]{mohri2018foundations}. This completes the proof of~\eqref{eq:proof:theorem:empirical:minimizer:1}.

\par\noindent\emph{Step (II).} 
According to~\cite[Lemma 4]{Bartlett2002Rademacher}, the Gaussian complexity can be bounded by the Rademacher complexity, that is,
\begin{align}
&\bbE_{\euD}\Big[\frac{1}{n}\sum_{i=1}^{n}\xi_{i}\what{f}_{\euD}^{\lambda}(X_{i})\Big] \nonumber \\
&\leq\bbE_{\euD}\Big[\sup_{f\in\conv(\calF)}\frac{1}{n}\sum_{i=1}^{n}\xi_{i}f(X_{i})\Big] \nonumber \\
&\leq\sigma(\log n)\bbE_{\euD}\bbE_{\varepsilon}\Big[\sup_{f\in\conv(\calF)}\frac{1}{n}\sum_{i=1}^{n}\varepsilon_{i}f(X_{i})\Big], \label{eq:proof:theorem:empirical:minimizer:2:1}
\end{align}
where $\varepsilon=\{\varepsilon_{i}\}_{i=1}^{n}$ is a set of Rademacher variables and independent of $\euD$. By the same argument as~\eqref{eq:proof:theorem:empirical:minimizer:1:1} and~\eqref{eq:proof:theorem:empirical:minimizer:1:2}, we have 
\begin{align}
&\bbE_{\euD}\bbE_{\varepsilon}\Big[\sup_{f\in\conv(\calF)}\frac{1}{n}\sum_{i=1}^{n}\varepsilon_{i}f(X_{i})\Big] \nonumber \\
&=\bbE_{\euD}\bbE_{\varepsilon}\Big[\sup_{f\in\calF}\frac{1}{n}\sum_{i=1}^{n}\varepsilon_{i}f(X_{i})\Big] \nonumber \\
&\leq B_{0}\Big(\frac{2\log N(B_{0}\delta,\calF,L^{2}(\euD))}{n}\Big)^{\frac{1}{2}}+\delta. \label{eq:proof:theorem:empirical:minimizer:2:2}
\end{align}
Combining~\eqref{eq:proof:theorem:empirical:minimizer:2:1} and~\eqref{eq:proof:theorem:empirical:minimizer:2:2} completes the proof of~\eqref{eq:proof:theorem:empirical:minimizer:2}.

\par\noindent\emph{Step (III).} For each element $f\in\conv(\calF)$, by the convexity of $\conv(\calF)$ we have $\what{f}_{\euD}^{\lambda}+t(f-\what{f}_{\euD}^{\lambda})\in\conv(\calF)$ for each $t\in[0,1]$. Now the optimality of $\what{f}_{\euD}^{\lambda}$ yields that for each $t\in[0,1]$
\begin{equation*}
\what{L}_{\euD}^{\lambda}(\what{f}_{\euD}^{\lambda})-\what{L}_{\euD}^{\lambda}(\what{f}_{\euD}^{\lambda}+t(f-\what{f}_{\euD}^{\lambda}))\leq 0,
\end{equation*}
which implies
\begin{align*}
&\lim_{t\rightarrow0^{+}}\frac{1}{t}\Big(\what{L}_{\euD}^{\lambda}(\what{f}_{\euD}^{\lambda})-\what{L}_{\euD}^{\lambda}(\what{f}_{\euD}^{\lambda}+t(f-\what{f}_{\euD}^{\lambda}))\Big) \\
&=\lambda(\nabla \what{f}_{\euD}^{\lambda},\nabla(\what{f}_{\euD}^{\lambda}-f))_{L^{2}(\nu_{X})} \nonumber \\
&\quad+\frac{1}{n}\sum_{i=1}^{n}(\what{f}_{\euD}^{\lambda}(X_{i})-Y_{i})(\what{f}_{\euD}^{\lambda}(X_{i})-f(X_{i}))\leq 0.
\end{align*}
Therefore, it follows from (1) that for each $f\in\conv(\calF)$,
\begin{equation}\label{eq:proof:theorem:empirical:minimizer:3:1}
\begin{aligned}
&\lambda(\nabla \what{f}_{\euD}^{\lambda},\nabla(\what{f}_{\euD}^{\lambda}-f))_{L^{2}(\nu_{X})}+(\what{f}_{\euD}^{\lambda}-f_{0},\what{f}_{\euD}^{\lambda}-f)_{L^{2}(\euD)} \\
&\leq\frac{1}{n}\sum_{i=1}^{n}\xi_{i}(\what{f}_{\euD}^{\lambda}(X_{i})-f(X_{i})).
\end{aligned}
\end{equation}
For the first term in the left-hand side of~\eqref{eq:proof:theorem:empirical:minimizer:3:1}, it follows from the linearity of inner-product that
\begin{align}
&\lambda(\nabla \what{f}_{\euD}^{\lambda},\nabla(\what{f}_{\euD}^{\lambda}-f))_{L^{2}(\nu_{X})} \nonumber \\
&=\lambda(\nabla(\what{f}_{\euD}^{\lambda}-f_{0})+\nabla f_{0},\nabla(\what{f}_{\euD}^{\lambda}-f_{0})-\nabla(f-f_{0}))_{L^{2}(\nu_{X})} \nonumber \\
&=\lambda\|\nabla(\what{f}_{\euD}^{\lambda}-f_{0})\|_{L^{2}(\nu_{X})}^{2}+\lambda(\nabla f_{0},\nabla(\what{f}_{\euD}^{\lambda}-f))_{L^{2}(\nu_{X})} \nonumber \\
&\quad-\lambda(\nabla(\what{f}_{\euD}^{\lambda}-f_{0}),\nabla(f-f_{0}))_{L^{2}(\nu_{X})}. \label{eq:proof:theorem:empirical:minimizer:3:2}
\end{align}
Then using Lemma~\ref{lemma:Green} and Assumption~\ref{assumption:regularity:f0}, one obtains easily
\begin{align}
&-\lambda(\nabla f_{0},\nabla(\what{f}_{\euD}^{\lambda}-f))_{L^{2}(\nu_{X})}  \nonumber \\
&=\lambda(\Delta f_{0},\what{f}_{\euD}^{\lambda}-f)_{L^{2}(\nu_{X})} \nonumber \\
&\quad+\lambda(\nabla f_{0}\cdot\nabla(\log q),\what{f}_{\euD}^{\lambda}-f)_{L^{2}(\nu_{X})} \nonumber \\
&\leq\lambda\kappa^{1/2}\Big\{\|\Delta f_{0}\|_{L^{2}(\nu_{X})}+\|\nabla f_{0}\cdot\nabla(\log q)\|_{L^{2}(\nu_{X})}\Big\} \nonumber \\
&\quad\times\Big\{R^{1/2}(\what{f}_{\euD}^{\lambda})+\|f-f_{0}\|_{L^{2}(\mu_{X})}\Big\} \nonumber \\
&\leq9\lambda^{2}\kappa\Big\{\|\Delta f_{0}\|_{L^{2}(\nu_{X})}^{2}+\|\nabla f_{0}\cdot\nabla(\log q)\|_{L^{2}(\nu_{X})}^{2}\Big\} \nonumber \\
&\quad+\frac{1}{16}R(\what{f}_{\euD}^{\lambda})+\frac{1}{2}\|f-f_{0}\|_{L^{2}(\mu_{X})}^{2}, \label{eq:proof:theorem:empirical:minimizer:3:3}
\end{align}
where the first inequality holds from Cauchy-Schwarz inequality and the triangular inequality, and the last inequality is due to $ab\leq\epsilon a^{2}+b^{2}/(4\epsilon)$ for $a,b,\epsilon>0$. Similarly, we also find that
\begin{equation}\label{eq:proof:theorem:empirical:minimizer:3:4}
\begin{aligned}
&\lambda(\nabla(\what{f}_{\euD}^{\lambda}-f_{0}),\nabla(f-f_{0}))_{L^{2}(\nu_{X})} \\
&\leq\frac{\lambda}{2}\|\nabla(\what{f}_{\euD}^{\lambda}-f_{0})\|_{L^{2}(\nu_{X})}^{2}+\frac{\lambda}{2}\|\nabla(f-f_{0})\|_{L^{2}(\nu_{X})}^{2}.
\end{aligned}
\end{equation}
Using~\eqref{eq:proof:theorem:empirical:minimizer:3:2},~\eqref{eq:proof:theorem:empirical:minimizer:3:3} and~\eqref{eq:proof:theorem:empirical:minimizer:3:4} yields
\begin{equation}\label{eq:proof:theorem:empirical:minimizer:3:5}
\begin{aligned}
&\frac{\lambda}{2}\|\nabla(\what{f}_{\euD}^{\lambda}-f_{0})\|_{L^{2}(\nu_{X})}^{2} \\
&\leq\lambda(\nabla \what{f}_{\euD}^{\lambda},\nabla(\what{f}_{\euD}^{\lambda}-f))_{L^{2}(\nu_{X})}+\frac{1}{16}R(\what{f}_{\euD}^{\lambda}) \\
&\quad+\Big\{\frac{1}{2}\|f-f_{0}\|_{L^{2}(\mu_{X})}^{2}+\frac{\lambda}{2}\|\nabla(f-f_{0})\|_{L^{2}(\nu_{X})}^{2}\Big\} \\
&\quad+9\lambda^{2}\kappa\Big\{\|\Delta f_{0}\|_{L^{2}(\nu_{X})}^{2}+\|\nabla f_{0}\cdot\nabla(\log q)\|_{L^{2}(\nu_{X})}^{2}\Big\}.
\end{aligned}
\end{equation}
We next turn to consider the second term in the left-hand side of~\eqref{eq:proof:theorem:empirical:minimizer:3:1}. By Cauchy-Schwarz inequality and AM-GM inequality we have
\begin{align}
&(\what{f}_{\euD}^{\lambda}-f_{0},\what{f}_{\euD}^{\lambda}-f)_{L^{2}(\euD)} \nonumber \\
&=\what{R}_{\euD}(\what{f}_{\euD}^{\lambda})-(\what{f}_{\euD}^{\lambda}-f_{0},f-f_{0})_{L^{2}(\euD)} \nonumber \\
&\geq\frac{1}{2}\what{R}_{\euD}(\what{f}_{\euD}^{\lambda})-\frac{1}{2}\what{R}_{\euD}(f). \label{eq:proof:theorem:empirical:minimizer:3:6}
\end{align}
Combining~\eqref{eq:proof:theorem:empirical:minimizer:3:1},~\eqref{eq:proof:theorem:empirical:minimizer:3:5} and~\eqref{eq:proof:theorem:empirical:minimizer:3:6} and taking expectation with respect to $\euD\sim\mu^{n}$ implies the following inequality for each $f\in\conv(\calF)$
\begin{align*}
&\frac{\lambda}{2}\bbE_{\euD}\Big[\|\nabla(\what{f}_{\euD}^{\lambda}-f_{0})\|_{L^{2}(\nu_{X})}^{2}\Big]+\frac{1}{2}\bbE_{\euD}\Big[\what{R}_{\euD}(\what{f}_{\euD}^{\lambda})\Big] \\
&\leq\frac{1}{16}\bbE_{\euD}\Big[R(\what{f}_{\euD}^{\lambda})\Big]+\bbE_{\euD}\Big[\frac{1}{n}\sum_{i=1}^{n}\xi_{i}\what{f}_{\euD}^{\lambda}(X_{i})\Big] \\
&\quad+\Big\{\|f-f_{0}\|_{L^{2}(\mu_{X})}^{2}+\frac{\lambda}{2}\|\nabla(f-f_{0})\|_{L^{2}(\nu_{X})}^{2}\Big\} \\
&\quad+9\lambda^{2}\kappa\Big\{\|\Delta f_{0}\|_{L^{2}(\nu_{X})}^{2}+\|\nabla f_{0}\cdot\nabla(\log q)\|_{L^{2}(\nu_{X})}^{2}\Big\},
\end{align*}
where we used the fact that $\bbE[\what{R}_{\euD}(f)]=R(f)$ and $\bbE[\sum_{i=1}^{n}\xi_{i}f(X_{i})]=0$ for each fixed function $f\in L^{\infty}(\Omega)$. Since that $\calF\subseteq\conv(\calF)$, it is apparent that this inequality also holds for each element in $\calF$. Taking infimum with respect to $f\in\calF$ obtains the inequality~\eqref{eq:proof:theorem:empirical:minimizer:3}.

\par\noindent\emph{Step (IV).} Using~\eqref{eq:proof:theorem:empirical:minimizer:2} and~\eqref{eq:proof:theorem:empirical:minimizer:3}, we have
\begin{align*}
&\bbE_{\euD}\Big[\what{R}_{\euD}(\what{f}_{\euD}^{\lambda})\Big] \\
&\leq\frac{1}{4}\bbE_{\euD}\Big[R(\what{f}_{\euD}^{\lambda})\Big]+c\Big\{\beta\lambda^{2}+\varepsilon_{\app}(\calF,\lambda)+\varepsilon_{\gen}(\calF,n)\Big\},
\end{align*}
where $c$ is an absolute positive constant and $\beta$ is a positive constant defined as
\begin{equation*}
\beta=\kappa\Big\{\|\Delta f_{0}\|_{L^{2}(\nu_{X})}^{2}+\|\nabla f_{0}\cdot\nabla(\log q)\|_{L^{2}(\nu_{X})}^{2}\Big\}.
\end{equation*}
Consequently, by the estimate in~\eqref{eq:proof:theorem:empirical:minimizer:1}, we have
\begin{equation}\label{eq:proof:theorem:empirical:minimizer:4:1}
\begin{aligned}
&\bbE_{\euD}\Big[R(\what{f}_{\euD}^{\lambda})\Big]\leq\bbE_{\euD}\Big[\what{R}_{\euD}(\what{f}_{\euD}^{\lambda})\Big]+c^{\prime}\varepsilon_{\gen}(\calF,n) \\
&\leq\frac{1}{4}\bbE_{\euD}\Big[R(\what{f}_{\euD}^{\lambda})\Big]+c\Big\{\beta\lambda^{2}+\varepsilon_{\app}(\calF,\lambda)\Big\} \nonumber \\
&\quad+(c+c^{\prime})\varepsilon_{\gen}(\calF,n).
\end{aligned}
\end{equation}
This completes the proof of~\eqref{eq:proof:theorem:empirical:minimizer:4}. Finally, combining~\eqref{eq:proof:theorem:empirical:minimizer:3} and~\eqref{eq:proof:theorem:empirical:minimizer:4} achieves~\eqref{eq:proof:theorem:empirical:minimizer:5}.
\end{proof}

\begin{proof}[Proof of Lemma~\ref{lemma:approximation:H1}]
A direct conclusion of Lemma~\ref{lemma:approximation:requ}.
\end{proof}

\begin{proof}[Proof of Theorem~\ref{theorem:erm:rate}]
According to Lemma 5.3, we set the hypothesis class $\calF$ as ReQU neural networks $\calF=\calN(L,S)$ with $L=\calO(\log N)$ and $S=\calO(N^{d})$. Then there exists $f\in\calF$ such that
\begin{align*}
\|f-\phi\|_{L^{2}(\mu_{X})}\leq CN^{-s}, \\
\|\nabla(f-\phi)\|_{L^{2}(\nu_{X})}\leq CN^{-(s-1)} .
\end{align*}
By using Lemma~\ref{lemma:covering:number:sparse} and set $\delta=1/n$, we find 
\begin{multline}\label{eq:proof:theorem:erm:rate:1}
\log N(B_{0}n^{-1},\calF,L^{2}(\euD)) \\
\lesssim LS\log(S)(\log n)\lesssim N^{d}\log^{2}N\log n.
\end{multline}
Substituting these estimates into Lemma~\ref{lemma:oracle:erm} yields
\begin{align*}
&\bbE_{\euD}\Big[\|\what{f}_{\euD}^{\lambda}-f_{0}\|_{L^{2}(\mu_{X})}^{2}\Big] \\
&\lesssim\beta\lambda^{2}+CN^{-2s}+C\lambda N^{-2(s-1)} \\
&\quad+C\log n\Big(\frac{N^{d}\log^{2}N\log n}{n}\Big)^{\frac{1}{2}}.
\end{align*}
Setting $N=\calO(n^{\frac{1}{d+4s}})$, and letting the regularization parameter be $\lambda=\calO(n^{-\frac{s}{d+4s}}\log^{2}n)$ deduce the desired result.
\end{proof}

\subsection{Semi-Supervised Deep Sobolev Regressor}\label{section:appendix:SDORE}

\begin{proof}[Proof of Lemma~\ref{lemma:oracle:erm:data}]
Before proceeding, we first define the empirical inner-product and norm based on the sample $\euS=\{Z_{i}\}_{i=1}^{m}$ as
\begin{equation*}
(u,v)_{L^{2}(\euS)}=\frac{1}{m}\sum_{i=1}^{m}u(Z_{i})v(Z_{i}), ~ \|u\|_{L^{2}(\euS)}^{2}=\frac{1}{m}\sum_{i=1}^{m}u^{2}(Z_{i}), 
\end{equation*}
for each $u,v\in L^{\infty}(\nu_{X})$. The proof is divided into four parts which are denoted by (I) to (IV):
\begin{enumerate}
\item[(I)] By a same argument as (I) in the proof of Lemma~\ref{lemma:oracle:erm}, we deduces
\begin{multline}\label{eq:proof:theorem:empirical:minimizer:data:1}
\bbE_{\euS}\Big[\|\nabla(\what{f}_{\euD,\euS}^{\lambda}-f_{0})\|_{L^{2}(\nu_{X})}^{2}-\|\nabla(\what{f}_{\euD,\euS}^{\lambda}-f_{0})\|_{L^{2}(\euS)}^{2}\Big] \\
\leq\varepsilon_{\gen}^{\reg}(\nabla\calF,m).
\end{multline}
Here the generalization error $\varepsilon_{\gen}^{\reg}(\nabla\calF,m)$ associated to the regularization term are defined as
\begin{multline*}
\varepsilon_{\gen}^{\reg}(\nabla\calF,m) \\
=B_{1}^{2}\inf_{\delta>0}\Big\{\max_{1\leq k\leq d}\frac{N(B_{1,k}\delta,D_{k}\calF,L^{2}(\euS))}{m}+\delta\Big\}.
\end{multline*}
\item[(II)] By the technique of symmetrization and Green's formula, it holds that
\begin{multline}\label{eq:proof:theorem:empirical:minimizer:data:2}
-\lambda\bbE_{\euS}\Big[(\nabla f_{0},\nabla(\what{f}_{\euD,\euS}^{\lambda}-f))_{L^{2}(\euS)}\Big] \\
\leq\frac{1}{16}\bbE_{\euS}\Big[R(\what{f}_{\euD,\euS}^{\lambda})\Big]+c\Big\{\tilde{\beta}\lambda^{2}+\varepsilon_{\gen}^{\reg}(\nabla\calF,m)\Big\},
\end{multline}
where $c$ is an absolute positive constant. Here the constant $\tilde{\beta}$ is defined as
\begin{equation*}
\tilde{\beta}=\kappa\Big\{\|\Delta f_{0}\|_{L^{2}(\nu_{X})}^{2}+\|\nabla f_{0}\cdot\nabla(\log q)\|_{L^{2}(\nu_{X})}^{2}+B_{1}^{2}\Big\}.
\end{equation*}
\item[(III)] With the aid of the variational inequality and~\eqref{eq:proof:theorem:empirical:minimizer:data:2}, we have
\begin{align}
&\lambda\bbE_{\euD,\euS}\Big[\|\nabla(\what{f}_{\euD,\euS}^{\lambda}-f_{0})\|_{L^{2}(\euS)}^{2}\Big]+\bbE_{\euD,\euS}\Big[\what{R}_{\euD}(\what{f}_{\euD,\euS}^{\lambda})\Big] \nonumber \\
&\leq\frac{1}{8}\bbE_{\euD,\euS}\Big[R(\what{f}_{\euD,\euS}^{\lambda})\Big]+2\bbE_{\euD,\euS}\Big[\frac{1}{n}\sum_{i=1}^{n}\xi_{i}\what{f}_{\euD,\euS}^{\lambda}(X_{i})\Big] \nonumber \\
&\quad+c\Big\{\tilde{\beta}\lambda^{2}+\varepsilon_{\app}(\calF,\lambda)+\varepsilon_{\gen}^{\reg}(\nabla\calF,m)\Big\}, \label{eq:proof:theorem:empirical:minimizer:data:3}
\end{align}
where $c$ is an absolute positive constant
\item[(IV)] Applying~\eqref{eq:proof:theorem:empirical:minimizer:1},~\eqref{eq:proof:theorem:empirical:minimizer:2},~\eqref{eq:proof:theorem:empirical:minimizer:data:1} and~\eqref{eq:proof:theorem:empirical:minimizer:data:3}, we conclude the final results.
\end{enumerate}

\par\noindent\emph{Step (I).}
By a same argument as \emph{Step (I)} in the proof of Lemma~\ref{lemma:oracle:erm}, we deduce the following inequality
\begin{align*}
&\bbE_{\euS}\Big[\|D_{k}(\what{f}_{\euD,\euS}^{\lambda}-f_{0})\|_{L^{2}(\nu_{X})}^{2}-\|\what{f}_{\euD,\euS}^{\lambda}-f_{0}\|_{L^{2}(\euS)}^{2}\Big] \\
&\leq 4B_{1,k}^{2}\inf_{\delta>0}\Big\{\frac{N(B_{1,k}\delta,D_{k}\calF,L^{2}(\euS))}{m}+\delta\Big\},
\end{align*}
for each $1\leq k\leq d$. Summing over these equalities obtains~\eqref{eq:proof:theorem:empirical:minimizer:data:1} immediately.

\par\noindent\emph{Step (II).}
Given a ghost sample $\euS^{\prime}=\{Z_{i}^{\prime}\}_{i=1}^{m}$, where $\{Z_{i}^{\prime}\}_{i=1}^{m}$ are independently and identically distributed random variables from $\nu_{X}$. Further, the ghost sample $\euS^{\prime}$ is independent of $\euS=\{Z_{i}\}_{i=1}^{m}$. Let $\varepsilon=\{\varepsilon_{i}\}_{i=1}^{m}$ be a set of Rademacher variables and independent of $\euS$ and $\euS^{\prime}$. Then by the technique of symmetrization, we have
\begin{align}
&\bbE_{\euS}\Big[(D_{k}f_{0},D_{k}\what{f}_{\euD,\euS}^{\lambda})_{L^{2}(\nu_{X})}-(D_{k}f_{0},D_{k}\what{f}_{\euD,\euS}^{\lambda})_{L^{2}(\euS)}\Big] \nonumber \\
&\leq\bbE_{\euS}\Big[\sup_{f\in\conv(\calF)}(D_{k}f_{0},D_{k}f)_{L^{2}(\nu_{X})}-(D_{k}f_{0},D_{k}f)_{L^{2}(\euS)}\Big] \nonumber \\
&=\bbE_{\euS}\Big[\sup_{f\in\conv(\calF)}\bbE_{\euS^{\prime}}\Big[\frac{1}{m}\sum_{i=1}^{m}D_{k}f_{0}(Z_{i}^{\prime})D_{k}f(Z_{i}^{\prime})\Big] \nonumber \\
&\quad-\frac{1}{m}\sum_{i=1}^{m}D_{k}f_{0}(Z_{i})D_{k}f(Z_{i})\Big] \nonumber \\
&\leq\bbE_{\euS}\bbE_{\euS^{\prime}}\Big[\sup_{f\in\conv(\calF)}\frac{1}{m}\sum_{i=1}^{m}D_{k}f_{0}(Z_{i}^{\prime})D_{k}f(Z_{i}^{\prime}) \nonumber \\
&\quad-\frac{1}{m}\sum_{i=1}^{m}D_{k}f_{0}(Z_{i})D_{k}f(Z_{i})\Big] \nonumber \\
&=\bbE_{\euS}\bbE_{\euS^{\prime}}\bbE_{\varepsilon}\Big[\sup_{f\in\conv(\calF)}\frac{1}{m}\sum_{i=1}^{m}\varepsilon_{i}\Big(D_{k}f_{0}(Z_{i}^{\prime})D_{k}f(Z_{i}^{\prime}) \nonumber \\
&\quad-D_{k}f_{0}(Z_{i})D_{k}f(Z_{i})\Big)\Big] \nonumber \\
&=\bbE_{\euS}\bbE_{\varepsilon}\Big[\sup_{f\in\conv(\calF)}\frac{1}{m}\sum_{i=1}^{m}\varepsilon_{i}D_{k}f_{0}(Z_{i})D_{k}f(Z_{i})\Big] \nonumber \\
&=\bbE_{\euS}\bbE_{\varepsilon}\Big[\sup_{f\in\calF}\frac{1}{m}\sum_{i=1}^{m}\varepsilon_{i}D_{k}f_{0}(Z_{i})D_{k}f(Z_{i})\Big], \label{eq:proof:theorem:empirical:minimizer:data:2:1}
\end{align}
where the second inequality follows from the Jensen's inequality, and the last equality invokes the fact that the Rademacher complexity of the convex hull is equal to that of the original set. According to Ledoux-Talagrand contraction inequality~\cite[Lemma 5.7]{mohri2018foundations}, we have
\begin{equation}\label{eq:proof:theorem:empirical:minimizer:data:2:2}
\begin{aligned}
&\bbE_{\varepsilon}\Big[\sup_{f\in\calF}\frac{1}{m}\sum_{i=1}^{m}\varepsilon_{i}D_{k}f_{0}(Z_{i})D_{k}f(Z_{i})\Big] \\
&\leq B_{1,k}\bbE_{\varepsilon}\Big[\sup_{f\in\calF}\frac{1}{m}\sum_{i=1}^{m}\varepsilon_{i}D_{k}f(Z_{i})\Big].
\end{aligned}
\end{equation}
Let $\delta>0$ and let $(D_{k}\calF)_{\delta}$ be an $L^{2}(\euS)$ $(B_{1,k}\delta)$-cover of $D_{k}\calF$. Suppose $|(D_{k}\calF)_{\delta}|=N(B_{1,k}\delta,D_{k}\calF,L^{2}(\euS))$. Then it follows from Cauchy-Schwarz inequality that for each $D_{k}f\in D_{k}\calF$, there exists $(D_{k}f)_{\delta}\in(D_{k}\calF)_{\delta}$ such that
\begin{align*}
&\frac{1}{m}\sum_{i=1}^{m}\varepsilon_{i}D_{k}f(Z_{i})-\frac{1}{m}\sum_{i=1}^{m}\varepsilon_{i}(D_{k}f)_{\delta}(Z_{i}) \\
&\leq\Big(\frac{1}{m}\sum_{i=1}^{m}\varepsilon_{i}^{2}\Big)^{1/2}\Big(\frac{1}{m}\sum_{i=1}^{m}(D_{k}f(Z_{i})-(D_{k}f)_{\delta}(Z_{i}))^{2}\Big)^{1/2} \\
&\leq B_{1,k}\delta,
\end{align*}
which implies
\begin{align}
&\bbE_{\euS}\bbE_{\varepsilon}\Big[\sup_{D_{k}f\in\calF}\frac{1}{m}\sum_{i=1}^{m}\varepsilon_{i}D_{k}f(Z_{i})\Big] \nonumber \\
&\leq\bbE_{\euS}\bbE_{\varepsilon}\Big[\sup_{D_{k}f\in(D_{k}\calF)_{\delta}}\frac{1}{m}\sum_{i=1}^{m}\varepsilon_{i}D_{k}f(Z_{i})\Big]+B_{1,k}\delta \nonumber \\
&\leq B_{1,k}\Big(\frac{2\log N(B_{1,k}\delta,D_{k}\calF,L^{2}(\euS))}{m}\Big)^{1/2}+B_{1,k}\delta, \label{eq:proof:theorem:empirical:minimizer:data:2:3}
\end{align}
where the last inequality holds from Massart's lemma~\cite[Theroem 3.7]{mohri2018foundations}. Combining~\eqref{eq:proof:theorem:empirical:minimizer:data:2:1},~\eqref{eq:proof:theorem:empirical:minimizer:data:2:2} and~\eqref{eq:proof:theorem:empirical:minimizer:data:2:3} deduces
\begin{align*}
&\bbE_{\euS}\Big[(D_{k}f_{0},D_{k}\what{f}_{\euD,\euS}^{\lambda})_{L^{2}(\nu_{X})}-(D_{k}f_{0},D_{k}\what{f}_{\euD,\euS}^{\lambda})_{L^{2}(\euS)}\Big] \\
&\leq B_{1,k}^{2}\Big(\frac{2\log N(B_{1,k}\delta,D_{k}\calF,L^{2}(\euS))}{m}\Big)^{1/2}+B_{1,k}^{2}\delta.
\end{align*}
Summing over this equation for $1\leq k\leq d$ yields
\begin{align*}
&\bbE_{\euS}\Big[(\nabla f_{0},\nabla(\what{f}_{\euD,\euS}^{\lambda}-f_{0}))_{L^{2}(\nu_{X})} \\
&\quad-(\nabla f_{0},\nabla(\what{f}_{\euD,\euS}^{\lambda}-f_{0}))_{L^{2}(\euS)}\Big] \\
&\leq\sum_{k=1}^{d}B_{1,k}^{2}\inf_{\delta>0}\Big\{\Big(\frac{2\log N(B_{1,k}\delta,D_{k}\calF,L^{2}(\euS))}{m}\Big)^{1/2}+\delta\Big\}.
\end{align*}
Combining this with Lemma~\ref{lemma:Green} and Assumption~\ref{assumption:regularity:f0}, we find that for each $\delta>0$,
\begin{align*}
&-\lambda\bbE_{\euS}\Big[(\nabla f_{0},\nabla(\what{f}_{\euD,\euS}^{\lambda}-f))_{L^{2}(\euS)}\Big] \\
&\leq\lambda\bbE_{\euS}\Big[-(\nabla f_{0},\nabla(\what{f}_{\euD,\euS}^{\lambda}-f))_{L^{2}(\nu_{X})}\Big] \nonumber \\
&\quad+\lambda\sum_{k=1}^{d}B_{1,k}^{2}\Big\{\Big(\frac{2\log N(B_{1,k}\delta,D_{k}\calF,L^{2}(\euS))}{m}\Big)^{1/2}+\delta\Big\} \\
&=\lambda\bbE_{\euS}\Big[(\Delta f_{0},\what{f}_{\euD,\euS}^{\lambda}-f)_{L^{2}(\nu_{X})} \\
&\quad+(\nabla f_{0}\cdot\nabla(\log q),\what{f}_{\euD,\euS}^{\lambda}-f)_{L^{2}(\nu_{X})}\Big] \\
&\quad+\lambda\sum_{k=1}^{d}B_{1,k}^{2}\Big\{\Big(\frac{2\log N(B_{1,k}\delta,D_{k}\calF,L^{2}(\euS))}{m}\Big)^{1/2}+\delta\Big\} \\
&\leq\lambda\kappa^{1/2}\bbE_{\euS}\Big[\Big\{\|\Delta f_{0}+\nabla f_{0}\cdot\nabla(\log q)\|_{L^{2}(\nu_{X})}\Big\} \\
&\quad\times\Big\{R(\what{f}_{\euD,\euS}^{\lambda})^{1/2}+\|f-f_{0}\|_{L^{2}(\mu_{X})}\Big\}\Big] \\
&\quad+\lambda\sum_{k=1}^{d}B_{1,k}^{2}\Big\{\Big(\frac{2\log N(B_{1,k}\delta,D_{k}\calF,L^{2}(\euS))}{m}\Big)^{1/2}+\delta\Big\} \\
&\leq9\lambda^{2}\kappa\Big\{\|\Delta f_{0}\|_{L^{2}(\nu_{X})}^{2}+\|\nabla f_{0}\cdot\nabla(\log q)\|_{L^{2}(\nu_{X})}^{2}\Big\} \\
&\quad+\frac{1}{16}\bbE_{\euS}\Big[R(\what{f}_{\euD,\euS}^{\lambda})\Big]+\frac{1}{2}\|f-f_{0}\|_{L^{2}(\mu_{X})}^{2}+\lambda^{2}\Big(\sum_{k=1}^{d}B_{1,k}^{2}\Big) \\
&\quad+\frac{1}{4}\sum_{k=1}^{d}B_{1,k}^{2}\Big\{\max_{1\leq k\leq d}\frac{2\log N(B_{1,k}\delta,D_{k}\calF,L^{2}(\euS))}{m}+\delta^{2}\Big\},
\end{align*}
where the second inequality holds from Cauchy-Schwarz inequality and Assumption~\ref{assumption:bounded:density:ratio}, and the last inequality is due to the inequality $ab\leq\epsilon a^{2}+b^{2}/(4\epsilon)$ for $a,b,\epsilon>0$. This completes the proof of~\eqref{eq:proof:theorem:empirical:minimizer:data:2}.

\par\noindent\emph{Step (III).} For each element $f\in\conv(\calF)$, by the convexity of $\conv(\calF)$ we have $\what{f}_{\euD,\euS}^{\lambda}+t(f-\what{f}_{\euD,\euS}^{\lambda})\in\conv(\calF)$ for each $t\in[0,1]$. Now the optimality of $\what{f}_{\euD,\euS}^{\lambda}$ yields that for each $t\in[0,1]$
\begin{equation*}
\what{L}_{\euD,\euS}^{\lambda}(\what{f}_{\euD}^{\lambda})-\what{L}_{\euD,\euS}^{\lambda}(\what{f}_{\euD,\euS}^{\lambda}+t(f-\what{f}_{\euD,\euS}^{\lambda}))\leq 0,
\end{equation*}
which implies
\begin{align*}
&\lim_{t\rightarrow0^{+}}\frac{1}{t}\Big(\what{L}_{\euD,\euS}^{\lambda}(\what{f}_{\euD,\euS}^{\lambda})-\what{L}_{\euD,\euS}^{\lambda}(\what{f}_{\euD,\euS}^{\lambda}+t(f-\what{f}_{\euD,\euS}^{\lambda}))\Big) \\
&=\lambda(\nabla \what{f}_{\euD,\euS}^{\lambda},\nabla(\what{f}_{\euD,\euS}^{\lambda}-f))_{L^{2}(\euS)} \\
&\quad+\frac{1}{n}\sum_{i=1}^{n}(\what{f}_{\euD,\euS}^{\lambda}(X_{i})-Y_{i})(\what{f}_{\euD,\euS}^{\lambda}(X_{i})-f(X_{i}))\leq 0.
\end{align*}
Therefore, it follows from (1) that for each $f\in\conv(\calF)$,
\begin{equation}\label{eq:proof:theorem:empirical:minimizer:data:3:1}
\begin{aligned}
&\lambda(\nabla \what{f}_{\euD,\euS}^{\lambda},\nabla(\what{f}_{\euD,\euS}^{\lambda}-f))_{L^{2}(\euS)} \\
&\quad+(\what{f}_{\euD,\euS}^{\lambda}-f_{0},\what{f}_{\euD,\euS}^{\lambda}-f)_{L^{2}(\euD)} \\
&\leq\frac{1}{n}\sum_{i=1}^{n}\xi_{i}(\what{f}_{\euD,\euS}^{\lambda}(X_{i})-f(X_{i})).
\end{aligned}
\end{equation}
For the first term in the left-hand side of~\eqref{eq:proof:theorem:empirical:minimizer:data:3:1}, we have
\begin{align}
&\lambda(\nabla \what{f}_{\euD,\euS}^{\lambda},\nabla(\what{f}_{\euD,\euS}^{\lambda}-f))_{L^{2}(\euS)} \nonumber \\
&=\lambda(\nabla(\what{f}_{\euD,\euS}^{\lambda}-f_{0})+\nabla f_{0},\nabla(\what{f}_{\euD,\euS}^{\lambda}-f_{0})-\nabla(f-f_{0}))_{L^{2}(\euS)} \nonumber \\
&=\lambda\|\nabla(\what{f}_{\euD,\euS}^{\lambda}-f_{0})\|_{L^{2}(\euS)}^{2}+\lambda(\nabla f_{0},\nabla(\what{f}_{\euD,\euS}^{\lambda}-f))_{L^{2}(\euS)} \nonumber \\
&\quad-\lambda(\nabla(\what{f}_{\euD,\euS}^{\lambda}-f_{0}),\nabla(f-f_{0}))_{L^{2}(\euS)}, \label{eq:proof:theorem:empirical:minimizer:data:3:2}
\end{align}
According to Cauchy-Schwarz inequality and AM-GM inequality, one obtains easily
\begin{multline}\label{eq:proof:theorem:empirical:minimizer:data:3:3}
\lambda(\nabla(\what{f}_{\euD,\euS}^{\lambda}-f_{0}),\nabla(f-f_{0}))_{L^{2}(\euS)} \\
\leq\frac{\lambda}{2}\|\nabla(\what{f}_{\euD,\euS}^{\lambda}-f_{0})\|_{L^{2}(\euS)}^{2}+\frac{\lambda}{2}\|\nabla(f-f_{0})\|_{L^{2}(\euS)}^{2}.
\end{multline}
Using~\eqref{eq:proof:theorem:empirical:minimizer:data:3:2} and~\eqref{eq:proof:theorem:empirical:minimizer:data:3:3}, and taking expectation with respect to $\euS\sim\nu_{X}^{m}$ yield
\begin{align*}
&\frac{\lambda}{2}\bbE_{\euS}\Big[\|\nabla(\what{f}_{\euD,\euS}^{\lambda}-f_{0})\|_{L^{2}(\euS)}^{2}\Big] \\
&\leq\lambda\bbE_{\euS}\Big[(\nabla \what{f}_{\euD,\euS}^{\lambda},\nabla(\what{f}_{\euD,\euS}^{\lambda}-f))_{L^{2}(\euS)}\Big] \\
&\quad+\frac{\lambda}{2}\|\nabla(f-f_{0})\|_{L^{2}(\nu_{X})}^{2} \\
&\quad-\lambda\bbE_{\euS}\Big[(\nabla f_{0},\nabla(\what{f}_{\euD,\euS}^{\lambda}-f))_{L^{2}(\euS)}\Big].
\end{align*}
Combining this estimate with~\eqref{eq:proof:theorem:empirical:minimizer:data:2} implies
\begin{equation}\label{eq:proof:theorem:empirical:minimizer:data:3:4}
\begin{aligned}
&\frac{\lambda}{2}\bbE_{\euS}\Big[\|\nabla(\what{f}_{\euD,\euS}^{\lambda}-f_{0})\|_{L^{2}(\euS)}^{2}\Big] \\
&\leq\lambda\bbE_{\euS}\Big[(\nabla \what{f}_{\euD,\euS}^{\lambda},\nabla(\what{f}_{\euD,\euS}^{\lambda}-f))_{L^{2}(\euS)}\Big] \\
&\quad+\frac{\lambda}{2}\|\nabla(f-f_{0})\|_{L^{2}(\nu_{X})}^{2}+\frac{1}{16}\bbE_{\euS}\Big[R(\what{f}_{\euD,\euS}^{\lambda})\Big] \\
&\quad+c\Big\{\tilde{\beta}\lambda^{2}+\varepsilon_{\gen}^{\reg}(\nabla\calF,m)\Big\}.
\end{aligned}
\end{equation}
We next turn to consider the second term in the left-hand side of~\eqref{eq:proof:theorem:empirical:minimizer:data:3:1}. By Cauchy-Schwarz inequality and AM-GM inequality we have
\begin{align*}
&(\what{f}_{\euD,\euS}^{\lambda}-f_{0},\what{f}_{\euD,\euS}^{\lambda}-f)_{L^{2}(\euD)} \\
&=\what{R}_{\euD}(\what{f}_{\euD,\euS}^{\lambda})-(\what{f}_{\euD,\euS}^{\lambda}-f_{0},f-f_{0})_{L^{2}(\euD)} \\
&\geq\frac{1}{2}\what{R}_{\euD}(\what{f}_{\euD,\euS}^{\lambda})-\frac{1}{2}\what{R}_{\euD}(f),
\end{align*}
which implies by taking expectation with respect to $\euD\sim\mu^{n}$ that for each $f\in\calF$,
\begin{multline}\label{eq:proof:theorem:empirical:minimizer:data:3:5}
\frac{1}{2}\bbE_{\euD}\Big[\what{R}_{\euD}(\what{f}_{\euD,\euS}^{\lambda})\Big] \\
\leq\frac{1}{2}R(f)+\bbE_{\euD}\Big[(\what{f}_{\euD,\euS}^{\lambda}-f_{0},\what{f}_{\euD,\euS}^{\lambda}-f)_{L^{2}(\euD)}\Big].
\end{multline}
Combining~\eqref{eq:proof:theorem:empirical:minimizer:data:3:1},~\eqref{eq:proof:theorem:empirical:minimizer:data:3:4} and~\eqref{eq:proof:theorem:empirical:minimizer:data:3:5} yields~\eqref{eq:proof:theorem:empirical:minimizer:data:3}.

\par\noindent\emph{Step (IV).} Using~\eqref{eq:proof:theorem:empirical:minimizer:2} and~\eqref{eq:proof:theorem:empirical:minimizer:data:3}, we have
\begin{multline*}
\bbE_{\euD,\euS}\Big[\what{R}_{\euD}(\what{f}_{\euD,\euS}^{\lambda})\Big]\leq\frac{1}{4}\bbE_{\euD,\euS}\Big[R(\what{f}_{\euD,\euS}^{\lambda})\Big] \\
+c\Big\{\tilde{\beta}\lambda^{2}+\varepsilon_{\app}(\calF,\lambda)+\varepsilon_{\gen}(\calF,n)+\varepsilon_{\gen}^{\reg}(\nabla\calF,m)\Big\}.
\end{multline*}
Then according to the above inequality and~\eqref{eq:proof:theorem:empirical:minimizer:1}, it follows that
\begin{equation*}
\begin{aligned}
&\bbE_{\euD,\euS}\Big[R(\what{f}_{\euD,\euS}^{\lambda})\Big]\leq\bbE_{\euD,\euS}\Big[\what{R}_{\euD}(\what{f}_{\euD,\euS}^{\lambda})\Big]+c^{\prime}\varepsilon_{\gen}(\calF,n) \\
&\leq\frac{1}{2}\bbE_{\euD,\euS}\Big[R(\what{f}_{\euD,\euS}^{\lambda})\Big]+(2c+c^{\prime})\varepsilon_{\gen}(\calF,n) \\
&\quad+2c\Big\{\tilde{\beta}\lambda^{2}+\varepsilon_{\app}(\calF,\lambda)+\varepsilon_{\gen}^{\reg}(\nabla\calF,m)\Big\},
\end{aligned}
\end{equation*}
which implies
\begin{multline}\label{eq:proof:theorem:empirical:minimizer:data:4:1}
\bbE_{\euD,\euS}\Big[\|\what{f}_{\euD,\euS}^{\lambda}-f_{0}\|_{L^{2}(\mu_{X})}^{2}\Big] \\
\lesssim\tilde{\beta}\lambda^{2}+\varepsilon_{\app}(\calF,\lambda)+\varepsilon_{\gen}(\calF,n)+\varepsilon_{\gen}^{\reg}(\nabla\calF,m).
\end{multline}
Finally, combining~\eqref{eq:proof:theorem:empirical:minimizer:data:1},~\eqref{eq:proof:theorem:empirical:minimizer:data:3} and ~\eqref{eq:proof:theorem:empirical:minimizer:data:4:1} completes the proof.
\end{proof}

\begin{proof}[Proof of Theorem~\ref{theorem:erm:rate:data}]
According to Lemma~\ref{lemma:approximation:H1}, we set the hypothesis class $\calF$ as ReQU neural networks $\calF=\calN(L,S)$ with $L=\calO(\log N)$ and $S=\calO(N^{d})$. Then there exists $f\in\calF$ such that
\begin{equation*}
\|f-\phi\|_{L^{2}(\mu_{X})}\leq CN^{-s}, \quad
\|\nabla(f-\phi)\|_{L^{2}(\nu_{X})}\leq CN^{-(s-1)}.
\end{equation*}
By using Lemma~\ref{lemma:covering:number:sparse:grad} and set $\delta=1/n$, we find 
\begin{multline*}
\log N(B_{1,k}n^{-1},D_{k}\calF,L^{2}(\euD)) \\
\lesssim L^{2}S\log S\log n\lesssim N^{d}\log^{3}N\log n.
\end{multline*}
Substituting these estimates and~\eqref{eq:proof:theorem:erm:rate:1} into Lemma~\ref{lemma:oracle:erm} yields
\begin{align*}
&\bbE_{\euD}\Big[\|\what{f}_{\euD}^{\lambda}-f_{0}\|_{L^{2}(\mu_{X})}^{2}\Big] \\
&\lesssim\tilde{\beta}\lambda^{2}+CN^{-2s}+C\lambda N^{-2(s-1)} \\
&\quad+C\log n\Big(\frac{N^{d}\log N\log n}{n}\Big)^{\frac{1}{2}}+C\log n\frac{N^{d}\log^{3}N}{m}.
\end{align*}
Setting $N=\calO(n^{\frac{1}{d+4s}})$, and letting the regularization parameter be $\lambda=\calO(n^{-\frac{s}{d+4s}}\log^{2}n)$ deduce the desired result.
\end{proof}

\section{Proofs in Results in Section~\ref{section:applications}}

\begin{proof}[Proof of Corollary~\ref{corollary:variable:selection}]
A direct conclusion of Theorem~\ref{theorem:erm:rate}.
\end{proof}

\begin{proof}[Proof of Corollary~\ref{corollary:selection:consistency}]
By Markov's inequality~\cite[Theorem C.11]{mohri2018foundations}, the following inequality holds for each $\epsilon>0$
\begin{equation}\label{eq:proof:corollary:selection:consistency:1}
\begin{aligned}
&\lim_{n\rightarrow\infty}\pr\Big\{\|D_{k}\what{f}_{\euD}^{\lambda}-D_{k}f_{0}\|_{L^{2}(\nu_{X})}>\epsilon\Big\} \\
&\leq\lim_{n\rightarrow\infty}\frac{\bbE_{\euD}\Big[\|D_{k}\what{f}_{\euD}^{\lambda}-D_{k}f_{0}\|_{L^{2}(\nu_{X})}\Big]}{\epsilon}=0,
\end{aligned}
\end{equation}
where the equality follows from Corollary~\ref{corollary:variable:selection}. For each irrelevant variable $k\notin\calI(f_{0})$, one has $\|D_{k} f_{0}\|_{L^{2}(\nu_{X})}=0$. Then~\eqref{eq:proof:corollary:selection:consistency:1} deduces that for each $\epsilon>0$
\begin{align*}
&\lim_{n\rightarrow\infty}\pr\Big\{\|D_{k}\what{f}_{\euD}^{\lambda}\|_{L^{2}(\nu_{X})}>\epsilon\Big\} \\
&\leq\lim_{n\rightarrow\infty}\pr\Big\{\|D_{k}\what{f}_{\euD}^{\lambda}-D_{k}f_{0}\|_{L^{2}(\nu_{X})}>\epsilon\Big\}=0,
\end{align*}
which implies $\|D_{k}\what{f}_{\euD}^{\lambda}\|_{L^{2}(\nu_{X})}$ goes to 0 in probability, and thus
\begin{equation}\label{eq:proof:corollary:selection:consistency:2}
\lim_{n\rightarrow\infty}\pr\Big\{\calI(\what{f}_{\euD}^{\lambda})\subseteq\calI(f_{0})\Big\}=1.
\end{equation}
On the other hand, for each relevant variable $k\in\calI(f_{0})$, it follows from~\eqref{eq:proof:corollary:selection:consistency:1} that
\begin{align*}
&\lim_{n\rightarrow\infty}\pr\Big\{\|D_{k}\what{f}_{\euD}^{\lambda}\|_{L^{2}(\nu_{X})}>\epsilon+\|D_{k}f_{0}\|_{L^{2}(\nu_{X})}\Big\} \\
&\leq\lim_{n\rightarrow\infty}\pr\Big\{\|D_{k}\what{f}_{\euD}^{\lambda}-D_{k}f_{0}\|_{L^{2}(\nu_{X})}>\epsilon\Big\}=0,
\end{align*}
where we used the triangular inequality. Since $\|D_{k}f_{0}\|_{L^{2}(\nu_{X})}>0$, we find that for each $\epsilon>0$
\begin{equation*}
\lim_{n\rightarrow\infty}\pr\Big\{\|D_{k}\what{f}_{\euD}^{\lambda}\|_{L^{2}(\nu_{X})}>\epsilon\Big\}=1,
\end{equation*}
As a consequence,
\begin{equation}\label{eq:proof:corollary:selection:consistency:3}
\lim_{n\rightarrow\infty}\pr\Big\{\calI(f_{0})\subseteq\calI(\what{f}_{\euD}^{\lambda})\Big\}=1.
\end{equation}
Combining~\eqref{eq:proof:corollary:selection:consistency:2} and~\eqref{eq:proof:corollary:selection:consistency:3} completes the proof.
\end{proof}

\section{Additional Experiments Results}
\label{appendix:experiments}

\par In this section, we present several supplementary numerical examples to complement the numerical studies in Section 6.

\subsection{Additional Examples for Derivative Estimation}
\label{appendix:experiments:derivative}

\begin{example}
We consider a toy problem in two-dimensions, where the support of the marginal distribution $\mu_{X}$ approximately coincides with the coordinate subspace $[0,1]\times\{0\}$. Precisely the first element of the covariate is uniformly sampled from $[-1,1]$, whereas the second one is drawn from a Gaussian distribution $N (0,0.05)$. The underlying regression function is $f_0(x) = x_1^2$, and labels are generated by $Y=f_0(X)+\xi$, where the noise term $\xi\sim N(0,0.1)$. The regularization parameter is set as $\lambda = 1.0\times10^{-4}$.
\end{example}

\par In all cases the accuracy on the $\supp(\mu_{X})$ is high, see Figure~\ref{fig:illustration} (top), but the least-squares regressor fails to extend the approximation and smoothness outside the support, as the least-squares loss is insensible to errors out of $\supp(\mu_{X})$. While the landscape of DORE is smoother compared to the least-squares regressor, SDORE further extends the smoothness to $[-1, 1]^2$ as it utilizes unlabeled samples from $\nu_{X}$.

\par We examine the partial derivative estimation with respect to $x_1$ and $x_2$ on $[-1, 1]^2$ and display the result in Figure~\ref{fig:illustration}.
As expected, the least-squares one is unstable compared to the DORE and SDORE. Also, we can tell from the bottom right of Figure~\ref{fig:illustration} that $x_2$ is the irrelevant variable.

\begin{figure*}[t!] 
\centering
\includegraphics[width=\linewidth]{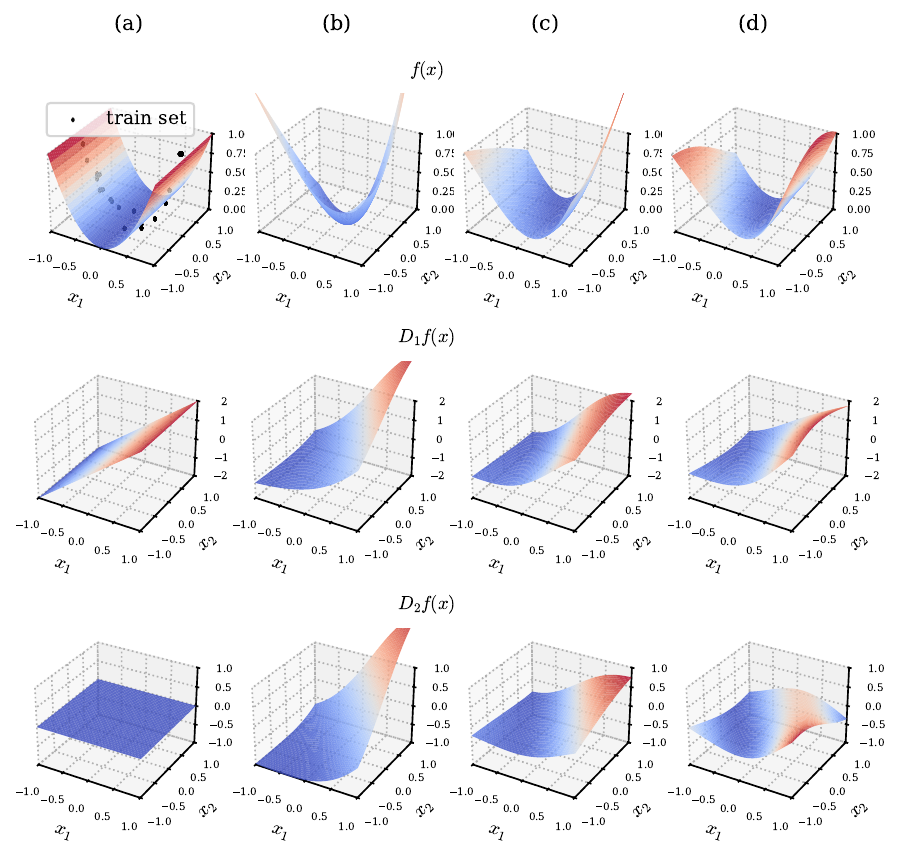}
\caption{Effect of the regularization technique on function fitting for a toy problem $f_0(x) = x_1^2$. (a) Landscape of the primitive function and its partial derivatives. The train samples are plotted in black dots. (b) least-squares fitting estimation. (c) DORE estimation. (d) SDORE estimation.} \label{fig:illustration}
\end{figure*}

\subsection{Additional Examples for Variable Selection}
\label{appendix:experiments:selection}

\begin{example}
Consider the regression function $f_0(x) = 2x_1^2 + e^{x_2} + 2 \sin (x_3) + 2 \cos (x_4 + 1)$, with observations $Y= f_0(X) + \xi$, where $X\in \mathbb{R}^{10}$ and $\xi$ is a white noise, sampled from a Gaussian distribution with the signal to noise ratio to be 25. The first four elements of $X$ are drawn from the uniform distribution on $[0,1]$, and the rest noise variables are drawn from the uniform distribution on $[0,0.05]$. The regularization parameter $\lambda$ is set as $1.0\times10^{-4}$ for SDORE.
\end{example}

\par We repeat the process for least-squares regressor and SDORE, respectively, and evaluate the models on a test set with sample size 1000. We report the estimated mean square of partial derivative by both estimators with respect to $x_i$, the mean selection error (mean of false positive rate and false negative rate) the root mean squared prediction error on the primitive function in Figure~\ref{fig:comparison}. The results indicate least-squares regression fails to identify the correct dependent variables and has larger prediction error. In contrast, SDORE yields smaller prediction error, and points out that $x_1$ to $x_4$ are the relevant variables.

\begin{figure}[htbp]  
\centering
\includegraphics[width=\linewidth]{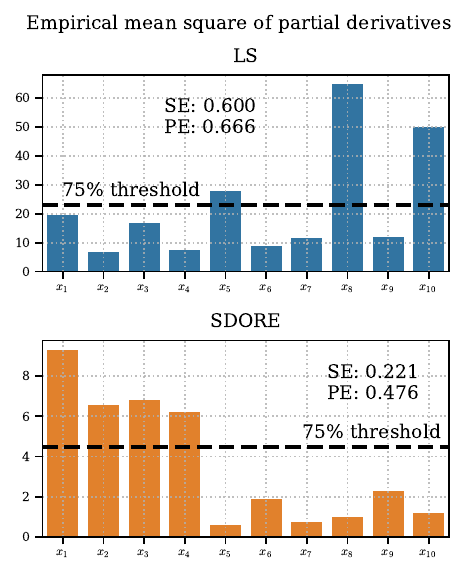}
\caption{(left) Empirical mean square of the partial derivatives estimated by least-squares regression (LS) and SDORE on a variable selection problem in $\mathbb{R}^{10}$ where $f_0$ is dependent on the $x_1$ to $x_4$. 
The dashed line is the 75 \% quantile threshold for variable selection. 
(center) Mean variable selection error for the estimated derivative function on test set. 
(right) Root mean squared prediction error for the primitive function on test set.}
\label{fig:comparison}
\end{figure}

\section*{Acknowledgment}
The authors would like to express their sincere gratitude to the Editor-in-Chief, Associate Editor, and all reviewers for their careful review of the manuscript. Their insightful comments and constructive suggestions have significantly improved the quality and clarity of this paper. This work is supported by the National Key Research and Development Program of China (No. 2024YFA1014202), by the National Natural Science Foundation of China (No. 12125103, No. U24A2002, No. 12371441), and by the Fundamental Research Funds for the Central Universities. The numerical calculations in this paper have been done on the supercomputing system in the Supercomputing Center of Wuhan University.

\bibliographystyle{IEEEtran}
\bibliography{references}

\begin{thebibliography}{10}
\providecommand{\url}[1]{#1}
\csname url@samestyle\endcsname
\providecommand{\newblock}{\relax}
\providecommand{\bibinfo}[2]{#2}
\providecommand{\BIBentrySTDinterwordspacing}{\spaceskip=0pt\relax}
\providecommand{\BIBentryALTinterwordstretchfactor}{4}
\providecommand{\BIBentryALTinterwordspacing}{\spaceskip=\fontdimen2\font plus
\BIBentryALTinterwordstretchfactor\fontdimen3\font minus \fontdimen4\font\relax}
\providecommand{\BIBforeignlanguage}[2]{{%
\expandafter\ifx\csname l@#1\endcsname\relax
\typeout{** WARNING: IEEEtran.bst: No hyphenation pattern has been}%
\typeout{** loaded for the language `#1'. Using the pattern for}%
\typeout{** the default language instead.}%
\else
\language=\csname l@#1\endcsname
\fi
#2}}
\providecommand{\BIBdecl}{\relax}
\BIBdecl

\bibitem{gyorfi2002distribution}
L.~Gy{\"o}rfi, M.~Kohler, A.~Krzy{\.z}ak, and H.~Walk, \emph{A Distribution-Free Theory of Nonparametric Regression}, ser. Springer Series in Statistics (SSS).\hskip 1em plus 0.5em minus 0.4em\relax Springer New York, NY, 2002, vol.~1.

\bibitem{Wasserman2006all}
L.~Wasserman, \emph{All of nonparametric statistics}, 1st~ed., ser. Springer Texts in Statistics.\hskip 1em plus 0.5em minus 0.4em\relax Springer New York, 2006.

\bibitem{Tsybakov2009Introduction}
A.~B. Tsybakov, \emph{Introduction to Nonparametric Estimation}, ser. Springer Series in Statistics (SSS).\hskip 1em plus 0.5em minus 0.4em\relax Springer New York, NY, 2009.

\bibitem{Drucker1991Double}
H.~Drucker and Y.~Le~Cun, ``Double backpropagation increasing generalization performance,'' in \emph{IJCNN-91-Seattle International Joint Conference on Neural Networks}, vol.~ii, 1991, pp. 145--150 vol.2.

\bibitem{Drucker1992Improving}
------, ``Improving generalization performance using double backpropagation,'' \emph{IEEE Transactions on Neural Networks}, vol.~3, no.~6, pp. 991--997, 1992.

\bibitem{Rifai2011Contractive}
S.~Rifai, P.~Vincent, X.~Muller, X.~Glorot, and Y.~Bengio, ``Contractive auto-encoders: {E}xplicit invariance during feature extraction,'' in \emph{Proceedings of the 28th International Conference on Machine Learning}, ser. Proceedings of Machine Learning Research.\hskip 1em plus 0.5em minus 0.4em\relax Bellevue Washington USA: PMLR, 2011.

\bibitem{Nagarajan2017Gradient}
V.~Nagarajan and J.~Z. Kolter, ``Gradient descent gan optimization is locally stable,'' in \emph{Advances in Neural Information Processing Systems}, I.~Guyon, U.~V. Luxburg, S.~Bengio, H.~Wallach, R.~Fergus, S.~Vishwanathan, and R.~Garnett, Eds., vol.~30.\hskip 1em plus 0.5em minus 0.4em\relax Curran Associates, Inc., 2017.

\bibitem{Roth2017Stabilizing}
K.~Roth, A.~Lucchi, S.~Nowozin, and T.~Hofmann, ``Stabilizing training of generative adversarial networks through regularization,'' in \emph{Advances in Neural Information Processing Systems}, I.~Guyon, U.~V. Luxburg, S.~Bengio, H.~Wallach, R.~Fergus, S.~Vishwanathan, and R.~Garnett, Eds., vol.~30.\hskip 1em plus 0.5em minus 0.4em\relax Curran Associates, Inc., 2017.

\bibitem{Gulrajani2017Improved}
I.~Gulrajani, F.~Ahmed, M.~Arjovsky, V.~Dumoulin, and A.~C. Courville, ``Improved training of {W}asserstein {GAN}s,'' in \emph{Advances in Neural Information Processing Systems}, I.~Guyon, U.~V. Luxburg, S.~Bengio, H.~Wallach, R.~Fergus, S.~Vishwanathan, and R.~Garnett, Eds., vol.~30.\hskip 1em plus 0.5em minus 0.4em\relax Curran Associates, Inc., 2017.

\bibitem{Gao2022Deep}
Y.~Gao, J.~Huang, Y.~Jiao, J.~Liu, X.~Lu, and Z.~Yang, ``Deep generative learning via {E}uler particle transport,'' in \emph{Proceedings of the 2nd Mathematical and Scientific Machine Learning Conference}, ser. Proceedings of Machine Learning Research, J.~Bruna, J.~Hesthaven, and L.~Zdeborova, Eds., vol. 145.\hskip 1em plus 0.5em minus 0.4em\relax PMLR, 16--19 Aug 2022, pp. 336--368.

\bibitem{Lyu2015Unified}
C.~Lyu, K.~Huang, and H.-N. Liang, ``A unified gradient regularization family for adversarial examples,'' in \emph{Proceedings of the 2015 IEEE International Conference on Data Mining (ICDM)}.\hskip 1em plus 0.5em minus 0.4em\relax USA: IEEE Computer Society, 2015, pp. 301--309.

\bibitem{Hein2017Formal}
M.~Hein and M.~Andriushchenko, ``Formal guarantees on the robustness of a classifier against adversarial manipulation,'' in \emph{Advances in Neural Information Processing Systems}, I.~Guyon, U.~V. Luxburg, S.~Bengio, H.~Wallach, R.~Fergus, S.~Vishwanathan, and R.~Garnett, Eds., vol.~30.\hskip 1em plus 0.5em minus 0.4em\relax Curran Associates, Inc., 2017.

\bibitem{Ororbia2017Unifying}
A.~G. Ororbia~II, D.~Kifer, and C.~L. Giles, ``Unifying adversarial training algorithms with data gradient regularization,'' \emph{Neural Computation}, vol.~29, no.~4, pp. 867--887, 2017.

\bibitem{novak2018sensitivity}
R.~Novak, Y.~Bahri, D.~A. Abolafia, J.~Pennington, and J.~Sohl-Dickstein, ``Sensitivity and generalization in neural networks: An empirical study,'' in \emph{International Conference on Learning Representations}, 2018.

\bibitem{Jakubovitz2018Improving}
D.~Jakubovitz and R.~Giryes, ``Improving {DNN} robustness to adversarial attacks using {Jacobian} regularization,'' in \emph{Computer Vision -- ECCV 2018}, V.~Ferrari, M.~Hebert, C.~Sminchisescu, and Y.~Weiss, Eds.\hskip 1em plus 0.5em minus 0.4em\relax Springer International Publishing, 2018, pp. 525--541.

\bibitem{Ross2018Improving}
A.~Ross and F.~Doshi-Velez, ``Improving the adversarial robustness and interpretability of deep neural networks by regularizing their input gradients,'' in \emph{Thirty-Second AAAI Conference on Artificial Intelligence}, ser. Proceedings of the AAAI Conference on Artificial Intelligence.\hskip 1em plus 0.5em minus 0.4em\relax AAAI Press, Palo Alto, California USA, 2018.

\bibitem{Shephard1981Cost}
R.~W. Shephard, \emph{Cost and Production Functions}, ser. Lecture Notes in Economics and Mathematical Systems.\hskip 1em plus 0.5em minus 0.4em\relax Springer Berlin, Heidelberg, 1981, vol. 194.

\bibitem{Rondonotti2007SiZer}
V.~Rondonotti, J.~S. Marron, and C.~Park, ``Sizer for time series: {A} new approach to the analysis of trends,'' \emph{Electronic Journal of Statistics}, vol.~1, no. none, pp. 268 -- 289, 2007.

\bibitem{Ramsay2002Applied}
J.~O. Ramsay and B.~W. Silverman, \emph{Applied Functional Data Analysis: Methods and Case Studies}, ser. Springer Series in Statistics (SSS).\hskip 1em plus 0.5em minus 0.4em\relax Springer New York, NY, 2002.

\bibitem{Banerjee2003Directional}
S.~Banerjee, A.~E. Gelfand, and C.~F. Sirmans, ``Directional rates of change under spatial process models,'' \emph{Journal of the American Statistical Association}, vol.~98, no. 464, pp. 946--954, 2003.

\bibitem{Muller2010additive}
H.-G. M{\"u}ller and F.~Yao, ``Additive modelling of functional gradients,'' \emph{Biometrika}, vol.~97, no.~4, pp. 791--805, 11 2010.

\bibitem{Dai2018Derivative}
X.~Dai, H.-G. M{\"u}ller, and W.~Tao, ``Derivative principal component analysis for representing the time dynamics of longitudinal and functional data,'' \emph{Statistica Sinica}, vol.~28, no.~3, pp. 1583--1609, 2018.

\bibitem{Rosasco2010regularization}
L.~Rosasco, M.~Santoro, S.~Mosci, A.~Verri, and S.~Villa, ``A regularization approach to nonlinear variable selection,'' in \emph{Proceedings of the Thirteenth International Conference on Artificial Intelligence and Statistics}, ser. Proceedings of Machine Learning Research, Y.~W. Teh and M.~Titterington, Eds., vol.~9.\hskip 1em plus 0.5em minus 0.4em\relax Chia Laguna Resort, Sardinia, Italy: PMLR, 13--15 May 2010, pp. 653--660.

\bibitem{Mosci2012Is}
S.~Mosci, L.~Rosasco, M.~Santoro, A.~Verri, and S.~Villa, ``Is there sparsity beyond additive models?'' \emph{IFAC Proceedings Volumes}, vol.~45, no.~16, pp. 971--976, 2012, 16th IFAC Symposium on System Identification.

\bibitem{Rosasco2013Nonparametric}
L.~Rosasco, S.~Villa, S.~Mosci, M.~Santoro, Aless, and ro~Verri, ``Nonparametric sparsity and regularization,'' \emph{Journal of Machine Learning Research}, vol.~14, no.~52, pp. 1665--1714, 2013.

\bibitem{Hu2018new}
Q.~Hu, S.~Shu, and J.~Zou, ``A new variational approach for inverse source problems,'' \emph{Numerical Mathematics: Theory, Methods and Applications}, vol.~12, no.~2, pp. 331--347, 2018.

\bibitem{Brabanter2013Derivative}
K.~De~Brabanter, J.~De~Brabanter, B.~De~Moor, and I.~Gijbels, ``Derivative estimation with local polynomial fitting,'' \emph{Journal of Machine Learning Research}, vol.~14, no.~1, pp. 281--301, 2013.

\bibitem{Nancy2000Penalized}
N.~E. Heckman and J.~O. Ramsay, ``Penalized regression with model-based penalties,'' \emph{Canadian Journal of Statistics}, vol.~28, no.~2, pp. 241--258, 2000.

\bibitem{Liu2023Estimation}
Z.~Liu and M.~Li, ``On the estimation of derivatives using plug-in kernel ridge regression estimators,'' \emph{Journal of Machine Learning Research}, vol.~24, no. 266, pp. 1--37, 2023.

\bibitem{Muller1987Bandwidth}
H.-G. M{\"u}ller, U.~Stadtm{\"u}ller, and T.~Schmitt, ``Bandwidth choice and confidence intervals for derivatives of noisy data,'' \emph{Biometrika}, vol.~74, no.~4, pp. 743--749, 1987.

\bibitem{Bellman1961Adaptive}
R.~E. Bellman, \emph{Adaptive Control Processes: A Guided Tour}.\hskip 1em plus 0.5em minus 0.4em\relax Princeton: Princeton University Press, 1961.

\bibitem{Goodfellow2016Deep}
I.~Goodfellow, Y.~Bengio, and A.~Courville, \emph{Deep Learning}.\hskip 1em plus 0.5em minus 0.4em\relax MIT Press, 2016.

\bibitem{Liu2023EstimationHypothesis}
R.~Liu, K.~Li, and M.~Li, ``Estimation and hypothesis testing of derivatives in smoothing spline {ANOVA} models,'' 2023.

\bibitem{bauer2019deep}
B.~Bauer and M.~Kohler, ``On deep learning as a remedy for the curse of dimensionality in nonparametric regression,'' \emph{The Annals of Statistics}, vol.~47, no.~4, pp. 2261--2285, 2019.

\bibitem{nakada2020Adaptive}
R.~Nakada and M.~Imaizumi, ``Adaptive approximation and generalization of deep neural network with intrinsic dimensionality,'' \emph{Journal of Machine Learning Research}, vol.~21, no. 174, pp. 1--38, 2020.

\bibitem{schmidt2020nonparametric}
J.~Schmidt-Hieber, ``Nonparametric regression using deep neural networks with {ReLU} activation function,'' \emph{The Annals of Statistics}, vol.~48, no.~4, pp. 1875--1897, 2020.

\bibitem{kohler2021rate}
M.~Kohler and S.~Langer, ``On the rate of convergence of fully connected deep neural network regression estimates,'' \emph{The Annals of Statistics}, vol.~49, no.~4, pp. 2231--2249, 2021.

\bibitem{Farrell2021Deep}
M.~H. Farrell, T.~Liang, and S.~Misra, ``Deep neural networks for estimation and inference,'' \emph{Econometrica}, vol.~89, no.~1, pp. 181--213, 2021.

\bibitem{Kohler2022Estimation}
M.~Kohler, A.~Krzy{\.z}ak, and S.~Langer, ``Estimation of a function of low local dimensionality by deep neural networks,'' \emph{IEEE transactions on information theory}, vol.~68, no.~6, pp. 4032--4042, 2022.

\bibitem{Jiao2023deep}
Y.~Jiao, G.~Shen, Y.~Lin, and J.~Huang, ``Deep nonparametric regression on approximate manifolds: {N}onasymptotic error bounds with polynomial prefactors,'' \emph{The Annals of Statistics}, vol.~51, no.~2, pp. 691 -- 716, 2023.

\bibitem{Wahba1990Spline}
G.~Wahba, \emph{Spline Models for Observational Data}, ser. CBMS-NSF Regional Conference Series in Applied Mathematics.\hskip 1em plus 0.5em minus 0.4em\relax Society for Industrial and Applied Mathematics (SIAM), 1990.

\bibitem{Kohler2001Nonparametric}
M.~Kohler and A.~Krzy{\.z}ak, ``Nonparametric regression estimation using penalized least squares,'' \emph{IEEE Transactions on Information Theory}, vol.~47, no.~7, pp. 3054--3058, 2001.

\bibitem{Kohler2002Application}
M.~Kohler, A.~Krzy{\.z}ak, and D.~Sch{\"a}fer, ``Application of structural risk minimization to multivariate smoothing spline regression estimates,'' \emph{Bernoulli}, vol.~8, no.~4, pp. 475 -- 489, 2002.

\bibitem{Stone1985Additive}
C.~J. Stone, ``Additive regression and other nonparametric models,'' \emph{The Annals of Statistics}, vol.~13, no.~2, 1985.

\bibitem{DeVore1993Constructive}
R.~A. DeVore and G.~G. Lorentz, \emph{Constructive Approximation}, ser. Grundlehren der mathematischen Wissenschaften.\hskip 1em plus 0.5em minus 0.4em\relax Springer Berlin, Heidelberg, 1993, vol. 303.

\bibitem{E2018deep}
W.~E and B.~Yu, ``The deep {Ritz} method: {A} deep learning-based numerical algorithm for solving variational problems,'' \emph{Communications in Mathematics and Statistics}, vol.~6, pp. 1--12, 2018.

\bibitem{Florens1996Sobolev}
J.-P. Florens, M.~Ivaldi, and S.~Larribeau, ``Sobolev estimation of approximate regressions,'' \emph{Econometric Theory}, vol.~12, no.~5, pp. 753--772, 1996.

\bibitem{Hall2007nonparametric}
P.~Hall and A.~Yatchew, ``Nonparametric estimation when data on derivatives are available,'' \emph{The Annals of Statistics}, vol.~35, no.~1, pp. 300--323, 2007.

\bibitem{Hall2010Nonparametric}
------, ``Nonparametric least squares estimation in derivative families,'' \emph{Journal of Econometrics}, vol. 157, no.~2, pp. 362--374, 2010.

\bibitem{Newell2007comparative}
J.~Newell and J.~Einbeck, ``A comparative study of nonparametric derivative estimators,'' in \emph{22nd International Workshop on Statistical Modelling}, ser. Proceedings of the IWSM, Barcelona, Spain, July 2007, pp. 453--456.

\bibitem{Liu2018Derivative}
Y.~Liu and K.~De~Brabanter, ``Derivative estimation in random design,'' in \emph{Advances in Neural Information Processing Systems}, S.~Bengio, H.~Wallach, H.~Larochelle, K.~Grauman, N.~Cesa-Bianchi, and R.~Garnett, Eds., vol.~31.\hskip 1em plus 0.5em minus 0.4em\relax Curran Associates, Inc., 2018.

\bibitem{Liu2020Smoothed}
Y.~Liu and K.~D. Brabanter, ``Smoothed nonparametric derivative estimation using weighted difference quotients,'' \emph{Journal of Machine Learning Research}, vol.~21, no.~65, pp. 1--45, 2020.

\bibitem{Epperson1987Runge}
J.~F. Epperson, ``On the {Runge} example,'' \emph{The American Mathematical Monthly}, vol.~94, no.~4, pp. 329--341, 1987.

\bibitem{Fan1996local}
J.~Fan and I.~Gijbels, \emph{Local Polynomial Modelling and Its Applications}, 1st~ed., ser. Monographs on Statistics and Applied Probability.\hskip 1em plus 0.5em minus 0.4em\relax Chapman and Hall/CRC, London, 1996.

\bibitem{Masry1996Multivariate1}
E.~Masry, ``Multivariate local polynomial regression for time series: {U}niform strong consistency and rates,'' \emph{Journal of Time Series Analysis}, vol.~17, no.~6, pp. 571--599, 1996.

\bibitem{Masry1996Multivariate2}
------, ``Multivariate regression estimation local polynomial fitting for time series,'' \emph{Stochastic Processes and their Applications}, vol.~65, no.~1, pp. 81--101, 1996.

\bibitem{Amiri2018Regression}
A.~Amiri and B.~Thiam, ``Regression estimation by local polynomial fitting for multivariate data streams,'' \emph{Statistical Papers}, vol.~59, pp. 813--843, 2018.

\bibitem{Green1993Nonparametric}
P.~Green and B.~W. Silverman, \emph{Nonparametric Regression and Generalized Linear Models: A Roughness Penalty Approach}, 1st~ed., ser. Monographs on Statistics and Applied Probability.\hskip 1em plus 0.5em minus 0.4em\relax Chapman and Hall/CRC, New York, 1993.

\bibitem{Eubank1999Nonparametric}
R.~L. Eubank, \emph{Nonparametric Regression and Spline Smoothing}, 2nd~ed.\hskip 1em plus 0.5em minus 0.4em\relax CRC Press, Boca Raton, 1999.

\bibitem{Zhou2000Derivative}
S.~Zhou and D.~A. Wolfe, ``On derivative estimation in spline regression,'' \emph{Statistica Sinica}, 2000.

\bibitem{Tao2022analysis}
T.~Tao, \emph{Analysis II}, 4th~ed., ser. Texts and Readings in Mathematics (TRIM).\hskip 1em plus 0.5em minus 0.4em\relax Springer Singapore, 2022, vol.~38.

\bibitem{Zhu2009Introduction}
X.~Zhu and A.~B. Goldberg, \emph{Introduction to semi-supervised learning}, 1st~ed., ser. Synthesis Lectures on Artificial Intelligence and Machine Learning (SLAIML).\hskip 1em plus 0.5em minus 0.4em\relax Springer Cham, 2009.

\bibitem{Van2020survey}
J.~E. Van~Engelen and H.~H. Hoos, ``A survey on semi-supervised learning,'' \emph{Machine learning}, vol. 109, no.~2, pp. 373--440, 2020.

\bibitem{Zhang2000Value}
T.~Zhang, ``The value of unlabeled data for classification problems,'' in \emph{Proceedings of the 17th International Conference on Machine Learning}, 2000.

\bibitem{Belkin2006manifold}
M.~Belkin, P.~Niyogi, and V.~Sindhwani, ``Manifold regularization: {A} geometric framework for learning from labeled and unlabeled examples,'' \emph{Journal of Machine Learning Research}, vol.~7, no.~85, pp. 2399--2434, 2006.

\bibitem{Wasserman2007Statistical}
L.~Wasserman and J.~Lafferty, ``Statistical analysis of semi-supervised regression,'' in \emph{Advances in Neural Information Processing Systems}, J.~Platt, D.~Koller, Y.~Singer, and S.~Roweis, Eds., vol.~20.\hskip 1em plus 0.5em minus 0.4em\relax Curran Associates, Inc., 2007.

\bibitem{Azriel2022Semi}
D.~Azriel, L.~D. Brown, M.~Sklar, R.~Berk, A.~Buja, and L.~Zhao, ``Semi-supervised linear regression,'' \emph{Journal of the American Statistical Association}, vol. 117, no. 540, pp. 2238--2251, 2022.

\bibitem{Livne2022Improved}
I.~Livne, D.~Azriel, and Y.~Goldberg, ``Improved estimators for semi-supervised high-dimensional regression model,'' \emph{Electronic Journal of Statistics}, vol.~16, no.~2, pp. 5437 -- 5487, 2022.

\bibitem{Song2023General}
S.~Song, Y.~Lin, and Y.~Zhou, ``A general {M}-estimation theory in semi-supervised framework,'' \emph{Journal of the American Statistical Association}, vol.~0, no.~0, pp. 1--11, 2023.

\bibitem{Deng2023Optimal}
S.~Deng, Y.~Ning, J.~Zhao, and H.~Zhang, ``Optimal and safe estimation for high-dimensional semi-supervised learning,'' \emph{Journal of the American Statistical Association}, vol.~0, no.~ja, pp. 1--23, 2023.

\bibitem{Brenner2008Mathematical}
S.~C. Brenner and L.~R. Scott, \emph{The Mathematical Theory of Finite Element Methods}, 3rd~ed., ser. Texts in Applied Mathematics (TAM).\hskip 1em plus 0.5em minus 0.4em\relax Springer New York, NY, 2008.

\bibitem{evans2010partial}
L.~C. Evans, \emph{Partial differential equations}, 2nd~ed., ser. Graduate Studies in Mathematics.\hskip 1em plus 0.5em minus 0.4em\relax American Mathematical Society (AMS), 2010, vol.~19.

\bibitem{yarotsky2018optimal}
D.~Yarotsky, ``Optimal approximation of continuous functions by very deep {ReLU} networks,'' in \emph{Conference on learning theory}.\hskip 1em plus 0.5em minus 0.4em\relax PMLR, 2018, pp. 639--649.

\bibitem{yarotsky2020phase}
D.~Yarotsky and A.~Zhevnerchuk, ``The phase diagram of approximation rates for deep neural networks,'' \emph{Advances in neural information processing systems}, vol.~33, pp. 13\,005--13\,015, 2020.

\bibitem{shen2019nonlinear}
Z.~Shen, H.~Yang, and S.~Zhang, ``Nonlinear approximation via compositions,'' \emph{Neural Networks}, vol. 119, pp. 74--84, 2019.

\bibitem{shen2020deep}
Z.~Shen, ``Deep network approximation characterized by number of neurons,'' \emph{Communications in Computational Physics}, vol.~28, no.~5, pp. 1768--1811, 2020.

\bibitem{lu2021deep}
J.~Lu, Z.~Shen, H.~Yang, and S.~Zhang, ``Deep network approximation for smooth functions,'' \emph{SIAM Journal on Mathematical Analysis}, vol.~53, no.~5, pp. 5465--5506, 2021.

\bibitem{petersen2018optimal}
P.~Petersen and F.~Voigtlaender, ``Optimal approximation of piecewise smooth functions using deep {ReLU} neural networks,'' \emph{Neural Networks}, vol. 108, pp. 296--330, 2018.

\bibitem{Li2019Better}
B.~Li, S.~Tang, and H.~Yu, ``Better approximations of high dimensional smooth functions by deep neural networks with rectified power units,'' \emph{Communications in Computational Physics}, vol.~27, no.~2, pp. 379--411, 2019.

\bibitem{Li2020PowerNet}
------, ``{PowerNet}: {E}fficient representations of polynomials and smooth functions by deep neural networks with rectified power units,'' \emph{Journal of Mathematical Study}, vol.~53, no.~2, pp. 159--191, 2020.

\bibitem{Duan2022Convergence}
C.~Duan, Y.~Jiao, Y.~Lai, D.~Li, X.~Lu, and J.~Z. Yang, ``Convergence rate analysis for deep {R}itz method,'' \emph{Communications in Computational Physics}, vol.~31, no.~4, pp. 1020--1048, 2022.

\bibitem{shen2023differentiable}
G.~Shen, Y.~Jiao, Y.~Lin, and J.~Huang, ``Differentiable neural networks with {RePU} activation: with applications to score estimation and isotonic regression,'' 2023.

\bibitem{Huang2022Error}
J.~Huang, Y.~Jiao, Z.~Li, S.~Liu, Y.~Wang, and Y.~Yang, ``An error analysis of generative adversarial networks for learning distributions,'' \emph{Journal of Machine Learning Research}, vol.~23, no. 116, pp. 1--43, 2022.

\bibitem{Jiao2023Approximation}
Y.~Jiao, Y.~Wang, and Y.~Yang, ``Approximation bounds for norm constrained neural networks with applications to regression and {GANs},'' \emph{Applied and Computational Harmonic Analysis}, vol.~65, pp. 249--278, 2023.

\bibitem{Stone1982optimal}
C.~J. Stone, ``Optimal global rates of convergence for nonparametric regression,'' \emph{The Annals of Statistics}, vol.~10, no.~4, pp. 1040 -- 1053, 1982.

\bibitem{Yang1999Information}
Y.~Yang and A.~Barron, ``Information-theoretic determination of minimax rates of convergence,'' \emph{The Annals of Statistics}, vol.~27, no.~5, pp. 1564 -- 1599, 1999.

\bibitem{Hashem1997Optimal}
S.~Hashem, ``Optimal linear combinations of neural networks,'' \emph{Neural Networks}, vol.~10, no.~4, pp. 599--614, 1997.

\bibitem{Guhring2020Error}
I.~G{\"u}hring, G.~Kutyniok, and P.~Petersen, ``Error bounds for approximations with deep {ReLU} neural networks in $w^{s,p}$ norms,'' \emph{Analysis and Applications}, vol.~18, no.~05, pp. 803--859, 2020.

\bibitem{Hinze2009Optimization}
M.~Hinze, R.~Pinnau, M.~Ulbrich, and S.~Ulbrich, \emph{Optimization with PDE constraints}, 1st~ed., ser. Mathematical Modelling: Theory and Applications.\hskip 1em plus 0.5em minus 0.4em\relax Springer Dordrecht, 2009.

\bibitem{troltzsch2010optimal}
F.~Tr{\"o}ltzsch, \emph{Optimal control of partial differential equations: Theory, methods and applications}, ser. Graduate Studies in Mathematics.\hskip 1em plus 0.5em minus 0.4em\relax American Mathematical Society (AMS), 2010, vol. 112.

\bibitem{Wainwright2019High}
M.~J. Wainwright, \emph{High-dimensional statistics: A non-asymptotic viewpoint}.\hskip 1em plus 0.5em minus 0.4em\relax Cambridge university press, 2019, vol.~48.

\bibitem{Anthony1999neural}
M.~Anthony, P.~L. Bartlett, P.~L. Bartlett \emph{et~al.}, \emph{Neural network learning: Theoretical foundations}.\hskip 1em plus 0.5em minus 0.4em\relax Cambridge university press Cambridge, 1999, vol.~9.

\bibitem{Bartlett2019nearly}
P.~L. Bartlett, N.~Harvey, C.~Liaw, and A.~Mehrabian, ``Nearly-tight {VC}-dimension and pseudodimension bounds for piecewise linear neural networks,'' \emph{Journal of Machine Learning Research}, vol.~20, no.~63, pp. 1--17, 2019.

\bibitem{Bagby2002Multivariate}
T.~Bagby, L.~P. Bos, and N.~Levenberg, ``Multivariate simultaneous approximation,'' \emph{Constructive Approximation}, vol.~18, pp. 569--577, 2002.

\bibitem{Grisvard2011Elliptic}
P.~Grisvard, \emph{Elliptic Problems in Nonsmooth Domains}.\hskip 1em plus 0.5em minus 0.4em\relax Society for Industrial and Applied Mathematics (SIAM), 2011.

\bibitem{Liang2015Learning}
T.~Liang, A.~Rakhlin, and K.~Sridharan, ``Learning with square loss: {L}ocalization through offset {R}ademacher complexity,'' in \emph{Proceedings of The 28th Conference on Learning Theory}, ser. Proceedings of Machine Learning Research, vol.~40.\hskip 1em plus 0.5em minus 0.4em\relax Paris, France: PMLR, 03--06 Jul 2015, pp. 1260--1285.

\bibitem{mohri2018foundations}
M.~Mohri, A.~Rostamizadeh, and A.~Talwalkar, \emph{Foundations of machine learning}, 2nd~ed.\hskip 1em plus 0.5em minus 0.4em\relax MIT press, 2018.

\bibitem{Bartlett2002Rademacher}
P.~L. Bartlett and S.~Mendelson, ``Rademacher and {Gaussian} complexities: {R}isk bounds and structural results,'' \emph{Journal of Machine Learning Research}, vol.~3, pp. 463--482, 2002.

\end{thebibliography}

\begin{IEEEbiographynophoto}{Zhao Ding}
received the B.Sc. degree in Mathematics and Applied Mathematics from Wuhan University, Wuhan, China, in 2020, where he is also currently pursuing the Ph.D. degree with School of Mathematics and Statistics.

His research interests include the theory of deep learning and generative learning based on diffusion models.
\end{IEEEbiographynophoto}

\begin{IEEEbiographynophoto}{Chenguang Duan} received the B.S. degree in information and computational science from Wuhan University, Wuhan, China, in 2020, where he is currently pursuing the Ph.D. degree with the School of Mathematics and Statistics. He has authored research articles including \emph{Journal of Computational and Applied Mathematics}, and \emph{Communications in Computational Physics}. His research interests lie in deep learning theory, generative models, and scientific maching learning.
\end{IEEEbiographynophoto}

\begin{IEEEbiographynophoto}{Yuling Jiao} received the B.Sc. degree in applied mathematics from Shangqiu Normal University, Shangqiu, China, in 2008, and the Ph.D. degree in applied mathematics from Wuhan University, Wuhan, China, in 2014.
 
He is currently a Full Professor with the School of Artificial Intelligence, Wuhan University. His research works have been published in journals and conferences including \emph{SIAM J. Math. Anal.}, \emph{SIAM J.Control Optim.}, \emph{SIAM J. Numer. Anal.}, \emph{SIAM J. Sci. Comput.}, \emph{SIAM J. Math. Data. Sci.}, \emph{Appl. Comput. Harmon. Anal.} \emph{J. Mach. Learn. Res.}, \emph{IEEE Trans. Inf. Theory}, \emph{Ann. Stat.}, \emph{J. Amer. Statist. Assoc.}, \emph{Statist. Sci.}, \emph{Inverse Probl.}, \emph{IEEE Trans. Signal Process.}, \emph{Nat. Commun.}, \emph{ICML} and \emph{NeurIPS}. His research insterests include machine learning and scientific computing.
\end{IEEEbiographynophoto}

\begin{IEEEbiographynophoto}{Jerry Zhijian Yang} received the B.S. and M.S. degrees from Peking University, Beijing, China, in 1999 and 2001, respectively, and the Ph.D. degree in applied and computational mathematics from Princeton University, Princeton, NJ, USA, in 2006. 

He completed his post-doctoral research at the California Institute of Technology, Pasadena, CA, USA, in 2008. He is currently a Full Professor with the School of Mathematics and Statistics as well as Wuhan Institute for Math \& Al, Wuhan University, Wuhan, China. He has authored or coauthored research articles, including \emph{Numerische Mathematik}, \emph{Physical Review B}, \emph{SIAM Multiscale Modeling and Simulation}, \emph{SIAM Journal on Control and Optimization}, \emph{Journal of Computational Physics}, \emph{The Journal of Chemical Physics}, \emph{International Journal for Numerical Methods in Engineering}, and \emph{Communications in Computational Physics}. His research interests include multiscale modeling and simulation, machine learning, and scientific computing.
\end{IEEEbiographynophoto}

\end{document}